\definecolor{applegreen}{rgb}{0.55, 0.71, 0.0}
\definecolor{darkpastelgreen}{rgb}{0.01, 0.75, 0.24}
\def\EMAIL#1{\href{mailto:#1}{#1}}
\begin{document}


\RUNAUTHOR{Guo, Hu, Xu and Zhang}

\RUNTITLE{A General Framework for Learning Mean-Field Games}

\TITLE{A General Framework for Learning Mean-Field Games}

\ARTICLEAUTHORS{%
\AUTHOR{Xin Guo*}
\AFF{University of California, Berkeley, IEOR, \EMAIL{xinguo@berkeley.edu}\\
Amazon.com, \EMAIL{xnguo@amazon.com}}
\AUTHOR{Anran Hu}
\AFF{University of California, Berkeley, IEOR, \EMAIL{anran\_hu@berkeley.edu}}
\AUTHOR{Renyuan Xu}
\AFF{University of Southern California, Industrial Systems and Engineering, \EMAIL{renyuanx@usc.edu}\\
University of Oxford, Mathematical Institute, \EMAIL{xur@maths.ox.ac.uk}}
\AUTHOR{Junzi Zhang\footnote{Work done prior to joining or outside of Amazon.}}
\AFF{Amazon.com, \EMAIL{junziz@amazon.com}}
} 

\ABSTRACT{%
This paper presents a general mean-field game (GMFG) framework for simultaneous learning and decision-making in stochastic games with a large population. It first establishes the existence of a unique Nash Equilibrium  to this GMFG, and demonstrates that naively combining 
{\color{black}reinforcement learning} with the fixed-point approach in classical MFGs yields unstable algorithms. It then proposes value-based and policy-based {\color{black}reinforcement learning} algorithms (GMF-V and GMF-P, respectively) with {\color{black}smoothed} policies, with analysis of their convergence properties  and computational  complexities. 
Experiments  on an equilibrium product pricing problem  demonstrate that {\color{black}GMF-V-Q and GMF-P-TRPO, two specific instantiations of GMF-V and GMF-P, respectively, with Q-learning and TRPO,} are both efficient and robust {\color{black}in the GMFG setting}. Moreover, their performance is superior in {\color{black}convergence speed, accuracy, and stability when compared with existing algorithms for multi-agent reinforcement learning in the $N$-player setting}.
}%



\maketitle

%



\section{Introduction}
\label{introduction}

\paragraph{Motivating example.}

{\color{black}
This paper is motivated by the following Ad auction problem for an advertiser.
An Ad auction is a stochastic game on an Ad exchange platform among a large number of players, the
advertisers. In between the time a web user requests a page and the time the page is displayed, usually
within a millisecond, a Vickrey-type of second-best-price auction is run to incentivize interested
advertisers to bid for an Ad slot to display advertisement. Each advertiser has limited information
before each bid: first, her own valuation for a slot depends on some random conversion of clicks for
the item; secondly, she, should she win the bid, only knows the reward after the user’s activities on
the website are finished. In addition, she has a budget constraint in this repeated auction.

The question is, how should she bid in this online sequential repeated game when there is a large
population of bidders competing on the Ad platform, with random conversions of
clicks and rewards?
}


Besides Ad auctions, there are many other real-world problems involving a large number of players and {\color{black}uncertain} systems.  Examples include massive multi-player online role-playing games \citet{MMORPG}, high frequency tradings \citet{algo_trade_mfg}, and the sharing economy \citet{sharing_eco}.

\paragraph{Our work.}

Motivated by these problems, we consider a general framework  of  simultaneous learning and decision-making in stochastic games with a large population. We formulate a general mean-field-game (GMFG) with incorporation of action distributions and (randomized) relaxed policies.  This {\color{black}general} framework can also be viewed as a generalized version of MFGs of {\color{black}extended} McKean-Vlasov type \citet{MKV}, which is a different paradigm from the classical MFG. It is also beyond the scope of the existing {\color{black}reinforcement learning} {\color{black}(RL)} framework for Markov decision processes (MDP), as MDP is technically equivalent to a single player stochastic game.

On the theory front, this {\color{black}general} framework differs from the existing MFGs. We establish under appropriate technical conditions  the existence and uniqueness of the 
Nash equilibrium (NE) to this GMFG.  On the computational front, we show  that naively combining 
{\color{black}reinforcement learning} with the three-step fixed-point approach in classical MFGs yields unstable algorithms. We then propose {\color{black} both value based and policy based reinforcement learning algorithms with smoothed policies} {\color{black}(GMF-V and GMF-P, respectively)}, establish the convergence property and analyze the computational complexity (see Section \ref{sec:proof} for all proof details).
Finally, we apply {\color{black}GMF-V-Q and GMF-P-TRPO, which are two specific instantiations of GMF-V and GMF-P, respectively, with Q-learning and TRPO,} to an {\color{black}equilibrium product pricing} problem\footnote{\color{black}The numerical experiments on the application of GMF-V-Q to the motivating Ad auction problem can be found in the conference version of our paper \citet{guo2019learning}.}.
{\color{black}Both algorithms} have demonstrated to be
 efficient and robust {\color{black}in the GMFG setting}. 
 Their performance  is superior in terms of {\color{black}convergence speed, accuracy and stability}, when compared with existing algorithms for multi-agent reinforcement learning {\color{black}in the $N$-player setting}. 
{\color{black} Note that an earlier and preliminary version \citet{guo2019learning} has been published in NeurIPS. Nevertheless, the conference version focuses only on GMF-V-Q, whereas this paper provides a new meta framework for learning mean-field-game which combines (1) the three-step fixed point approach, (2) the smoothing techniques, and (3) the single-agent algorithms with sample complexity guarantees in the sub-routine. This general framework incorporates both value-based algorithms and policy-based algorithms. In addition, the policy-based RL algorithm (GMF-P-TRPO) in this paper is the first globally convergent policy-based algorithm for solving mean-field-games. Numerical results show that it achieves similar performance as the Q-learning based algorithm (GMF-V-Q) in \citet{guo2019learning}. }

\paragraph{Related works.} On learning large population games with mean-field approximations, \citet{YYTXZ2017} focuses on inverse reinforcement learning  for MFGs without decision making, {\color{black}with its extension in \citet{chen2021maximum} for agent-level inference;}  \citet{YLLZZW2018} studies an MARL problem with a {\color{black}first-order} mean-field approximation term modeling the interaction between one player and all the other finite players, {\color{black}which has been generalized to the setting with partially observable states in \citet{subramanian2020partially};} and \citet{KC2013} and \citet{YMMS2014} consider model-based adaptive learning for MFGs {\color{black}in specific models (\textit{e.g.}, linear-quadratic and oscillator games)}. {\color{black}More recently, \citet{Manymany} studies the local convergence of actor-critic algorithms on finite time horizon MFGs, and \citet{rl_mfg_local} proposes a policy-gradient based algorithm and analyzes the so-called local NE for reinforcement learning in infinite time horizon MFGs.}  For learning large population games without mean-field approximation, see \citet{MARL_literature2, MARL_literature1} and the references therein.  
In the specific topic of learning auctions with a large number of advertisers, \citet{CRZMWYG2017} and \citet{JSLGWZ2018} explore reinforcement learning techniques to search for social optimal solutions with real-word data, and \citet{IJS2011} uses MFGs to model the auction system with unknown conversion of clicks within a Bayesian framework.


{\color{black}However, none of these works consider the problem of simultaneous learning and decision-making in a general MFG framework. Neither do they establish the existence and uniqueness of  the {\color{black}(global)} NE, nor do they present  model-free learning algorithms with complexity analysis and  convergence to the NE.} Note that in principle, global results are harder to obtain compared to local results.

{\color{black}
Following the conference version \citet{guo2019learning} of the current paper, various efforts have been made to extend our reinforcement learning work {\color{black} in \citet{guo2019learning} to more general MFG settings}. 
{\color{black}These include linear-quadratic MFGs in both discrete-time setting \citet{fu2019actor,uz2020approximate,uz2020reinforcement} and in continuous-time setting \citet{guo2020entropy,wang2021global,delarue2021exploration}, MFGs with general continuous state and/or action spaces \citet{fitted-Q}, entropy regularized MFGs in discrete time \citet{anahtarci2020q,xie2020provable,xie2021learning,cui2021approximately} and in continuous time \citet{guo2020entropy}, and non-stationary MFGs \citet{mishra2020model}. In particular, \citet{cui2021approximately} interprets the softmax smoothing technique proposed in \citet{guo2019learning} from a smoothed equilibrium perspective. 
In addition, different frameworks based on monotonicity assumptions (instead of the contractivity assumption in \citet{guo2019learning}) have also been proposed, and fictitious play algorithms with policy and mean-field averaging \citet{elie2020convergence,perrin2020fictitious} and online mirror descent algorithms \citet{perolat2021scaling} have been proposed to solve MFGs under such assumptions. There are also some recent extensions to reinforcement learning of MFGs with strategic complementarity \citet{lee2021reinforcement} and multiple agent types \citet{ghosh2020model,subramanian2020multi}. These algorithms for reinforcement learning of MFGs have also been applied in economics \citet{angiuli2021reinforcement}, in finance \citet{de2021dealer}, in animal behavior simulation \citet{perrin2021mean}, and in concave utility reinforcement learning \citet{geist2021concave}.} 
In the meantime, the idea of simultaneous learning and decision making with mean-field interaction has been used for analyzing collaborative games with social optimal solution \citet{carmona2019linear, carmona2019model,gu2020mean, LYWK2019,wang2020breaking, pasztor2021efficient, gagrani2020thompson,cui2021discrete,angiuli2020unified}.
}

\paragraph{Notations.}
  {\color{black}  Let $(\mathcal{X},d_\mathcal{X})$ be a metric space and  $\mathcal{X}$ is equipped with the Borel $\sigma$-field $\mathcal{B}(\mathcal{X})$, meaning the $\sigma$-field generated by the open
sets of $\mathcal{X}$. Denote $\mathcal{P}(\mathcal{X})$ for the set of (Borel) probability measures
on $\mathcal{X}$. 
$\mathcal{W}_p$ denotes the Wasserstein distance of order $p$ such that
\begin{eqnarray*}
W_p(\mu, \mu')= \inf\left\{\biggl(\int_{\mathcal{X} \times \mathcal{X}}d^p_{\mathcal{X}}(x, x')\nu_{}(dx, dx')\biggl):  \nu_{} \in \mathcal{P}(\mathcal{X} \times \mathcal{X}) \mbox{ with marginals } \mu, \mu' \in \mathcal{P}(\mathcal{X})\right\}.
\end{eqnarray*}
$\mathcal{P}(\mathcal{X})$ is always equipped with $W_1(\mu, \mu')$. 
The Borel $\sigma$-field of $\mathcal{P}(\mathcal{X})$ is the $\sigma$-field induced by the evaluation $\mathcal{P}(\mathcal{X}) \ni \mu \mapsto \mu(C)$ for any {Borel set} $C \subset \mathcal{X}$. Note that the Borel $\sigma$-field of $\mathcal{P}(\mathcal{X})$ is generated by $W_1$. (See e.g. \citet{Villani2009} and \citet{Lacker2015}).

Given two measurable spaces $(\mathcal{Y}, \mathcal{B}(\mathcal{Y}))$ and $(\mathcal{X}, \mathcal{B}(\mathcal{X}))$ , we say a measure-valued function $f: \mathcal{Y} \to  \mathcal{P}(\mathcal{X})$ is measurable if  $\Lambda_C \circ f: \mathcal{Y} \to [0, 1]$ is measurable for any $C \in \mathcal{B}(\mathcal{X})$, where $\Lambda_C: \mathcal{P}(\mathcal{X}) \ni \mu \mapsto \mu(C) \in [0, 1]$.}

\section{Framework of General MFG (GMFG)}\label{n2mfg}

\subsection{Background: classical $N$-player Markovian game and MFG}\label{classical}
Let us first recall the classical $N$-player game. There are $N$ players in a game.   
 At each step $t$, the state of  player $i   \ \ (=1, 2, \cdots, N)$ is  $s^i_t\in\mathcal{S}\subseteq\mathbb{R}^d$ and she takes an action $a^i_t\in\mathcal{A}\subseteq\mathbb{R}^p$. Here $d, p$ are positive integers.  {\color{black} The state space $(\mathcal{S}, d_{\mathcal{S}})$ and the action space $(\mathcal{A}, d_{\mathcal{A}})$  are two compact  metric spaces, including the case of  $\mathcal{S}$ and $\mathcal{A}$  being finite.}
 Given the current state profile of $N$-players ${\bf s}_t=(s^1_t,\dots,s^N_t)\in\mathcal{S}^N$ and the action $a^i_t$,   player $i$ will receive a reward $r^i({\bf s}_t, a^i_t)$ {\color{black}sampled from a distribution $ R^i({\bf s}_t, a^i_t)$} and her state will change to $s^i_{t+1}$ according to a transition probability function $P^i({\bf s}_t, a^i_t)$. {\color{black}In particular, the probability transition $P^i:$ $\mathcal{S}^N \times  \mathcal{A}\rightarrow\mathcal{P}(\mathcal{S})$ and  the distribution of the reward function $R^i:$ $\mathcal{S}^N \times  \mathcal{A}$ $\to$ $\mathcal{P}([0,{\rm R}_{\max}])$ are both measurable functions with some constant ${\rm R}_{\max}>0$.}

 A Markovian game further restricts the admissible policy/control for player $i$ to be of the form
 $a^i_t\sim\pi^i_t({\bf s}_t)$ {\color{black}with $\pi^i_t$ measurable}. That is,  $\pi^i_t:\mathcal{S}^N\rightarrow \mathcal{P}(\mathcal{A})$  maps each state profile ${\bf s}\in\mathcal{S}^N$ to a randomized action.
The accumulated reward (a.k.a. the value function) for player $i$, given the initial state profile ${\bf s}$ and the policy profile sequence $\pmb{\pi} :=\{\pmb{\pi}_t\}_{t=0}^{\infty} $ with $\pmb{\pi}_t=(\pi^1_t,\dots,\pi^N_t)$, is then defined as
\begin{eqnarray}\label{game}
V^i({\bf s},\pmb{\pi}):=\mathbb{E}\left[\sum_{t=0}^{\infty}\gamma^t r^i({\bf s}_t,a^i_t)\Big| {\bf s}_0={\bf s}\right],
\end{eqnarray}
where $\gamma\in(0,1)$ is the discount factor,  $a^i_t\sim \pi^i_t({\bf s}^t)$, and $s^i_{t+1}\sim P^i({\bf s}_t, a_t^i)$.
The goal of each player is to maximize her value function over all admissible policy sequences {\color{black} such that \eqref{game} is finite}.   

In general, this type of stochastic $N$-player game is notoriously hard to analyze, especially when $N$  is large {\color{black}\citet{PR05}}. Mean field game (MFG),  pioneered by \citet{HMC2006} and \citet{LL2007} {\color{black}in the continuous settings and later developed  in \citet{MFG_n_conv, MFG_gomes, MFG_binact, MFG_discrete_time, MFG_discrete_time2} for discrete settings}, provides an ingenious and  tractable aggregation approach to approximate the otherwise challenging $N$-player stochastic games. 
The basic idea for an MFG goes as follows. Assume all players are  identical, indistinguishable and interchangeable, when $N\to \infty$, one can view the limit of other players' states ${\bf s}_t^{-i}=(s_t^1,\dots,s_t^{i-1},s_t^{i+1},\dots,s_t^N)$ as a population state distribution {\color{black}$\mu_t$ with $\mu_t(s):=\lim_{N \rightarrow \infty}\frac{\sum_{j=1, j\neq i}^N \textbf{I}_{s_t^j=s}}{N}$}.\footnote{{\color{black}Here the indicator function $\textbf{I}_{s_t^j=s}=1$ if $s_t^j=s$ and $0$ otherwise.}} Due to the homogeneity of the players, one can then focus on a single (representative) player.
{\color{black}At time $t$, after the representative player chooses her action $a_t$ according to some policy $\pi_t$, she will receive reward $r(s_t,a_t,\mu_t)$ and her state will evolve under a \textit{controlled stochastic dynamics} of a mean-field type $P(\cdot|s_t,a_t,\mu_t)$.
Here the policy $\pi_t$ depends on both the current state $s_t$ and the current population state distribution $\mu_t$ such that $\pi_t:\mathcal{S}\times \mathcal{P}(\mathcal{S})\rightarrow \mathcal{P}(\mathcal{A})$.
} Then, in mean-field limit, one may consider instead   the following optimization problem,
\[
\begin{array}{ll}
\text{maximize}_{\pmb{\pi}} & V(s,\pmb{\pi},\pmb{\mu}):=\mathbb{E}\left[\sum\limits_{t=0}^\infty \gamma^t r(s_t,a_t,\mu_t)|s_0=s\right]\\
\text{subject to} & s_{t+1}\sim P(s_t,a_t,\mu_t), \quad a_t\sim \pi_t(s_t,\mu_t),
\end{array}
\]
where $\pmb{\pi}:={\{\pi_t\}_{t=0}^{\infty}}$ denotes the policy sequence and $\pmb{\mu} := \{\mu_t\}_{t=0}^\infty$  the distribution flow.


\subsection{General MFG (GMFG)}\label{mfg-set-up}
In the classical MFG setting, {\color{black}the reward and the dynamic for each player are known}. They depend only on  the state of the player $s_t$,   the action of this particular player $a_t$, and  the population state distribution $\mu_t$. In contrast, in the motivating auction example, the reward and the dynamic  are {\color{black}unknown}; they  rely on the actions of {\it all} players,  as well as on $s_t$ and $\mu_t$.



We therefore define the following  general MFG (GMFG) framework.  At time $t$, after the representative player chooses her action $a_t$ according to some {\color{black}measurable} policy  $\pi:\mathcal{S}\times \mathcal{P}(\mathcal{S})\rightarrow \mathcal{P}(\mathcal{A})$, she will receive a {\color{black}(possibly random)} reward $r(s_t,a_t,\mathcal{L}_t)$ {\color{black}sampled from distribution $R(s_t,a_t,\mathcal{L}_t)$} and her state will evolve according to $P(\cdot|s_t,a_t,\mathcal{L}_t)$,  {\color{black} with $\mathcal{L}_t=\mathbb{P}_{s_t,a_t}\i\color{black}n \mathcal{P}(\mathcal{S}\times \mathcal{A})$  the joint distribution  of the state and the action, \textit{i.e.}, the \text{population state-action pair}. This joint distribution $\mathcal {L}_t$  has  marginal distributions $\alpha_t$ for the population action  and $\mu_t$ for the population state. Note the inclusion of $\alpha_t$ allows the reward and the dynamic to depend on all players' actions.} {\color{black} Here $P:\mathcal{S}\times \mathcal{A}\times \mathcal{P}(\mathcal{S}\times \mathcal{A}) \rightarrow \mathcal{P} (\mathcal{S})$ and $R: \mathcal{S}\times \mathcal{A}\times \mathcal{P}(\mathcal{S}\times \mathcal{A}) \rightarrow \mathcal{P} ([0,{\rm R_{max}}])$ are measurable functions with some constant ${\rm R_{max}}>0$.}
The objective of the player is to solve the following control problem:
\begin{equation}\label{mfg}
\begin{array}{ll}
\text{maximize}_{\pmb{\pi}} & V(s,\pmb{\pi},\pmb{\mathcal{L}}):=\mathbb{E}\left[\sum\limits_{t=0}^\infty \gamma^t r(s_t,a_t,{\color{black}\mathcal{L}_t})|s_0=s\right]\\
\text{subject to} & s_{t+1}\sim P(s_t,a_t,{\color{black}\mathcal{L}_t}),\quad a_t\sim \pi_t(s_t,\mu_t).
\end{array}\tag{GMFG}
\end{equation}
{\color{black}Here the  expectation in the objective function is always taken for all randomness in the system. In addition}, $\pmb{\mathcal{L}}:=\{\mathcal{L}_t\}_{t=0}^{\infty}$
and may be time dependent. That is, an infinite-time horizon MFG may have time-dependent NE solutions due to the mean information process in the MFG. This is fundamentally different from the theory of MDP where the optimal control, if exists uniquely, would be time independent in an infinite time horizon setting.

In this paper, we will analyze the existence of NE to GMFG. For ease of exposition, we will first focus on stationry NEs. Accordingly, for notational brevity, we abbreviate $\pmb{\pi}=\{\pi\}_{t=0}^{\infty}$ and $\pmb{\mathcal{L}}=\{\mathcal{L}\}_{t=0}^{\infty}$ as $\pi$ and $\mathcal{L}$, respectively. We will show in the end how this stationary constraint can be relaxed (\textit{cf}. Section \ref{stat_mfg_app}).
\begin{definition}[Stationary NE for GMFGs]\label{nash2_stat} 
In \eqref{mfg}, a player-population profile ($\pi^\star$, $\mathcal{L}^\star$)
  is called a stationary NE if 
\begin{enumerate}
    \item (Single player side) For any policy $\pi$ and any initial state $s\in \mathcal{S}$, 
\begin{equation}
V\left(s,\pi^\star,{\color{black}\mathcal{L}^\star}\right)\geq V\left(s,\pi,{\color{black}\mathcal{L}^\star}\right).
\end{equation}
\item  (Population side) $\mathbb{P}_{s_t,a_t}= {\mathcal{L}^{\star}}$ for all $t\geq 0$, where $\{s_t,a_t\}_{t=0}^{\infty}$ is the dynamics under the policy  $\pi^\star$ starting from $s_0 \sim \mu^{\star}$, with $a_t\sim\pi^\star(s_t,{\color{black}\mu^{\star}})$, $s_{t+1}\sim P(\cdot|s_t,a_t,{\color{black}\mathcal{L}^\star})$, and $\mu^{\star}$ being the population state marginal of $\mathcal{L}^\star$.
\end{enumerate}
\end{definition}

The single player side condition captures the optimality of ${\pi}^\star$, when the population side is fixed. The population side condition ensures the ``consistency'' of the solution: it guarantees that the state and action distribution flow of the single player does match the population state and action sequence $\pmb{\mathcal{L}}^{\star}:=\{\mathcal{L}^\star\}_{t=0}^{\infty}$.

\subsection{Examples of GMFG}\label{sec:examples}
Here we provide three examples under the framework of GMFG.
\paragraph{A toy example.} \label{section:example} 
Take a two-state dynamic system with two choices of controls. The state space $\mathcal{S}$ $=$ $\{0, 1\}$, the action space $\mathcal{A}$ $=$ $\{L, R\}$.
Here  the action $L$ means to move left and $R$ means to move right. The dynamic of the representative agent in the mean-field system  $\{s_t\}_{t\ge 1}$ goes as follows: if  the agent  is in state $s_t$ and she takes  action $a_t={L}$ at time $t$, then $s_{t + 1}=0$; if  she takes action $a_t=R$, then $s_{t+1}=1$. At the end of each round, the agent will receive a reward  $-W_2(\mu_t, B)-W_2(\beta_t(s_t,\cdot), B)$, which depends on all agents, where $W_2$ is the $\ell_2$-Wasserstein distance. Here $\mu_t(\cdot)$ denotes the state distribution of the mean-field population at time $t$, {\color{black}$\beta_t(s,\cdot) := \mathcal{L}_t(s,\cdot)/\mu_t(s)$} denotes the action distribution of the population in state $s$ $(s=0,1)$ at time $t$ {\color{black}(set $\beta_t(s,\cdot):=(0.5,0.5)$ when $\mu_t(s)=0$)}, and $B$ is a given Bernoulli distribution with parameter $p$ ($0< p< 1$). 

{\color{black}As a demonstrating example, here we provide the calculation for one stationary NE solution.} Note that $-W_2(\mu, B)$ $\leq$ $0$ for any distribution $\mu$ over $\mathcal{S}$. Similarly, $-W_2(\alpha, B)$ $\leq$ $0$ for any distribution $\alpha$ over $\mathcal{A}$. Hence for each policy ${\pmb \pi}$, given population distribution flow $\pmb{\mathcal{L}} = \{\mathcal{L}_t\}_{t=1}^{\infty}$,
\begin{eqnarray}\label{eq:V0_expl}
V(0,{\pmb \pi},\pmb{\mathcal{L}}) &=& -\sum_{t=1}^\infty \gamma^t \mathbb{E}[W_2(\mu_t, B)+W_2(\beta_t(s_t,\cdot), B)|s_0=0] \;\leq\; 0,
\end{eqnarray}
and  
\begin{eqnarray}\label{eq:V1_expl}
V(1,{\pmb \pi}, \pmb{\mathcal{L}}) &=&  -\sum_{t=1}^\infty \gamma^t \mathbb{E}[W_2(\mu_t, B)+W_2(\beta_t(s_t,\cdot), B)|s_0=1] \;\leq\; 0.
\end{eqnarray} 
 
 It is easy to check that  $\mu^\star={\color{black}(p,1-p)}$ 
 and $\pi^\star(s,\mu^\star)={\color{black}(p,1-p)}$ ($s=0,1$).
{\color{black}is a pair of stationary mean-field solution.}
And $\mathcal{L}^\star$ is defined with $\mathcal{L}^\star(s,a)=\mu^\star(s)\pi^\star(a|s,\mu^\star)$ for any $s\in\mathcal{S},~a\in\mathcal{A}$, accordingly, where $\pi(a|s,\mu)$ is defined as the probability of taking action $a$ following the action distribution $\pi(s,\mu)$. 
In this case, the corresponding optimal value function is defined as
\[
V(0,\pi^\star,\mathcal{L}^\star)=V(1,\pi^\star,\mathcal{L}^\star)=0,
\]
{\color{black}which reaches the upper bound in \eqref{eq:V0_expl} and \eqref{eq:V1_expl}.}

\paragraph{Repeated auction.}
Take a representative advertiser in the auction aforementioned in the motivating example in Section \ref{introduction}.
Denote $s_t \in \{0,1,2,\cdots,s_{\max}\}$ as the budget of this player at time $t$, where $s_{\max}\in \mathbb{N}^+$ is  the maximum budget allowed on the Ad exchange with a unit bidding price.
Denote  {\color{black}$a_t\in\{0,1,2,\cdots,a_{\max}\}$ as the bid price submitted by this player, where $a_{\max}$ is the maximum bid set by the bidder},  and $\alpha_t$ as the bidding/(action) distribution of the population. {\color{black} At time $t$, all advertisers are randomly divided into different groups and each group of advertisers competes for one slot to display their ads. Assuming that there are $M$ advertisers in each group, then the representative advertiser competes with $M-1$ other representative players whose bidding prices are independently sampled from $\alpha_t$. Let $w_t^M$ denote whether the representative player wins the bid. Then if she takes action $a_t$, the probability she will win the bid is $\mathbb{P}(w_t^M=1) = F_{\alpha_t}(a_t)^{M-1}$, where $F_{\alpha_t}$ is the cumulative distribution function of a random variable $X\sim\alpha_t$. 

If this advertiser does not win the bid, her reward $r_t=0$. If she wins, there are several components in her reward: $a_t^M$, the second best bid in a Vickrey auction, paid by the winning advertiser; $v_t$, the conversion of clicks of the slot; and $\rho$, the rate of penalty for overshooting if the payment $a_t^M$ exceeds her budget $s_t$. Therefore, at each time $t$, her reward with bid $a_t$ and budget $s_t$ is 
  \begin{eqnarray}
\label{reward}
r_t = {\color{black}\textbf{I}_{\{w_t^M=1\}}}\left[(v_t-a^M_t)-(1+\rho){\color{black}\textbf{I}_{\{s_{t}<a^M_t\}}}(a^M_t-s_{t})\right],
\end{eqnarray}
where the first term is the profit of wining the auction and the second term is the penalty of overshooting.
And the budget dynamics $s_t$ follows, \begin{eqnarray}\label{transition}
    {s}_{t+1}=\left\{
                \begin{array}{ll}
                 s_t, \qquad & w^M_t \neq 1,\\
                 s_t - a^M_t, \qquad & w^M_t=1 \text{ and } a^M_t \leq s_t,\\
                 0, \qquad & w^M_t =1 \text{ and } a^M_t > s_t.
                \end{array}
              \right.
  \end{eqnarray}
  That is,  if this player does not win the bid, the budget remains the same; if she wins and has sufficient money to pay, her budget will decrease from $s_t$ to $s_t - a^M_t$; however, if she wins  but does not have enough money to pay, her budget will be $0$ after the payment and there will be a penalty in the reward function. 
  
  Notice that both distributions of $w_t^M$ and $a_t^M$ depend on the population distribution $\mathcal{L}_t$ (or more specifically $\alpha_t$). In fact, the reward function $r(s_t,a_t)=r_t$ and the transition probability $s_{t+1}\sim P(\cdot|s_t,a_t,\mathcal{L}_t)$ specified by \eqref{reward} and \eqref{transition} are fully characterized by the probabilities $\mathbb{P}(w_t^M=1,a_t^M\leq \cdot|s_t,a_t,\mathcal{L}_t)$ and $\mathbb{P}(w_t^M=0)$ (since $r_t= 0$ and $s_{t+1}=s_t$ whenever $w_t^M=0$), with 
  \[
  \mathbb{P}(w_t^M=1,a_t^M\leq x|s_t,a_t,\mathcal{L}_t)=F_{\alpha_t}(\min\{x,a_t\})^{M-1},\quad \mathbb{P}(w_t^M=0)=1-F_{\alpha_t}(a_t)^{M-1}.
  \]
  Clearly the above model fits into the framework of \eqref{mfg}, with the following transition probability.
  
 \begin{eqnarray}\label{detail_transition}
    \mathbb{P}(s'|s,a,\mathcal{L})=\left\{
                \begin{array}{ll}
            F_\alpha(a)^{M-1}-F_\alpha(\min\{s,a\})^{M-1}, \qquad & s'=0,\\
            1-F_\alpha(a)^{M-1}, \qquad & s'=s,\\
            F_\alpha(\min\{s-s',a\})^{M-1}-F_\alpha(\min\{s-s'-1,a\})^{M-1}, \qquad & 0<s'<s,
                \end{array}
              \right.
  \end{eqnarray} 
where $\alpha$ is the action marginal of $\mathcal{L}$. The reward model can be explicitly written similarly.
 
  In practice, one may modify the dynamics of $s_{t+1}$  with a non-negative random budget fulfillment $\Delta({s}_{t+1})$ after the auction clearing such that 
$\hat{s}_{t+1} = {s}_{t+1} + \Delta({s}_{t+1})$ \citet{andelman2004auctions,GKP2012}.

Experiments of this repeated auction problem can be found in the conference version \citet{guo2019learning} of this paper, and will not be repeated here.


}

{\color{black}
\paragraph{Equilibrium price.} 
Another example, adapted from \citet[Section 3]{gueant2011mean} is to consider a large number (continuum) of homogeneous firms producing the same product under perfect competition, 
 and the price of the product is determined endogenously by the supply-demand equilibrium \citet{bernstein2006regional}. Each firm, meanwhile, maintains a certain inventory level of the raw materials for production. 

Given the homogeneity of the firms, it is sufficient to focus on a representative firm paired with the population distribution. In each period $t$, the representative firm decides a quantity $q_t$ to consume the raw materials for production and a quantity $h_t$ to replenish the inventory of raw materials. For simplicity, we assume each unit of the raw material is used to produce one unit of the product. 
Both the new products and ordered raw materials will be available at the end of this given period $t$. The representative agent makes decision based on her current inventory level of the raw material,  denoted as $s_t$, which evolves according to
\begin{eqnarray}
s_{t+1} = s_t - \min\{q_t,s_t\} + h_t.
\end{eqnarray}
Note that if the firm  overproduces and exceeds her current inventory capacity (i.e., $q_t>s_t$), then the firm will pay a cost for an emergency order of the raw material.
Finally, the reward during this period $t$ is given by
\begin{eqnarray}
r_t = (p_t-c_0)\, q_t - c_1\,q_t^2 - c_2\,h_t - (c_2+c_3)\, \max\{q_t-s_t,0\} -c_4\, s_t.
\end{eqnarray}
Here $p_t$ is the selling price of the product of all firms; $c_0>0$ is the manufacturing cost and labor cost for making one unit of the product;  $c_1>0$ is the quadratic cost which can be viewed as the transient price impact associated with the production level $q_t$; $c_2>0$ is the cost of regular orders of the raw materials; $c_3>0$ is the additional cost for the emergency order of the raw materials; and finally, $c_4>0$ is the inventory cost.

The price $p_t$ is determined according to the supply-demand equilibrium on the market at each moment. On one hand, the normalized demand (per producer) on the market $D(p_t)$ follows (\citet{gueant2011mean})
\begin{eqnarray}
D(p_t) := d p_t^{-\sigma},
\end{eqnarray}
where $d$  denotes some benchmark demand level and $\sigma$ is the elasticity of demand that can be interpreted as the elasticity of substitution between the given product and any
other good. On the other hand, the (average) supply in this market is given by the average  production of all firms which follows
$\mathbb{E}_{q_t\sim\pi_t}[q_t]$ under some policy $\pi_t$. If all firms are restricted to stationary policies (denoted as $\pi$), then this leads to a stationary equilibrium price $q$ which satisfies the supply-demand equilibrium:
\begin{eqnarray}\label{price_equil_model}
\mathbb{E}_{q\sim\pi}[q] = d\, p^{-\sigma}.
\end{eqnarray}

To fit into the theoretical framework proposed in Section \ref{n2mfg}, we set $\mathcal{S} = \{0,1,\cdots,S\}$ and  $\mathcal{A} = \{(q,h)\,|\,q\in \{0,1,\cdots,Q\} \,\,{\rm and }\,\, h\in \{0,1,\cdots,H\} \}$ for some positive integers $S,Q$ and $H$. 
}

\section{Solution for GMFGs}\label{MFG_basic}
We now establish the existence and uniqueness of the {\color{black}stationary} NE  to (\ref{mfg}), by
generalizing the classical fixed-point approach for MFGs to this GMFG setting. (See \citet{HMC2006} and \citet{LL2007} for the classical case.)  It consists of three steps. 

\paragraph{Step A.}
Fix ${\mathcal{L}}$,   (\ref{mfg}) becomes the  classical {\color{black}single-player} optimization problem.  Indeed, with ${\mathcal{L}}$ fixed, the population state distribution ${\mu}$ is also fixed, and hence the space of admissible policies is reduced to the single-player case. Solving (\ref{mfg}) is now reduced to finding a policy 
 $\pi_{\mathcal{L}}^\star\in \Pi:=\{\pi\,|\,\pi:\mathcal{S}\rightarrow\mathcal{P}(\mathcal{A})\}$ to maximize 
 \[
\begin{array}{ll}
  V(s,\pi_{\mathcal{L}}, \mathcal{L}):= &\mathbb{E}\left[\sum\limits_{t=0}^{\infty}\gamma^tr(s_t,a_t,\mathcal{L})|s_0=s\right],\\
\text{subject to}& s_{t+1}\sim P(s_t,a_t,\mathcal{L}),\quad a_t\sim\pi_{\mathcal{L}}(s_t).
\end{array}
\]
{\color{black}Notice that with ${\mathcal{L}}$ fixed, one can safely suppress the dependency on $\mu$ in the admissible policies.}

Now given this fixed $\mathcal{L}$ and the solution $\pi_{\mathcal{L}}^\star$ to the above optimization problem, one can
define a mapping from the fixed population distribution $\mathcal{L}$ to a chosen optimal randomized policy sequence. That is, 
$$\Gamma_1:\mathcal{P}(\mathcal{S}\times \mathcal{A})\rightarrow\Pi, $$
such that  $\pi_{\mathcal{L}}^\star=\Gamma_1(\mathcal{L})$. {\color{black}Note that the optimal policy of an MDP in general may not be unique. To ensure that $\Gamma_1$ is a single-valued instead of set-valued mapping, here $\Gamma_1$ includes a policy selection component to select a single optimal policy from the set of optimal policies for a given $\mathcal{L}$, which is guaranteed to exist by Zermelo's Axiom of Choice. For example, when the action space is finite, one can utilize the \textbf{argmax-e} operator and set the ``maximizing'' actions with equal probabilities (see Section \ref{naive_alg} for the detailed definition). In addition,  for non-degenerate linear-quadratic MFGs \citet{MF-LQR} and general MFGs where the Bellman mappings are strongly concave in actions \citet{fitted-Q} and the action space is convex in the Euclidean space, the optimal policy $\pi_\mathcal{L}^\star$ for a given $\mathcal{L}$ is unique under appropriate assumptions. Hence no policy selection is needed in such cases.}

Note that this $\pi_{\mathcal{L}}^\star$ satisfies the single player side condition in Definition \ref{nash2_stat} for the population state-action pair $\mathcal{L}$, 
\begin{equation}
V\left(s,\pi_{\mathcal{L}}^\star,\mathcal{L}\right)\geq V\left(s,\pi,\mathcal{L}\right),
\end{equation}
for any policy $\pi$ and any initial state $s\in\mathcal{S}$. 

As in the MFG literature \citet{HMC2006}, a feedback regularity condition is needed for analyzing Step A. 
\begin{assumption}\label{policy_assumption_stat}
There exists a constant $d_1\geq 0$,
 such that for any $\mathcal{L}, \mathcal{L}' \in \mathcal{P}(\mathcal{S}\times\mathcal{A})$, 
\begin{equation}\label{Gamma1_lip_stat}
D(\Gamma_1(\mathcal{L}),\Gamma_1(\mathcal{L}'){\color{black})} \leq d_1W_1(\mathcal{L}, \mathcal{L}'),
\end{equation}
where 
\begin{equation}
\begin{split}
D(\pi,\pi')&:=\sup_{s\in\mathcal{S}}W_1(\pi(s),\pi'(s)),
\end{split}
\end{equation}
and $W_1$ is the $\ell_1$-Wasserstein distance (a.k.a. earth mover distance) between probability measures \citet{metrics_prob,  COT_cuturi, OT_ON}. 
\end{assumption}

 \paragraph{Step B.} Given $\pi_{\mathcal{L}}^\star$ obtained from Step A, update the initial $\mathcal{L}$ to $\mathcal{L}'$ following the controlled dynamics $P(\cdot|s_t, a_t,\mathcal{L})$.

Accordingly, for any admissible policy $\pi \in \Pi$ and a joint population state-action pair  $\mathcal{L}\in \mathcal{P}(\mathcal{S}\times \mathcal{A})$, 
define a mapping $\Gamma_2:\Pi\times \mathcal{P}(\mathcal{S}\times\mathcal{A})\rightarrow \mathcal{P}(\mathcal{S}\times\mathcal{A})$ as follows: 
\begin{eqnarray}
\Gamma_2(\pi, \mathcal{L}):=\hat{\mathcal{L}}= \mathbb{P}_{s_1,a_1},
\end{eqnarray}
where {\color{black}$a_1\sim \pi(s_1)$}, $s_{1}\sim \mu P(\cdot|\cdot,a_0,\mathcal{L})$,  $a_0\sim\pi(s_0)$, $s_0 \sim \mu$, and $\mu$ is the population state marginal of $\mathcal{L}$.

One needs a standard assumption in this step.
\begin{assumption}\label{population_assumption_stat}
There exist constants $d_2,~d_3\geq 0$, such that for any admissible policies $\pi,\pi_1,\pi_2$ and joint distributions $\mathcal{L}, \mathcal{L}_1, \mathcal{L}_2$, 
\begin{equation}\label{Gamma2_lip1_stat}
W_1(\Gamma_2(\pi_1,\mathcal{L}),\Gamma_2(\pi_2,\mathcal{L})) \leq d_2 D(\pi_1,\pi_2),\\
\end{equation}
\begin{equation}\label{Gamma2_lip2_stat}
W_1(\Gamma_2(\pi,\mathcal{L}_1),\Gamma_2(\pi,\mathcal{L}_2)) \leq d_3 W_1(\mathcal{L}_1,\mathcal{L}_2).
\end{equation}
\end{assumption}

\paragraph{Step C.} Repeat Step A and Step B until $\mathcal{L}'$ matches $\mathcal{L}$.

This step is to ensure the population side condition. To ensure the convergence of
 the combined step one and step two,  it suffices if  $\Gamma:\mathcal{P}(\mathcal{S}\times\mathcal{A})\rightarrow \mathcal{P}(\mathcal{S}\times\mathcal{A})$ with $\Gamma(\mathcal{L}):=\Gamma_2(\Gamma_1(\mathcal{L}), \mathcal{L})$ is a contractive mapping under  the $W_1$ distance. Then by the Banach fixed point theorem and the completeness of the related metric spaces (\textit{cf}. Appendix \ref{appendixA}), there exists  a unique stationary NE of the GMFG.  That is, 

 \begin{theorem}[Existence and Uniqueness of stationary GMFG solution] \label{thm1_stat} Given Assu-\\mptions \ref{policy_assumption_stat} and \ref{population_assumption_stat}, and assume $d_1d_2+d_3< 1$. Then there exists a unique stationary NE  to \eqref{mfg}.
\end{theorem}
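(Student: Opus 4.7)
The plan is to apply Banach's fixed point theorem to the composite mapping $\Gamma:\mathcal{P}(\mathcal{S}\times\mathcal{A})\to\mathcal{P}(\mathcal{S}\times\mathcal{A})$ defined by $\Gamma(\mathcal{L}):=\Gamma_2(\Gamma_1(\mathcal{L}),\mathcal{L})$, following exactly the three-step template laid out before the statement. The first ingredient is that $(\mathcal{P}(\mathcal{S}\times\mathcal{A}),W_1)$ is a complete metric space; since $\mathcal{S}$ and $\mathcal{A}$ are compact, so is the product, and the $W_1$-space over a compact Polish space is well-known to be complete (in fact compact). I would either state this as standard or defer the detailed verification to Appendix \ref{appendixA}, to which the excerpt already points.

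The core estimate is the contraction property of $\Gamma$. For any $\mathcal{L}_1,\mathcal{L}_2\in\mathcal{P}(\mathcal{S}\times\mathcal{A})$, I would insert the intermediate point $\Gamma_2(\Gamma_1(\mathcal{L}_2),\mathcal{L}_1)$ and apply the triangle inequality for $W_1$ together with the two Lipschitz bounds of Assumption \ref{population_assumption_stat}:
\begin{align*}
W_1\!\left(\Gamma(\mathcal{L}_1),\Gamma(\mathcal{L}_2)\right)
&\leq W_1\!\left(\Gamma_2(\Gamma_1(\mathcal{L}_1),\mathcal{L}_1),\Gamma_2(\Gamma_1(\mathcal{L}_2),\mathcal{L}_1)\right)
+ W_1\!\left(\Gamma_2(\Gamma_1(\mathcal{L}_2),\mathcal{L}_1),\Gamma_2(\Gamma_1(\mathcal{L}_2),\mathcal{L}_2)\right)\\
&\leq d_2\, D(\Gamma_1(\mathcal{L}_1),\Gamma_1(\mathcal{L}_2)) + d_3\, W_1(\mathcal{L}_1,\mathcal{L}_2).
\end{align*}
Applying Assumption \ref{policy_assumption_stat} to the first term gives the bound $(d_1 d_2 + d_3)\,W_1(\mathcal{L}_1,\mathcal{L}_2)$, and by hypothesis $d_1 d_2 + d_3 < 1$, so $\Gamma$ is a strict contraction. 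Banach's fixed point theorem then yields a unique $\mathcal{L}^\star$ with $\Gamma(\mathcal{L}^\star)=\mathcal{L}^\star$, and I set $\pi^\star:=\Gamma_1(\mathcal{L}^\star)$.

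The remaining task is to verify that $(\pi^\star,\mathcal{L}^\star)$ satisfies Definition \ref{nash2_stat}. The single-player side condition is immediate, since $\pi^\star=\Gamma_1(\mathcal{L}^\star)$ is by construction optimal against the frozen population $\mathcal{L}^\star$. The population side condition requires $\mathbb{P}_{s_t,a_t}=\mathcal{L}^\star$ for every $t\geq 0$ along the trajectory driven by $\pi^\star$ and the transition kernel $P(\cdot\mid\cdot,\cdot,\mathcal{L}^\star)$, starting from $s_0\sim\mu^\star$ with $\mu^\star$ the state marginal of $\mathcal{L}^\star$. Unpacking the definition of $\Gamma_2$, the identity $\Gamma_2(\pi^\star,\mathcal{L}^\star)=\mathcal{L}^\star$ says $\mathbb{P}_{s_1,a_1}=\mathcal{L}^\star$; moreover, since $a_1\sim\pi^\star(s_1)$, one reads off that the conditional of $\mathcal{L}^\star$ given the state equals $\pi^\star$, i.e., $\mathcal{L}^\star(s,a)=\mu^\star(s)\,\pi^\star(a\mid s)$. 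This consistency identity in turn gives $\mathbb{P}_{s_0,a_0}=\mathcal{L}^\star$ at $t=0$, and then the time-homogeneity of the kernel $P(\cdot\mid\cdot,\cdot,\mathcal{L}^\star)$ and the policy $\pi^\star$ propagates $\mathbb{P}_{s_t,a_t}=\mathcal{L}^\star$ to all $t\geq 1$ by a one-line induction.

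Uniqueness of the NE pair follows from: (i) uniqueness of the fixed point $\mathcal{L}^\star$; and (ii) the fact that $\Gamma_1$ is a well-defined single-valued mapping, as ensured by the policy-selection mechanism discussed in Step A. The step I expect to require the most care is the verification that $\mathbb{P}_{s_0,a_0}=\mathcal{L}^\star$, since the fixed-point equation $\Gamma_2(\pi^\star,\mathcal{L}^\star)=\mathcal{L}^\star$ only a priori pins down $\mathbb{P}_{s_1,a_1}$; the argument above turns it into a consistency condition between $\pi^\star$ and the conditional of $\mathcal{L}^\star$, which closes the loop. Everything else is a direct application of Assumptions \ref{policy_assumption_stat}--\ref{population_assumption_stat} and the Banach contraction principle.
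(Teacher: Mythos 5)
Your proposal is correct and follows essentially the same route as the paper: the same triangle-inequality decomposition through the intermediate point $\Gamma_2(\Gamma_1(\mathcal{L}_2),\mathcal{L}_1)$, the same use of Assumptions \ref{policy_assumption_stat} and \ref{population_assumption_stat} to get the contraction factor $d_1d_2+d_3<1$, and the Banach fixed-point theorem on the complete space from Appendix \ref{appendixA}. The only difference is that you spell out the equivalence between fixed points of $\Gamma$ and stationary NEs (in particular propagating $\mathbb{P}_{s_t,a_t}=\mathcal{L}^\star$ from the consistency identity $\mathcal{L}^\star(s,a)=\mu^\star(s)\pi^\star(a\mid s)$), which the paper asserts in a single sentence.
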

\begin{proof}{[Proof of Theorem \ref{thm1_stat}]}
First by Definition \ref{nash2_stat} and the definitions of $\Gamma_i$ $(i=1,2)$, $(\pi,\mathcal{L})$ is a stationary NE  iff $\mathcal{L}=\Gamma(\mathcal{L})=\Gamma_2(\Gamma_1(\mathcal{L}),\mathcal{L})$ and $\pi=\Gamma_1(\mathcal{L})$, where $\Gamma(\mathcal{L})=\Gamma_2(\Gamma_1(\mathcal{L}),\mathcal{L})$. This indicates that for any $\mathcal{L}_1,\mathcal{L}_2\in\mathcal{P}(\mathcal{S}\times\mathcal{A})$,
\begin{equation}
\begin{split}
    &W_1(\Gamma(\mathcal{L}_1),\Gamma(\mathcal{L}_2))=W_1(\Gamma_2(\Gamma_1(\mathcal{L}_1),\mathcal{L}_1),\Gamma_2(\Gamma_1(\mathcal{L}_2),\mathcal{L}_2))\\
   &\leq W_1(\Gamma_2(\Gamma_1(\mathcal{L}_1),\mathcal{L}_1),\Gamma_2(\Gamma_1(\mathcal{L}_2),\mathcal{L}_1))+W_1(\Gamma_2(\Gamma_1(\mathcal{L}_2),\mathcal{L}_1),\Gamma_2(\Gamma_1(\mathcal{L}_2),\mathcal{L}_2))\\
   &\leq (d_1d_2+d_3)W_1(\mathcal{L}_1,\mathcal{L}_2).
    \end{split}
\end{equation}
And since $d_1d_2+d_3\in[0,1)$, by the Banach fixed-point theorem, we conclude that there exists a unique fixed-point of $\Gamma$, or equivalently, a unique stationary MFG solution to \eqref{mfg}.
\end{proof}

{\color{black}
\begin{remark}[Existence and Uniqueness of the GMFG solution] \label{rmk:contract}
(1) In general, there may multiple optimal policies in Step A under a fixed mean-field information $\mathcal{L}$. In this case, the candidate fixed point(s) are the fixed point(s) of a set-valued map as described in \citet{lacker2015mean}. To simplify the analysis, we specify a rule in Step A to select one optimal policy to ensure that $\Gamma$ is an injection.

(2)
In the MFG literature, the uniqueness of the MFG solution can be verified under the small parameter condition \citet{CHM_MFG} or the monotonicity condition \citet{LL2007}. Our condition of $d_1 d_2 +d_3<1$ extends the small parameter condition in \citet{CHM_MFG} for strict controls to relaxed controls. 

(3) Finally, Theorem \ref{thm1_stat} can be extended to a non-stationary setting, as will be shown in Section \ref{stat_mfg_app}. 
\end{remark}
}


{\color{black}
\begin{remark}
Assumptions \ref{policy_assumption_stat} and \ref{population_assumption_stat} can be more explicit in specific problem settings. 

For instance, {\color{black}when the action space is the Euclidean space or its convex subset}, explicit conditions on $P$ and $r$ have been described for the linear-quadratic MFG (LQ-MFG) \citet{MF-LQR} and later generalized in \citet{fitted-Q}. 

When the action space is finite, the following lemma explicitly characterizes Assumption \ref{population_assumption_stat}. 
\begin{lemma}\label{assumption2_exp}
Suppose that $\max_{s,a,\mathcal{L},s'}P(s'|s,a,\mathcal{L})\leq c_1$, and that $P(s'|s,a,\cdot)$ is $c_2$-Lipschitz in $W_1$, \textit{i.e.},
\begin{equation}
|P(s'|s,a,\mathcal{L}_1)-P(s'|s,a,\mathcal{L}_2)|\leq c_2W_1(\mathcal{L}_1,\mathcal{L}_2).
\end{equation}
Then in Assumption \ref{population_assumption_stat}, $d_2$ and $d_3$ can be chosen as 
\begin{equation}
d_2=\frac{2\text{diam}(\mathcal{S})\text{diam}(\mathcal{A})|\mathcal{S}|c_1}{d_{\min}(\mathcal{A})}
\end{equation}
 and $\text{diam}(\mathcal{S})\text{diam}(\mathcal{A})|\mathcal{S}|\left(\frac{c_2}{2}+\frac{1}{d_{\min}(\mathcal{S}\times\mathcal{A})}\right)$
 respectively. Here $d_{\min}(\mathcal{A})=\min_{a\neq a'\in\mathcal{A}}\|a-a'\|_2$, which is guaranteed to be positive when $\mathcal{A}$ is finite.
\end{lemma}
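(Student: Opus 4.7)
The plan is to prove the two bounds in Assumption~\ref{population_assumption_stat} by coupling arguments. Both rely on two standard conversions: for probability measures $\rho,\rho'$ on a finite metric space $(\mathcal{X},d)$, one has $W_1(\rho,\rho')\leq\mathrm{diam}(\mathcal{X})\cdot\mathrm{TV}(\rho,\rho')$ in one direction, and when $(\mathcal{X},d)$ has minimum pairwise separation $d_{\min}$, the reverse bound $\mathrm{TV}(\rho,\rho')\leq W_1(\rho,\rho')/d_{\min}$ holds for the $W_1$-optimal coupling. The first bound lets TV-smallness be promoted to $W_1$-smallness, while the second, applied to $\mathcal{A}$ with positive separation $d_{\min}(\mathcal{A})$, converts the assumed $W_1$-closeness of $\pi_1$ and $\pi_2$ into TV-closeness of the induced action samples.

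For \eqref{Gamma2_lip1_stat}, with $\mathcal{L}$ fixed, I would couple the two realizations of $(s_0,a_0,s_1,a_1)$ under $\pi_1$ and $\pi_2$ stage by stage. Take $s_0^{(1)}=s_0^{(2)}\sim\mu$, then couple $(a_0^{(1)},a_0^{(2)})$ by the $W_1$-optimal coupling of $\pi_1(s_0),\pi_2(s_0)$, so that $\Pr(a_0^{(1)}\neq a_0^{(2)})\leq D(\pi_1,\pi_2)/d_{\min}(\mathcal{A})$. On $\{a_0^{(1)}=a_0^{(2)}\}$ use identity coupling for $s_1$; on the complement, use an optimal TV-coupling of the two transition kernels, whose TV is at most $|\mathcal{S}|c_1/2$ because each entry is bounded by $c_1$. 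Finally, couple $(a_1^{(1)},a_1^{(2)})$ analogously given $s_1^{(1)},s_1^{(2)}$. The expected distance $\mathbb{E}[\|(s_1^{(1)},a_1^{(1)})-(s_1^{(2)},a_1^{(2)})\|]$ then receives contributions only from disagreement events, each bounded by the appropriate diameter, and summing these contributions gives the bound with constant $d_2$.

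For \eqref{Gamma2_lip2_stat}, with $\pi$ fixed and $\mathcal{L}_1,\mathcal{L}_2$ different, I would couple similarly, starting with an optimal $W_1$-coupling of the marginals $\mu_1,\mu_2$, which inherits $W_1(\mu_1,\mu_2)\leq W_1(\mathcal{L}_1,\mathcal{L}_2)$, and applying the common $\pi$. The new ingredient is the assumed Lipschitz bound $|P(\cdot\mid s,a,\mathcal{L}_1)-P(\cdot\mid s,a,\mathcal{L}_2)|\leq c_2\,W_1(\mathcal{L}_1,\mathcal{L}_2)$, which by summation over $s'\in\mathcal{S}$ controls the TV discrepancy of the two kernels, and thus their $W_1$ discrepancy via $W_1\leq\mathrm{diam}(\mathcal{S})\cdot\mathrm{TV}$. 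Combining the coupling of the initial state with this kernel-level bound yields the claim with constant $d_3$.

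The main obstacle in both parts is the bookkeeping of the $W_1\leftrightarrow\mathrm{TV}$ interconversion across the four coupling stages (two states and two actions), so that disagreement events are organized cleanly and diameters are not double-counted; recovering the exact product form of the announced constants $d_2$ and $d_3$ from sums of such contributions is the only non-routine step.
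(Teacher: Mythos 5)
Your coupling plan is a genuinely different route from the paper's proof, which never couples anything: it writes $\Gamma_2(\pi,\mathcal{L})$ in closed form, $\mu'(s')=\sum_{s,a}\mu(s)P(s'|s,a,\mathcal{L})\pi(a|s)$ and $\mathcal{L}'(s',a')=\mu'(s')\pi(a'|s')$, applies $W_1\leq \text{diam}(\mathcal{S}\times\mathcal{A})\,d_{TV}$ once on the \emph{joint} space (with $\text{diam}(\mathcal{S}\times\mathcal{A})$ identified with $\text{diam}(\mathcal{S})\text{diam}(\mathcal{A})$), estimates the joint total variation algebraically, and converts back via $d_{TV}(\pi_1(s),\pi_2(s))\leq D(\pi_1,\pi_2)/d_{\min}(\mathcal{A})$; these are the same two conversions \eqref{W1_tv} you invoke, so in spirit the approaches match. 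The gap is exactly the step you defer. Your accounting bounds the displacement by $\text{diam}(\mathcal{S})\Pr[s_1\text{'s differ}]+\text{diam}(\mathcal{A})\Pr[a_1\text{'s differ}]$, which yields a constant of the additive form $\bigl(\text{diam}(\mathcal{S})\tfrac{|\mathcal{S}|c_1}{2}+\text{diam}(\mathcal{A})\bigl(\tfrac{|\mathcal{S}|c_1}{2}+1\bigr)\bigr)/d_{\min}(\mathcal{A})$; this is \emph{not} dominated by $d_2=2\,\text{diam}(\mathcal{S})\text{diam}(\mathcal{A})|\mathcal{S}|c_1/d_{\min}(\mathcal{A})$ in general (take $\text{diam}(\mathcal{S})$ small), so \eqref{Gamma2_lip1_stat} with the stated $d_2$ does not follow from the sketch as written. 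To land on the product form you must charge any disagreement of the pair $(s_1,a_1)$ once against $\text{diam}(\mathcal{S}\times\mathcal{A})$ (adopting the paper's product-of-diameters convention) and absorb the extra action-resampling term using $|\mathcal{S}|c_1\geq 1$, since $\Pr[(s_1,a_1)\text{ pairs differ}]\leq (\tfrac{|\mathcal{S}|c_1}{2}+1)D(\pi_1,\pi_2)/d_{\min}(\mathcal{A})\leq 2|\mathcal{S}|c_1 D(\pi_1,\pi_2)/d_{\min}(\mathcal{A})$.

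The more substantive gap is in \eqref{Gamma2_lip2_stat}. The paper's computation compares the two output laws under a \emph{common} initial state distribution $\mu$ and varies only the kernel, so the sole error term is $|P(s'|s,a,\mathcal{L}_1)-P(s'|s,a,\mathcal{L}_2)|\leq c_2W_1(\mathcal{L}_1,\mathcal{L}_2)$, which is what produces $d_3=\text{diam}(\mathcal{S})\text{diam}(\mathcal{A})c_2/2$. Your plan instead couples $\mu_1$ and $\mu_2$; but since no continuity of $P$ in $(s,a)$ is assumed, the event $\{s_0^{(1)}\neq s_0^{(2)}\}$ can only be handled by converting $W_1(\mu_1,\mu_2)$ into a disagreement probability, at a cost $1/d_{\min}(\mathcal{S})$, producing an extra contribution of order $\text{diam}(\mathcal{S}\times\mathcal{A})W_1(\mathcal{L}_1,\mathcal{L}_2)/d_{\min}(\mathcal{S})$ that cannot be folded into the stated $d_3$ (note that if $P$ does not depend on $\mathcal{L}$ then $c_2=0$ and $d_3=0$, yet the initial-marginal term you are carrying is still nonzero whenever $\mu_1\neq\mu_2$). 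So executed literally, your coupling proves a Lipschitz estimate of the right type but with a strictly different constant; to reproduce the lemma's $d_3$ you must, as in the paper's proof of Lemma \ref{assumption2_exp}, perturb only the kernel while keeping the state marginal fixed, rather than coupling $\mu_1$ with $\mu_2$.
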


{\color{black} When entropy regularization is introduced into the system (see \textit{e.g.}, \citet{anahtarci2020q, xie2020provable}), Assumption \ref{policy_assumption_stat} can be reduced to boundedness and Lipschitz continuity conditions on $P$ and $r$ as in Lemma \ref{assumption2_exp}. Moreover, Theorem \ref{thm1_stat} and all subsequent theoretical results hold whenever the composed mapping $\Gamma$ is contractive (in $W_1$), independent of Assumptions \ref{policy_assumption_stat} or \ref{population_assumption_stat}. In Section \ref{performance_eval}, we numerically verify that the $\Gamma$ mapping is contractive for various choices of the model parameters in our tested problems. } 





\end{remark}
}

\section{Naive algorithm and stabilization techniques}\label{AIQL}


In this section, we design algorithms for the GMFG. Since the reward and transition distributions are unknown, this is  simultaneously learning the system and finding the  NE of the game. 
We will focus on the case with finite state and action spaces, \textit{i.e.}, $|\mathcal{S}|,|\mathcal{A}|<\infty$. 
We will look for stationary (time independent)  NEs. This stationarity property enables developing appropriate  {\color{black}stationary reinforcement learning} algorithms, suitable for an infinite time horizon game. Instead of knowing the transition probability $P$ and the reward $r$ explicitly, the algorithms we propose only assume access to a simulator oracle, which is described below. {\color{black}This is not restrictive in practice.
For instance, in the ad auction example, one may adopt the bid recommendation perspective of the publisher, say Google, Facebook or Amazon, who acts as the auctioneer and owns the Ad slot inventory on its own Ad exchange platform. 
In this case, a high quality auction simulator is typically built and maintained by a team of the publisher. 
See also {\color{black}\citet{rl_mfg_local}} 
for more examples.}

\paragraph{\color{black} Simulator oracle.}
{\color{black}
For any policy $\pi\in\Pi$, given the current state $s\in\mathcal{S}$, for any population distribution $\mathcal{L}$, one can obtain a \textit{sample} of the next state $s'\sim P(\cdot|s,\pi(s),\mathcal{L})$, a reward $r= r(s,\pi(s),\mathcal{L})$, and the next population distribution $\mathcal{L}'=\mathbb{P}_{s',\pi(s')}$. For brevity, we denote the simulator as $(s',r, \mathcal{L}')=\mathcal{G}(s,\pi,\mathcal{L})$. 
} {\color{black}This simulator oracle can be weakened to fit the $N$-player setting, see Section \ref{appl_n_game}.}

{\color{black}In the following, we begin with a naive algorithm that simply combines the three-step fixed point approach with general RL algorithms,  and demonstrate that this algorithm can be unstable (Section \ref{naive_alg}). We then propose some smoothing and projection techniques to resolve the issue (Section \ref{address_instab}). In Section \ref{value-based} and Section \ref{policy_based}, we design general  value-based and policy-based RL algorithms, and establish the corresponding convergence and complexity results.  These two algorithms include most of the RL algorithms in the literature.
We then illustrate by two concrete examples based on Q-learning and trust-region policy optimization algorithms. }



\subsection{Naive algorithm and its {\color{black}issue}}\label{naive_alg}


We follow the three-step fixed-point approach described in Section \ref{MFG_basic}. Notice the fact that with $\mathcal{L}$ fixed, Step A in Section \ref{MFG_basic} becomes a standard learning problem for an infinite horizon discounted MDP. 
More specifically, the MDP to be solved is $\mathcal{M}_{\mathcal{L}}=(\mathcal{S},\mathcal{A},P_{\mathcal{L}},r_{\mathcal{L}},\gamma)$, where $P_{\mathcal{L}}(s'|s,a)=P(s'|s,a,{\mathcal{L}})$ and $r_{\mathcal{L}}(s,a)=r(s,a,{\mathcal{L}})$. 
{\color{black}
In general, for an MDP $\mathcal{M}=(\mathcal{S},\mathcal{A}, P,r,\gamma)$, for any policy $\pi$ one can define its value functions $V^\pi_{\mathcal{M}}(s)=\mathbb{E}\left[\sum_{t=0}^\infty\gamma^tr(s_t,a_t)|s_0=s\right]$ and its Q-functions $Q^\pi_{\mathcal{M}}(s,a) = \mathbb{E}\left[\sum_{t=0}^\infty\gamma^tr(s_t,a_t)|s_0=s,a_0=a\right]$, where $s_t,a_t$ is the trajectory under policy $\pi$. One can also define the optimal Q-function as the unique solution of the Bellman equation: $$Q_{\mathcal{M}}^\star(s,a) = \mathbb{E}[r(s,a)]+\gamma\max_{a'}\sum_{s'\in\mathcal{S}}P(s'|s,a)Q_{\mathcal{M}}^\star(s',a')$$ for all $s,a$ and its optimal value function
$V_{\mathcal{M}}^\star(s) = \max_{a}Q_{\mathcal{M}}^\star(s,a)$ for all $s$. 
}
We also use the shorthand $V_{\mathcal{L}}^\star=V_{\mathcal{M}_{\mathcal{L}}}^\star$ and $Q_{\mathcal{L}}^\star=Q_{\mathcal{M}_{\mathcal{L}}}^\star$ for notational brevity. 
Whenever the context is clear, we may omit $\mathcal{M}$, $\mathcal{L}$ and $\mathcal{M}_{\mathcal{L}}$ for notational convenience. 

Given the optimal Q-function $Q_{\mathcal{L}}^\star$, one can obtain an optimal policy $\pi_{\mathcal{L}}^\star$ with $\pi_{\mathcal{L}}^\star(s)=\textbf{argmax-e}(Q_{\mathcal{L}}^\star(s,\cdot))$. Here the \textbf{argmax-e} operator is defined so that actions with equal maximum Q-values would have equal probabilities to be selected. Hereafter, we specify $\Gamma_1$ as a mapping to the aforementioned choice of the optimal policy, \textit{i.e.}, the $s$-component $\Gamma_1(\mathcal{L})_s=\textbf{argmax-e}(Q_{\mathcal{L}}^\star(s,\cdot))$ for any $s\in\mathcal{S}$.



The population update in Step B can then be directly obtained from the simulator $\mathcal{G}$ following policy $\pi_{\mathcal{L}}^\star$. Combining these two steps leads to the following naive algorithm (Algorithm \ref{naive}).


\begin{algorithm}[h]
  \caption{\textbf{Naive Reinforcement Learning for GMFGs}}
  \label{naive}
\begin{algorithmic}[1]
  \STATE \textbf{Input}: Initial population state-action pair $\mathcal{L}_0$
 \FOR {$k=0, 1, \cdots$}
  \STATE Obtain the optimal Q-function $Q_k(s,a)= Q_{\mathcal{L}_k}^\star(s,a)$ of an MDP with dynamics $P_{\mathcal{L}_k}(s'|s,a)$ and reward distributions $R_{\mathcal{L}_k}(s,a)$.
  \STATE Compute $\pi_k\in \Pi$ with $\pi_k(s)=\textbf{argmax-e}\left(Q_k(s,\cdot)\right)$.
  \STATE Sample $s\sim \mu_k$, where $\mu_k$ is the population state marginal of $\mathcal{L}_k$, and obtain $\mathcal{L}_{k+1}$ from $\mathcal{G}(s,\pi_k,\mathcal{L}_k)$.
\ENDFOR
\end{algorithmic}
\end{algorithm}

Unfortunately, in practice, one cannot obtain the exact optimal Q-function $Q_k$. In fact, invoking any commonly used RL algorithm with the simulator $\mathcal{G}$ leads to an approximation $\hat{Q}_k$ of the actual $Q_k$. This approximation error is then magnified by the discontinuous and sensitive \textbf{argmax-e}, which eventually leads to an unstable algorithm (see Figure \ref{fig:naive} for an example of divergence).  
To see why \textbf{argmax-e} is not continuous, consider the following simple example.
Let $x=(1,1)$, then $\textbf{argmax-e}(x)=(1/2,1/2)$.
For any $\epsilon>0$, let $y_{\epsilon}=(1,1-\epsilon)$, then $\textbf{argmax-e}(y_{\epsilon})=(1,0)$. Hence $\lim_{\epsilon\rightarrow0}y_{\epsilon}=x$ but $$\lim_{\epsilon\rightarrow0}\textbf{argmax-e}(y_{\epsilon})\neq\textbf{argmax-e}(x).$$ This instability issue will be addressed by introducing smoothing and projection techniques. 


{\color{black}

\subsection{Restoring stability}\label{address_instab}
\paragraph{Smoothing techniques.} 
%

To address the instability caused, we replace \textbf{argmax-e} with a smooth function that is a good approximation to \textbf{argmax-e} while being Lipschitz continuous. One such candidate is the softmax operator $\textbf{softmax}_{c}:\mathbb{R}^n\rightarrow\mathbb{R}^n$, with $$\textbf{softmax}_{c}(x)_i=\frac{\exp(cx_i)}{\sum_{j=1}^n\exp(cx_j)}, \quad i=1,\dots,n,$$ for some positive constant $c$.  
The resulting policies are sometimes called Boltzmann policies, and are widely used in the literature of reinforcement learning \citet{Mellowmax,softQ}.

The softmax operator can be generalized to a wide class of operators. In fact, for positive constants $c,~c'>0$, one can consider a parametrized family $\mathcal{F}_{c,c'}\subseteq \{f_{c,c'}\,:\,\mathbb{R}^n\rightarrow\mathbb{R}^n\}$ of all ``smoothed'' \textbf{argmax-e}'s, \textit{i.e.}, all $f_{c,c'}:\mathbb{R}^n\rightarrow\mathbb{R}^n$ that satisfies the following two conditions:
\begin{itemize}
\item Condition 1: $f_{c,c'}$ is $c$-Lipschitz, \textit{i.e.}, $\|f_{c,c'}(x)-f_{c,c'}(y)\|_2\leq c\|x-y\|_2$.
\item Condition 2: $f_{c,c'}$ is a good approximation of \textbf{argmax-e}, \textit{i.e.}, 
\[
\|f_{c,c'}(x)-\textbf{argmax-e}(x)\|_2\leq 2n\exp(-c'\delta),
\]
where $\delta=x_{\max}-\max_{x_j<x_{\max}}x_j$, $x_{\max}=\max_{i=1,\dots,n}x_i$, and $\delta:=\infty$ when all $x_j$ are equal.
\end{itemize}
Notice that $\mathcal{F}_{c,c'}$ is closed under convex combinations, \textit{i.e.}, if $f_{c,c'},~g_{c,c'}\in\mathcal{F}_{c,c'}$, then for any $\lambda\in[0,1]$, $\lambda f_{c,c'}+(1-\lambda)g_{c,c‘}$ also satisfies the two conditions. Hence $\mathcal{F}_{c,c'}$ is convex. 

To have a better idea of what $\mathcal{F}_{c,c'}$ looks like, we describe a subset $\mathcal{B}_{c,c'}$ of $\mathcal{F}_{c,c'}$ consisting of the generalized softmax operator $\textbf{softmax}_{h}:\mathbb{R}^n\rightarrow\mathbb{R}^n$, defined as 
\begin{eqnarray}\label{eqn:soft_max_operator}
\textbf{softmax}_{h}(x)_i=\frac{\exp(h(x_i))}{\sum_{j=1}^n\exp(h(x_j))}, \quad i=1,\dots,n,
\end{eqnarray}
where $h:\mathbb{R}\rightarrow\mathbb{R}$ satisfies $c'(x-y)\leq h(x)-h(y)\leq c(x-y)$ for any $x\geq y$. When $h$ is continuously differentiable, a sufficient condition is that $c'\leq h'(x)\leq c$. In particular, if $h(x)\equiv cx$ for some constant $c>0$, the operator reduces to the classical softmax operator, in which case we overload the notation to write $\textbf{softmax}_{h}$ as $\textbf{softmax}_c$.

This operator is Lipschitz continuous and close to the \textbf{argmax-e} (see Lemmas  \ref{softmax-lip} and \ref{soft-arg-diff} in the Appendix), and in particular one can show that $\mathcal{B}_{c,c'}\subseteq \mathcal{F}_{c,c'}$.  As a result,
even though smoothed (\textit{e.g.}, Boltzmann) policies are not optimal, the difference between the smoothed and the optimal one can always be controlled by choosing a {\color{black} function $h$ with appropriate parameters $c, c'$.} Note that other smoothing operators (\textit{e.g.}, Mellowmax \citet{Mellowmax}, which is  a softmax operator with time-varying and problem dependent temperatures) may also be considered.

}

\paragraph{Error control in updating $\mathcal {L}$.}
Given the sub-optimality of the  smoothed policy, one needs to characterize the difference between the optimal policy and the non-optimal ones. In particular, one can define the action gap  between the best and the second best actions in terms of the  Q-value as $$\delta^s(
\mathcal{L}):=\max_{a'\in\mathcal{A}}Q^\star_{\mathcal{L}}(s,a')-\max_{a\notin \text{argmax}_{a\in\mathcal{A}}Q^\star_{\mathcal{L}}(s,a)}Q^\star_{\mathcal{L}}(s,a)>0.$$ 
Action gap is important for approximation algorithms \citet{gap-increase}, and is closely related to the problem-dependent bounds for regret analysis in reinforcement learning and  multi-armed bandits, and advantage learning algorithms including A{\color{black}3}C \citet{A2C}. 

The problem is: in order for the learning algorithm to converge in terms of $\mathcal{L}$ (Theorems \ref{conv_AIQL} and \ref{conv_AIQL-II}), one needs to ensure a definite differentiation between the optimal policy and the sub-optimal ones. {\color{black}This is problematic as} the infimum of $\delta^s(\mathcal{L})$ over an infinite number of $\mathcal{L}$ can be $0$. To address this, the population distribution at step $k$, say  $\mathcal{L}_k$,
 needs to be projected to a finite grid,  called $\epsilon$-net. The relation between
the $\epsilon$-net and \text{action gaps} is as follows:

\textit{For any $\epsilon>0$,
there exist a positive function $\phi(\epsilon)$ and  an $\epsilon$-net $S_{\epsilon}:=\{\mathcal{L}^{(1)},\dots,\mathcal{L}^{(N_{\epsilon})}\}$ $\subseteq$ $\mathcal{P}(\mathcal{S}\times\mathcal{A})$, with the properties that $\min_{i=1,\dots,N_{\epsilon}} d_{TV}(\mathcal{L},\mathcal{L}^{(i)})\leq \epsilon$ for any $\mathcal{L}\in\mathcal{P}(\mathcal{S}\times\mathcal{A})$, and that 
$\max_{a'\in\mathcal{A}}Q^\star_{\mathcal{L}^{(i)}}(s,a')-Q^\star_{\mathcal{L}^{(i)}}(s,a)\geq \phi(\epsilon)$
 for any $i=1,\dots,N_{\epsilon}$,  $s\in\mathcal{S}$, and any $a\notin \text{argmax}_{a\in\mathcal{A}}Q^\star_{\mathcal{L}^{(i)}}(s,a)$. 
}

Here the existence of $\epsilon$-nets is trivial due to the compactness of the probability simplex $\mathcal{P}(\mathcal{S}\times\mathcal{A})$, and the existence of $\phi(\epsilon)$ comes from the finiteness of the action set $\mathcal{A}$.

{\color{black}In practice, $\phi(\epsilon)$ often takes the form of  $D\epsilon^{\alpha}$  with $D>0$ and the exponent $\alpha>0$ characterizing the decay rate  of the action gaps.} {\color{black}In general, experiments are robust with respect to the choice of  $\epsilon$-net.}



\color{black}
%


{\color{black}In the next section, we propose value based and policy based algorithms for learning GMFG.}

\section{RL Algorithms for {\color{black}(stationary)} GMFGs}
\subsection{Value-based algorithms} \label{value-based}
 We start by introducing the following definition.



\begin{definition}[{\color{black}Value-based Guarantee}]\label{value_based_gua}
For an arbitrary MDP $\mathcal{M}$, we say that an algorithm has a \textit{value-based guarantee} with parameters $\{C_{\mathcal{M}}^{(i)},\alpha_1^{(i)},\alpha_2^{(i)},\alpha_3^{(i)},\alpha_4^{(i)}\}_{i=1}^m$, if for any $\epsilon,~\delta>0$, after obtaining 
\begin{equation}\label{sample_typebound}
T_{\mathcal{M}}(\epsilon,\delta)=\sum_{i=1}^mC_{\mathcal{M}}^{(i)}\left(\frac{1}{\epsilon}\right)^{\alpha_1^{(i)}}\left(\log\frac{1}{\epsilon}\right)^{\alpha_2^{(i)}}\left(\frac{1}{\delta}\right)^{\alpha_3^{(i)}}\left(\log\frac{1}{\delta}\right)^{\alpha_4^{(i)}}
\end{equation} samples from the simulator oracle $\mathcal{G}$, with probability at least $1-2\delta$, it outputs an approximate Q-function $\hat{Q}^{T_{\mathcal{M}}(\epsilon,\delta)}$  which satisfies $\|\hat{Q}^{T_{\mathcal{M}}(\epsilon,\delta)}-Q^\star\|_\infty\leq \epsilon$. Here the norm $\|\cdot\|_{\infty}$ is understood element-wisely.
\end{definition}

\subsubsection{GMF-V}

We now state the first main algorithm (Algorithm \ref{GMF-V}). It applies to any algorithm \textit{Alg} with a value-based guarantee.
\begin{algorithm}[h]
  \caption{\textbf{GMF-V(\textit{Alg}, ~$f_{c,c'}$)}}
  \label{GMF-V}
\begin{algorithmic}[1]
  \STATE \textbf{Input}: Initial $\mathcal{L}_0$, $\epsilon$-net $S_{\epsilon}$, temperatures $c,~c'>0$, tolerances $\epsilon_k,~\delta_k>0$, $k=0,1,\dots$. 
 \FOR {$k=0, 1, \cdots$}
  \STATE 
  \begin{addmargin}[1em]{0em}
  Apply \textit{Alg} to  find the approximate Q-function $\hat{Q}_k^\star=\hat{Q}^{T_k}$ of the MDP $\mathcal{M}_{\mathcal{L}_k}$, where $T_k=T_{\mathcal{M}_{\mathcal{L}_k}}(\epsilon_k,\delta_k)$.
 \end{addmargin}
  \STATE 
    \begin{addmargin}[1em]{0em}
  Compute  $\pi_k(s)=f_{c,c'}(\hat{Q}^\star_k(s,\cdot))$.
  \end{addmargin}
  \STATE 
    \begin{addmargin}[1em]{0em}
  Sample $s\sim \mu_k$ ($\mu_k$ is the population state marginal of $\mathcal{L}_k$), obtain $\tilde{\mathcal{L}}_{k+1}$ from $\mathcal{G}(s,\pi_k,\mathcal{L}_k)$.
  \end{addmargin}
  \STATE 
    \begin{addmargin}[1em]{0em}
  Find $\mathcal{L}_{k+1}=\textbf{Proj}_{S_{\epsilon}}(\tilde{\mathcal{L}}_{k+1})$
  \end{addmargin}
\ENDFOR
\end{algorithmic}
\end{algorithm}

\vspace{1em}

Here $\textbf{Proj}_{S_{\epsilon}}(\mathcal{L})=\text{argmin}_{\mathcal{L}^{(1)},\dots,\mathcal{L}^{(N_{\epsilon})}}d_{TV}(\mathcal{L}^{(i)},\mathcal{L})$. 
{For computational tractability, it is sufficient to choose $S_{\epsilon}$ as a truncation grid so that projection of $\tilde{\mathcal{L}}_k$ onto the $\epsilon$-net reduces to truncating $\tilde{\mathcal{L}}_k$ to a certain number of digits. For instance, in experiments in Section \ref{experiments}, the number of digits is chosen to be 4. Appropriate choices of the hyper-parameters $c,~c',~\epsilon$ and tolerances $\epsilon_k,~\delta_k$ ($k\geq 0$) are given in Theorems \ref{conv_AIQL}. Our experiment shows the algorithm is robust with respect to these hyper-parameters.} 



We next establish the convergence of the above GMF-V algorithm to an approximate Nash equilibrium of \eqref{mfg}, with complexity analysis.

\begin{theorem}[Convergence and complexity of GMF-V]\label{conv_AIQL}
Assume the same assumptions as Theorem \ref{thm1_stat}, {\color{black}and $f_{c,c'}\subseteq \mathcal{F}_{c,c'}$}. Suppose that \textit{Alg} has a value-based guarantee with parameters 
\[
\{C_{\mathcal{M}}^{(i)},\alpha_1^{(i)},\alpha_2^{(i)},\alpha_3^{(i)},\alpha_4^{(i)}\}_{i=1}^m.
\] 
For any $\epsilon,~\delta>0$, set $\delta_k=\delta/ K_{\epsilon,\eta}$, $\epsilon_k=(k+1)^{-(1+\eta)}$ for some $\eta\in(0,1]$ $(k=0,\dots,K_{\epsilon,\eta}-1)$, and  {\color{black}$c\geq c'\geq\frac{\log(1/\epsilon)}{\phi(\epsilon)}$}.
Then with probability at least $1-2\delta$, $$W_1(\mathcal{L}_{K_{\epsilon,\eta}},\mathcal{L}^\star)\leq C_0\epsilon.$$   Here $K_{\epsilon,\eta}:=\left\lceil 2\max\left\{(\eta\epsilon/c)^{-1/\eta},\log_d(\epsilon/\max\{\text{diam}(\mathcal{S})\text{diam}(\mathcal{A}),c\})+1\right\}\right\rceil$ is the number of outer iterations, and the constant $C_0$ is independent of $\delta$, $\epsilon$ and $\eta$. \\
Moreover,  the total number of {\color{black}samples} $T=\sum_{k=0}^{K_{\epsilon,\eta} -1}T_{\mathcal{M}_{\mathcal{L}_k}}(\delta_k,\epsilon_k)$ is bounded by
\begin{equation}\label{Tbound-I}
T\leq \sum_{i=1}^m\dfrac{2^{\alpha_2^{(i)}}}{2\alpha_1^{(i)}+1}C_{\mathcal{M}}^{(i)}K_{\epsilon,\eta}^{2\alpha_1^{(i)}+1}(K_{\epsilon,\eta}/\delta)^{\alpha_3^{(i)}}\left(\log(K_{\epsilon,\eta}/\delta)\right)^{\alpha_2^{(i)}+\alpha_4^{(i)}}.
\end{equation}
\end{theorem}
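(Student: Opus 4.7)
The plan is to show that the sequence $\{\mathcal{L}_k\}$ produced by GMF-V satisfies a contractive recursion toward $\mathcal{L}^\star$ with an additive per-iteration error that can be driven to $O(\epsilon)$ by the schedule $(\epsilon_k,\delta_k)$ and the choice of $K_{\epsilon,\eta}$. By Theorem \ref{thm1_stat} the composed map $\Gamma(\mathcal{L})=\Gamma_2(\Gamma_1(\mathcal{L}),\mathcal{L})$ is a $d$-contraction under $W_1$ with $d:=d_1d_2+d_3<1$ and $\mathcal{L}^\star=\Gamma(\mathcal{L}^\star)$. My starting decomposition is
\[
W_1(\mathcal{L}_{k+1},\mathcal{L}^\star)\le W_1(\mathcal{L}_{k+1},\widetilde{\mathcal{L}}_{k+1})+W_1\bigl(\Gamma_2(\pi_k,\mathcal{L}_k),\Gamma_2(\Gamma_1(\mathcal{L}_k),\mathcal{L}_k)\bigr)+W_1\bigl(\Gamma(\mathcal{L}_k),\Gamma(\mathcal{L}^\star)\bigr),
\]
whose three summands are, respectively, the projection error, the policy-mismatch error introduced by using the smoothed, approximate $\pi_k$ in place of the exact optimal $\Gamma_1(\mathcal{L}_k)$, and a contractive term bounded by $d\,W_1(\mathcal{L}_k,\mathcal{L}^\star)$.

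Next I would control the per-iteration error sources. The projection onto the $\epsilon$-net contributes at most $\mathrm{diam}(\mathcal{S}\times\mathcal{A})\,\epsilon$ in $W_1$ (bounding $W_1$ by $\mathrm{diam}\cdot d_{TV}$), and Assumption \ref{population_assumption_stat} shows the policy-mismatch term is at most $d_2\,D(\pi_k,\Gamma_1(\mathcal{L}_k))$. The policy mismatch itself splits by the triangle inequality as
\[
\bigl\|f_{c,c'}(\hat Q^\star_k(s,\cdot))-f_{c,c'}(Q^\star_{\mathcal{L}_k}(s,\cdot))\bigr\|_2+\bigl\|f_{c,c'}(Q^\star_{\mathcal{L}_k}(s,\cdot))-\textbf{argmax-e}(Q^\star_{\mathcal{L}_k}(s,\cdot))\bigr\|_2.
\]
Condition~1 on $\mathcal{F}_{c,c'}$ (the $c$-Lipschitzness) together with the value-based guarantee bounds the first term by $c\sqrt{|\mathcal{A}|}\,\epsilon_k$ on an event of probability at least $1-2\delta_k$. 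For the second term, since $\mathcal{L}_k\in S_\epsilon$ the action gap of $Q^\star_{\mathcal{L}_k}$ is at least $\phi(\epsilon)$, so Condition~2 yields $\le 2|\mathcal{A}|\exp(-c'\phi(\epsilon))\le 2|\mathcal{A}|\epsilon$ thanks to $c'\ge\log(1/\epsilon)/\phi(\epsilon)$. Converting the $\ell_2$ bounds into $W_1$ on $\mathcal{A}$ absorbs constants depending on $|\mathcal{A}|$ and $\mathrm{diam}(\mathcal{A})$, producing a per-iteration error $E_k\le A(c\epsilon_k+\epsilon)$ for an absolute constant $A$.

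The quantitative core is to unroll the recursion into
\[
W_1(\mathcal{L}_K,\mathcal{L}^\star)\le d^K W_1(\mathcal{L}_0,\mathcal{L}^\star)+\sum_{k=0}^{K-1}d^{K-1-k}E_k.
\]
The initial-error term is $O(\epsilon)$ as soon as $K\ge \log_d(\epsilon/\mathrm{diam}(\mathcal{S}\times\mathcal{A}))$, which is precisely the second branch of $K_{\epsilon,\eta}$; the constant-$\epsilon$ portion of $E_k$ sums to $A\epsilon/(1-d)$. The delicate piece is $\sum_k d^{K-1-k}c\epsilon_k$ with $\epsilon_k=(k+1)^{-(1+\eta)}$, which I would split at $k=\lfloor K/2\rfloor$: the head is absorbed by the factor $d^{K/2}$ (exponential decay dominates the mild polynomial blow-up) while the tail is bounded by $c(K/2)^{-(1+\eta)}/(1-d)\le \eta\epsilon/(1-d)$, using the first branch of $K_{\epsilon,\eta}$. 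Together these give $W_1(\mathcal{L}_{K_{\epsilon,\eta}},\mathcal{L}^\star)\le C_0\epsilon$. I expect this mixed geometric-polynomial sum, combined with the three-way decomposition of the per-iteration policy error, to be the main obstacle.

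Finally, a union bound over the $K_{\epsilon,\eta}$ outer iterations with $\delta_k=\delta/K_{\epsilon,\eta}$ keeps the total failure probability at most $2\delta$, and the sample-complexity bound \eqref{Tbound-I} follows by substituting $(\epsilon_k,\delta_k)$ into \eqref{sample_typebound}, using $\sum_{k=0}^{K-1}(k+1)^{\alpha_1^{(i)}(1+\eta)}\le K_{\epsilon,\eta}^{2\alpha_1^{(i)}+1}/(2\alpha_1^{(i)}+1)$ for $\eta\le 1$ and $\log(1/\epsilon_k)\le (1+\eta)\log K_{\epsilon,\eta}\le 2\log(K_{\epsilon,\eta}/\delta)$, which produces the claimed factor $2^{\alpha_2^{(i)}}$. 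The probability union bound and the closing algebra are routine once the contractive analysis above is in place.
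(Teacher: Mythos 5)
Your proposal follows essentially the same route as the paper's own proof: the identical three-part decomposition (contraction of $\Gamma$, policy mismatch controlled via $d_2$ and the two conditions on $\mathcal{F}_{c,c'}$ with the action-gap $\phi(\epsilon)$ on the $\epsilon$-net and $c'\geq\log(1/\epsilon)/\phi(\epsilon)$, plus projection error), the same unrolled recursion with the mixed geometric--polynomial sum split at $K/2$, the same union bound with $\delta_k=\delta/K_{\epsilon,\eta}$, and the same substitution of $(\epsilon_k,\delta_k)$ into the value-based guarantee for \eqref{Tbound-I}. The only differences are cosmetic (e.g., bounding the tail of the sum by $\epsilon_{\lfloor K/2\rfloor}/(1-d)$ rather than by $\sum_{k\geq\lceil(K-1)/2\rceil}\epsilon_k$), so the argument is correct and matches the paper.
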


The proof of Theorem \ref{conv_AIQL} (in Section \ref{thm2-thm3}) depends on the Lipschitz continuity of the smoothing operator $f_{c,c'}$, the closeness between $f_{c,c'}$ and the \textbf{argmax-e} (Lemma~\ref{soft-arg-diff} in Section \ref{bcc-fcc}), and the complexity of \textit{Alg} provided by the value-based guarantee. 

\subsubsection{GMF-V-Q: GMF-V with {\color{black}Q-learning}}
As an example of the GMF-V algorithm, we describe algorithm GMF-V-Q, a Q-learning based GMF-V algorithm. 
{\color{black}
For an MDP $\mathcal{M}=(\mathcal{S},\mathcal{A},P,r,\gamma)$, the synchronous Q-learning algorithm approximates the value iteration by stochastic approximation.
At each step $l$, with state $s$ and action $a$, the system reaches state $s'$ according to the
controlled dynamics, and the Q-function approximation $Q_l$ is updated by
\begin{equation}\label{q-update}
\hat{Q}^{l+1}(s,a)= (1-\beta_l)\hat{Q}^l(s,a)+\beta_l\left[r(s,a)+\gamma\max_{\bar{a}}\hat{Q}^l(s',\bar{a})\right],\quad \forall s\in\mathcal{S},\,a\in\mathcal{A},
\end{equation}
where $\hat{Q}^0(s,a)=C$ for some constant $C\in\mathbb{R}$ for any $s\in\mathcal{S}$ and $a\in\mathcal{A}$, and the step size $\beta_l$ can  be chosen as (\citet{Q-rate})
\begin{equation}\label{q-step-size}
\beta_l=|l+1|^{-h},
\end{equation}
with $h\in(1/2,1)$. 
}

The corresponding {\color{black}synchronous} Q-learning based algorithm with the standard \textbf{softmax} operator is GMF-V-Q (Algorithm \ref{AIQL_MFG}), and will be used in the experiment (Section \ref{experiments}).

\begin{algorithm}[h]
  \caption{\textbf{Q-learning for GMFGs (GMF-V-Q)}}
  \label{AIQL_MFG}
\begin{algorithmic}[1]
  \STATE \textbf{Input}: Initial $\mathcal{L}_0$, {\color{black}$\epsilon$-net $S_{\epsilon}$, tolerances $\epsilon_k,~\delta_k>0$, $k=0,1,\dots$.} 
 \FOR {$k=0, 1, \cdots$}
  \STATE 
  \begin{addmargin}[1em]{0em}
  Perform {\color{black} synchronous} Q-learning with stepsizes \eqref{q-step-size} for {\color{black}$T_k=T_{\mathcal{M}_{\mathcal{L}_k}}(\epsilon_k,\delta_k)$} iterations to find the approximate Q-function {\color{black}$\hat{Q}_k^\star=\hat{Q}^{T_k}$} of the MDP {\color{black}$\mathcal{M}_{\mathcal{L}_k}$}. 
  \end{addmargin}
  \STATE 
  \begin{addmargin}[1em]{0em}
  Compute $\pi_k\in\Pi$ with $\pi_k(s)=\textbf{softmax}_c(\hat{Q}^\star_k(s,\cdot))$.
  \end{addmargin}
  \STATE 
  \begin{addmargin}[1em]{0em}
  Sample $s\sim \mu_k$ ($\mu_k$ is the population state marginal of $\mathcal{L}_k$), obtain $\tilde{\mathcal{L}}_{k+1}$ from $\mathcal{G}(s,\pi_k,\mathcal{L}_k)$.
  \end{addmargin}
  \STATE
  \begin{addmargin}[1em]{0em} 
  Find $\mathcal{L}_{k+1}=\textbf{Proj}_{S_{\epsilon}}(\tilde{\mathcal{L}}_{k+1})$
  \end{addmargin}
\ENDFOR
\end{algorithmic}
\end{algorithm}

{\color{black}
Let us first recall the following sample complexity result for synchronous Q-learning method.
\begin{lemma}[\citet{Q-rate}: sample complexity of synchronous Q-learning]\label{Q-finite-bd} For an MDP, say $\mathcal{M}=(\mathcal{S},\mathcal{A},P,r,\gamma)$, suppose that the Q-learning algorithm takes step-sizes \eqref{q-step-size}.
Then $\|{\color{black}\hat{Q}^{T_{\mathcal{M}}(\delta,\epsilon)}}-Q_{\mathcal{M}}^\star\|_{\infty}\leq \epsilon$  with probability at least $1-2\delta$. Here {\color{black}$\hat{Q}^T$} is the $T$-th update in the  Q-learning updates \eqref{q-update}, $Q_{\mathcal{M}}^\star$ is the (optimal) Q-function, and 
\[
T_{\mathcal{M}}(\epsilon,\delta) = \Omega\left(\left(\frac{V_{\max}^2\log\left(\frac{|\mathcal{S}||\mathcal{A}|V_{\max}}{\delta\beta\epsilon}\right)}{\beta^2\epsilon^2}\right)^{\frac{1}{h}} + \left(\frac{1}{\beta}\log\frac{V_{\max}}{\epsilon}\right)^{\frac{1}{1-h}}\right),
\]

where $\beta=(1-\gamma)/2$, $V_{\max}=R_{\max}/(1-\gamma)$, and $R_{\max}$ {\color{black} is such that a.s.
$0\leq {\color{black}r(s,a)}\leq R_{\max}$. 
}
\end{lemma}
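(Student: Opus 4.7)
The plan is to follow the classical two-component error decomposition for stochastic approximation of fixed points, tailored to the Bellman optimality operator. Let $T$ denote the Bellman optimality operator, i.e.\ $(TQ)(s,a) := \mathbb{E}[r(s,a)] + \gamma \sum_{s'} P(s'\mid s,a)\max_{a'}Q(s',a')$. The starting point is that $T$ is a $\gamma$-contraction in $\|\cdot\|_\infty$ with unique fixed point $Q^\star_{\mathcal{M}}$. Writing $e^l := \hat Q^l - Q^\star_{\mathcal{M}}$, the update \eqref{q-update} can be rearranged as
\[
e^{l+1}(s,a) = (1-\beta_l)\,e^l(s,a) + \beta_l\bigl[(T\hat Q^l)(s,a) - Q^\star_{\mathcal{M}}(s,a)\bigr] + \beta_l\,\eta_l(s,a),
\]
where $\eta_l(s,a) := r(s,a) + \gamma\max_{a'}\hat Q^l(s',a') - (T\hat Q^l)(s,a)$ is a martingale-difference noise with $|\eta_l| \le 2V_{\max}$, and the middle term is bounded in sup-norm by $\gamma\|e^l\|_\infty$ by the contraction property.

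The next step is to iterate the recursion to obtain, for any block of indices $[L_0,L_1]$,
\[
\|e^{L_1+1}\|_\infty \le \Bigl(\prod_{l=L_0}^{L_1}(1-(1-\gamma)\beta_l)\Bigr)\|e^{L_0}\|_\infty + \underbrace{\Bigl\|\sum_{l=L_0}^{L_1}\beta_l\,\Pi_{l+1}^{L_1}(1-\beta_\cdot)\,\eta_l\Bigr\|_\infty}_{\text{stochastic term}},
\]
(with the product notation abbreviated). The polynomial step size $\beta_l = (l+1)^{-h}$ with $h\in(1/2,1)$ is the canonical choice that balances two competing effects: (i) the deterministic contraction product decays like $\exp(-(1-\gamma)\sum\beta_l)$, which translates the initial-error bound $V_{\max}$ down to $\epsilon$ after roughly $\bigl((\tfrac{1}{\beta}\log\tfrac{V_{\max}}{\epsilon})^{1/(1-h)}\bigr)$ iterations --- this yields the second term in the complexity bound; (ii) the stochastic term, after a martingale concentration (Azuma--Hoeffding) argument applied coordinate-wise and a union bound over $|\mathcal{S}||\mathcal{A}|$ pairs, has magnitude of order $V_{\max}\sqrt{\sum \beta_l^2\,\Pi(\cdot)^2\log(|\mathcal{S}||\mathcal{A}|/\delta)}$, which dictates the sample budget $(V_{\max}^2\log(\cdot)/(\beta^2\epsilon^2))^{1/h}$ --- this produces the first term.

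From here the proof proceeds by a telescoping / phased argument: partition the iteration axis into geometrically spaced epochs of length scaling with $\epsilon^{-1/h}$, and show inductively that at the end of each epoch $\|e^l\|_\infty$ has contracted by a constant factor with high probability, until it reaches $\epsilon$. Summing the failure probabilities over $O(\log(V_{\max}/\epsilon))$ epochs and absorbing the logarithmic factor into the $\log(1/\delta)$ term yields the overall $1-2\delta$ guarantee. The main obstacle is the careful handling of the stochastic term: unlike the constant step-size case, the variance weights $\beta_l^2 \prod_{j>l}(1-\beta_j)^2$ require precise estimates and the epoch lengths must be tuned so that the martingale fluctuation within each epoch is of the same order $\epsilon$ as the residual bias. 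This is exactly the nontrivial piece carried out in \cite{Q-rate}, and rather than redoing that calculation we invoke their result; the purpose of the decomposition above is to make transparent how the two summands in $T_{\mathcal{M}}(\epsilon,\delta)$ arise from the bias-versus-variance trade-off governed by $h$.
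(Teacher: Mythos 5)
Your proposal is consistent with how this result is actually established: the paper itself offers no proof of Lemma \ref{Q-finite-bd} but simply imports it from \cite{Q-rate}, and you likewise defer the hard quantitative step to that reference, while your sketch of the bias--variance decomposition (contraction of the Bellman operator versus martingale noise, polynomial step sizes $\beta_l=(l+1)^{-h}$, phased/epoch analysis with Azuma--Hoeffding and a union bound over $|\mathcal{S}||\mathcal{A}|$ pairs) faithfully mirrors the argument in \cite{Q-rate} and correctly explains the origin of the two terms with exponents $1/h$ and $1/(1-h)$. So the proposal is correct and takes essentially the same route as the paper.
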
 
}


{\color{black}
This lemma implies immediately the value-based guarantee (as in Definition \ref{value_based_gua}) and the convergence for GMF-V-Q. Similar results can be established for asynchronous Q-learning method, as shown in Appendix \ref{asyn-q}.
\begin{corollary}
The synchronous Q-learning algorithm with appropriate choices of step-sizes (\textit{cf}. \eqref{q-step-size}) satisfies the value-based guarantee with parameters $\{\tilde{C}_{\mathcal{M}}^{(i)},\alpha_1^{(i)},\alpha_2^{(i)},\alpha_3^{(i)},\alpha_4^{(i)}\}_{i=1}^{3}$, where $C_{\mathcal{M}}^{(i)}(i=1,2,3)$ are constants depending on  $|\mathcal{S}|,|\mathcal{A}|,V_{\max},\beta$ and $h$, and
\[
\begin{split}
&\text{$\alpha_1
^{(1)}=2/h$, $\alpha_2^{(1)}=1/h$, $\alpha_3^{(1)}=\alpha_4^{(1)}=0$;}\\
&\text{$\alpha_1
^{(2)}=2/h$, $\alpha_2^{(2)}=\alpha_3^{(2)}=0$ and $\alpha_4^{(2)}=1/h$;}\\
&\text{$\alpha_1
^{(3)}=0$, $\alpha_2^{(3)}=1/(1-h)$, $\alpha_3^{(3)}=0$ and $\alpha_4^{(3)}=0$.}
\end{split}
\]
In addition, assume the same assumptions as Theorem \ref{thm1_stat}, 
then for Algorithm \ref{AIQL_MFG} with synchronous Q-learning method, with probability at least $1-2\delta$, $W_1(\mathcal{L}_{K_{\epsilon,\eta}},\mathcal{L}^\star)\leq C_0\epsilon$, where $K_{\epsilon,\eta}$ is defined as in Theorem \ref{conv_AIQL}.
And the total number of samples $T=\sum_{k=0}^{K_{\epsilon,\eta} -1}T_{\mathcal{M}_{\mathcal{L}_k}}(\epsilon_k,\delta_k)$ is bounded by
\[
T\leq O\left(K_{\epsilon,\eta}^{\frac{4}{h}+1}\left(\log\frac{K_{\epsilon,\eta}}{\delta}\right)^{\frac{1}{h}}+\left(\log \frac{K_{\epsilon,\eta}}{\delta}\right)^{\frac{1}{1-h}}\right).
\]
\end{corollary}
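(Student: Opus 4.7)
The plan is to derive the corollary by (i) matching the Q-learning sample-complexity bound from Lemma~\ref{Q-finite-bd} to the canonical form of Definition~\ref{value_based_gua}, (ii) invoking Theorem~\ref{conv_AIQL} as a black box for convergence in $W_1$, and (iii) plugging the parameters into the sample-count bound \eqref{Tbound-I}.

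For step (i), start from the Q-learning rate
\[
T_{\mathcal{M}}(\epsilon,\delta)=\Omega\!\left(\Big(\tfrac{V_{\max}^{2}}{\beta^{2}\epsilon^{2}}\log\tfrac{|\mathcal{S}||\mathcal{A}|V_{\max}}{\delta\beta\epsilon}\Big)^{1/h}+\Big(\tfrac{1}{\beta}\log\tfrac{V_{\max}}{\epsilon}\Big)^{1/(1-h)}\right).
\]
Split the coupled log as $\log\tfrac{|\mathcal{S}||\mathcal{A}|V_{\max}}{\delta\beta\epsilon}=\log\tfrac{1}{\epsilon}+\log\tfrac{1}{\delta}+\log\tfrac{|\mathcal{S}||\mathcal{A}|V_{\max}}{\beta}$. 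Because $h\in(1/2,1)$ gives $1/h>1$, the power-mean inequality $(a+b+c)^{1/h}\le 3^{1/h-1}(a^{1/h}+b^{1/h}+c^{1/h})$ separates the first summand into three pieces of the form $(1/\epsilon)^{2/h}(\log 1/\epsilon)^{1/h}$, $(1/\epsilon)^{2/h}(\log 1/\delta)^{1/h}$ and $(1/\epsilon)^{2/h}$ (the last being dominated by the first two, so absorbable). A similar split $\log(V_{\max}/\epsilon)=\log V_{\max}+\log(1/\epsilon)$ reduces the second summand to a single $(\log 1/\epsilon)^{1/(1-h)}$ term up to constants. Absorbing all $V_{\max}$, $\beta$, $|\mathcal{S}|$, $|\mathcal{A}|$, $h$ dependencies into constants $\tilde{C}_{\mathcal{M}}^{(i)}$ yields exactly the three exponent triples claimed: $(2/h,1/h,0,0)$, $(2/h,0,0,1/h)$ and $(0,1/(1-h),0,0)$, establishing the value-based guarantee.

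For step (ii), note that the hypotheses of Theorem~\ref{conv_AIQL} are already in place: Assumptions~\ref{policy_assumption_stat}--\ref{population_assumption_stat} and $d_{1}d_{2}+d_{3}<1$ come from inheriting the assumptions of Theorem~\ref{thm1_stat}, while the smoothing operator $\textbf{softmax}_{c}$ is a member of $\mathcal{B}_{c,c'}\subseteq\mathcal{F}_{c,c'}$ (by Lemmas~\ref{softmax-lip} and \ref{soft-arg-diff}). Hence Theorem~\ref{conv_AIQL} applies verbatim, with the value-based guarantee from step (i), yielding $W_{1}(\mathcal{L}_{K_{\epsilon,\eta}},\mathcal{L}^{\star})\le C_{0}\epsilon$ with probability at least $1-2\delta$.

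For step (iii), substitute the three parameter triples into \eqref{Tbound-I}. Writing $K=K_{\epsilon,\eta}$ for brevity, the contributions are (a) $K^{4/h+1}(\log(K/\delta))^{1/h}$ from the first triple, (b) the same order $K^{4/h+1}(\log(K/\delta))^{1/h}$ from the second, and (c) $K\cdot(\log(K/\delta))^{1/(1-h)}$ from the third. Summing and using $K\le K^{4/h+1}$ to absorb the polynomial factor in the last term (or keeping it separate since the $1/(1-h)$-power of a logarithm is the distinctive factor) recovers the stated bound
\[
T\le O\!\left(K_{\epsilon,\eta}^{\,4/h+1}\bigl(\log(K_{\epsilon,\eta}/\delta)\bigr)^{1/h}+\bigl(\log(K_{\epsilon,\eta}/\delta)\bigr)^{1/(1-h)}\right).
\]

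The only delicate point is step (i): one must be careful to handle the mixed $\log(1/(\delta\beta\epsilon))$ factor so that the resulting summands fall cleanly into the product-form \eqref{sample_typebound}, and in particular to verify that the convexity inequality applies (which hinges on $1/h>1$, i.e.\ the restriction $h<1$ in the step-size choice \eqref{q-step-size}). The remaining steps are essentially bookkeeping on top of Theorem~\ref{conv_AIQL}.
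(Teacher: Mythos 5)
Your proposal is correct and follows essentially the same route the paper intends: the paper gives no separate proof of this corollary, treating it as an immediate consequence of Lemma \ref{Q-finite-bd} (decomposed into the canonical form of Definition \ref{value_based_gua} by splitting the logarithms, exactly as you do) combined with Theorem \ref{conv_AIQL} and the bound \eqref{Tbound-I}. The only loose point — that the third triple literally yields $K_{\epsilon,\eta}\bigl(\log(K_{\epsilon,\eta}/\delta)\bigr)^{1/(1-h)}$ rather than the stated $\bigl(\log(K_{\epsilon,\eta}/\delta)\bigr)^{1/(1-h)}$ — is a looseness already present in the paper's own statement, so your bookkeeping matches its level of rigor.
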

}

\subsection{Policy-based algorithms}\label{policy_based} In addition to algorithms with value-based guarantees (\textit{cf.} Definition \ref{value_based_gua}), there are also numerous algorithms with \textit{policy-based guarantees}.
\begin{definition}[Policy-based Guarantee]\label{policy_based_gua}
For an arbitrary MDP $\mathcal{M}$, we say that an algorithm has a \textit{policy-based guarantee}  with parameters $\{C_{\mathcal{M}}^{(i)},\alpha_1^{(i)},\alpha_2^{(i)},\alpha_3^{(i)},\alpha_4^{(i)}\}_{i=1}^m$, if for any $\epsilon,~\delta>0$, after obtaining 
\begin{equation}\label{sample_typebound} 
T_{\mathcal{M}}(\epsilon,\delta)=\sum_{i=1}^mC_{\mathcal{M}}^{(i)}\left(\frac{1}{\epsilon}\right)^{\alpha_1^{(i)}}\left(\log\frac{1}{\epsilon}\right)^{\alpha_2^{(i)}}\left(\frac{1}{\delta}\right)^{\alpha_3^{(i)}}\left(\log\frac{1}{\delta}\right)^{\alpha_4^{(i)}}
\end{equation} samples from the simulator oracle $\mathcal{G}$, with probability at least $1-2\delta$, it outputs an approximate policy $\pi_{T_{\mathcal{M}}(\epsilon,\delta)}$, which satisfies $V_{\mathcal{M}}^\star(s)-V_{\mathcal{M}}^{\pi_{T_{\mathcal{M}}(\epsilon,\delta)}}(s)\leq \epsilon$, $\forall\,s\in\mathcal{S}$. 
\end{definition}

\subsubsection{GMF-P}
Before we present policy-based RL algorithms, let us first establish a connection between policy-based and value-based guarantees. 


{\color{black} To start, take any policy $\pi\in\Pi$, consider the following synchronous temporal difference (TD) iterations:
\begin{equation}\label{TD_update}
    \tilde{Q}_{\pi}^{l+1}(s,a) = (1-\beta_l)\tilde{Q}_{\pi}^l(s,a)+\beta_l\left[r(s,a)+\gamma \tilde{Q}_{\pi}^l(s',a')\right],\quad \forall s\in\mathcal{S},\,a\in\mathcal{A},
\end{equation}
where $a'\sim \pi(s')$, $\tilde{Q}_{\pi}^0(s,a)=C$ for some constant $C\in\mathbb{R}$ and any $s\in\mathcal{S}$ and $a\in\mathcal{A}$, and the step size $\beta_l=(l+1)^{-h}$ for some $h\in(1/2,1)$. 


Then we have
\begin{lemma}\label{conv_2to1}
Suppose that the algorithm \textit{Alg} satisfies a policy-based guarantee with parameters $\{C_{\mathcal{M}}^{(i)},\alpha_1^{(i)},\alpha_2^{(i)},\alpha_3^{(i)},\alpha_4^{(i)}\}_{i=1}^m$.
Let $\tilde{Q}_{\pi}^l$ be defined by \eqref{TD_update}. Then for any $\delta\in(0,1)$ and $\epsilon>0$, 
with probability at least $1-2\delta$,
$\left\|\tilde{Q}_{\pi_{T_{\mathcal{M}}(\epsilon,\delta/2)}}^l-Q_{\mathcal{M}}^\star\right\|_{\infty}\leq \epsilon$ if 
\begin{equation}\label{TD_l_lb}
l=\Omega\left(\left(\frac{V_{\max}\log\left(\frac{|\mathcal{S}||\mathcal{A}|V_{\max}}{\delta\beta^2\epsilon}\right)}{\beta^4\epsilon^2}\right)^{1/h}+\left(\frac{1}{\beta}\log\frac{V_{\max}}{\beta\epsilon}\right)^{1/(1-h)}\right),
\end{equation}
where $V_{\max}=R_{\max}/(1-\gamma)$ and  $\beta=(1-\gamma)/2$.

Consequently, the algorithm \textit{Alg} (combined with TD updates \eqref{TD_update}) also has a value-based guarantee with parameters $\{\tilde{C}_{\mathcal{M}}^{(i)},\alpha_1^{(i)},\alpha_2^{(i)},\alpha_3^{(i)},\alpha_4^{(i)}\}_{i=1}^{m+3}$, where $\tilde{C}_{\mathcal{M}}^{(i)}$ is some constant multiple of $C_{\mathcal{M}}^{(i)}$ ($i=1,\dots,m$), $\tilde{C}_{\mathcal{M}}^{(m+i)}$ ($i=1,2,3$) are constants depending on $V_{\max}$, $|\mathcal{S}|$, $|\mathcal{A}|$, $\beta$ and $h$, and we have 
\begin{equation}\label{param_policy2value}
\begin{split}
&\text{$\alpha_1
^{(m+1)}=2/h$, $\alpha_2^{(m+1)}=1/h$, $\alpha_3^{(m+1)}=\alpha_4^{(m+1)}=0$;}\\
&\text{$\alpha_1
^{(m+2)}=2/h$, $\alpha_2^{(m+2)}=\alpha_3^{(m+2)}=0$ and $\alpha_4^{(m+2)}=1/h$;}\\
&\text{$\alpha_1
^{(m+3)}=0$, $\alpha_2^{(m+3)}=1/(1-h)$, $\alpha_3^{(m+3)}=0$ and $\alpha_4^{(m+3)}=0$.}
\end{split}
\end{equation}
\end{lemma}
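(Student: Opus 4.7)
The plan is to decompose the target error by triangle inequality into a policy-evaluation error and a policy-optimality error. Writing $\pi:=\pi_{T_\mathcal{M}(\epsilon,\delta/2)}$ for the output of \textit{Alg} with input tolerance $\epsilon$ and confidence $\delta/2$, one has
\[
\|\tilde Q^l_\pi - Q^\star_\mathcal{M}\|_\infty \;\leq\; \|\tilde Q^l_\pi - Q^\pi_\mathcal{M}\|_\infty + \|Q^\pi_\mathcal{M} - Q^\star_\mathcal{M}\|_\infty.
\]
I would control the second summand using the policy-based guarantee of \textit{Alg}, control the first summand by a finite-time analysis of the synchronous TD iteration \eqref{TD_update} that parallels Lemma \ref{Q-finite-bd}, and then take a union bound over the two $\delta/2$-failure events to obtain the advertised $1-2\delta$ confidence.

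For the second summand, the policy-based guarantee delivers $V^\star_\mathcal{M}(s)-V^\pi_\mathcal{M}(s)\leq\epsilon$ for all $s\in\mathcal{S}$ with probability at least $1-\delta$. Since $Q^\star_\mathcal{M}$ and $Q^\pi_\mathcal{M}$ share the same one-step reward, a one-step Bellman expansion gives
\[
Q^\star_\mathcal{M}(s,a)-Q^\pi_\mathcal{M}(s,a) \;=\; \gamma \sum_{s'}P(s'|s,a)\bigl(V^\star_\mathcal{M}(s')-V^\pi_\mathcal{M}(s')\bigr),
\]
which together with $V^\star\geq V^\pi$ immediately yields $\|Q^\star_\mathcal{M}-Q^\pi_\mathcal{M}\|_\infty\leq\gamma\epsilon$.

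For the first summand, I would mirror the Wainwright-style stochastic approximation argument used in Lemma \ref{Q-finite-bd}. Introduce the TD Bellman operator $\mathcal{T}_\pi Q(s,a):=\mathbb{E}[r(s,a)] + \gamma\sum_{s'}P(s'|s,a)\sum_{a'}\pi(a'|s')\,Q(s',a')$, which is a $\gamma$-contraction in $\|\cdot\|_\infty$ with unique fixed point $Q^\pi_\mathcal{M}$. The update \eqref{TD_update} is exactly the synchronous Robbins--Monro iteration for $\mathcal{T}_\pi$, structurally identical to synchronous Q-learning but with $\max_{a'}$ replaced by the linear average under $\pi(\cdot|s')$; the martingale-difference noise remains bounded by $V_{\max}$ and the step-size schedule $\beta_l=(l+1)^{-h}$ satisfies the same admissibility conditions. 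Hence the cone-contraction and concentration bounds of Lemma \ref{Q-finite-bd} apply verbatim to yield a sample-complexity bound of the same functional form with tolerance $\epsilon'$ and confidence $\delta'$. Setting $\epsilon'=(1-\gamma)\epsilon=2\beta\epsilon$ and $\delta'=\delta/2$ balances the two error components so the total is at most $\gamma\epsilon+(1-\gamma)\epsilon=\epsilon$, and the substitution $\epsilon\mapsto 2\beta\epsilon$ in the Q-learning rate produces exactly the extra $\beta^{-2}$ factor that upgrades the denominator from $\beta^2$ in Lemma \ref{Q-finite-bd} to $\beta^4$ in \eqref{TD_l_lb}.

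To obtain the consequent value-based guarantee, observe that each synchronous TD sweep consumes $|\mathcal{S}||\mathcal{A}|$ simulator samples, so the TD contribution to the total sample budget is $|\mathcal{S}||\mathcal{A}|\cdot l$ with $l$ as in \eqref{TD_l_lb}. Expanding this product along its three natural summands -- the $(1/\epsilon)^{2/h}(\log(1/\epsilon))^{1/h}$ piece, the $(1/\epsilon)^{2/h}(\log(1/\delta))^{1/h}$ piece (both coming from separating the two logs in the first term of $l$), and the $(\log(1/\epsilon))^{1/(1-h)}$ piece (from the second term of $l$) -- yields exactly the three additional parameter tuples in \eqref{param_policy2value}, with the constants $\tilde C^{(m+i)}_\mathcal{M}$ absorbing $V_{\max}$, $|\mathcal{S}|$, $|\mathcal{A}|$, $\beta$, and $h$. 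Combining with the $m$ tuples inherited from \textit{Alg}'s policy-based guarantee (with constants inflated by factors of at most $2^{\alpha_3^{(i)}+\alpha_4^{(i)}}$ to account for $\delta\mapsto\delta/2$) completes the proof. I expect the main obstacle to be verifying that the cone-contraction argument of Lemma \ref{Q-finite-bd} transfers faithfully from the Q-learning $\max$-update to the TD expectation-update while recovering the precise constants and logarithmic factors of \eqref{TD_l_lb}; this is largely mechanical but requires careful bookkeeping of the noise parameter and step-size dependence.
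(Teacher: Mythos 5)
Your proposal is correct and follows essentially the same route as the paper: the paper likewise splits $\|\tilde Q^l_\pi-Q^\star\|_\infty$ into the TD policy-evaluation error (bounded by $(1-\gamma)\epsilon$ via a synchronous-TD analogue of Lemma \ref{Q-finite-bd}, stated as Lemma \ref{TD_convergence}) and the gap $\|Q^\pi-Q^\star\|_\infty\le\gamma\epsilon$ obtained from the policy-based guarantee through the same one-step Bellman expansion, then counts $T(\epsilon,\delta/2)+|\mathcal{S}||\mathcal{A}|\,l$ samples and reads off the three extra parameter tuples exactly as you do. The only caveat, which you already flag, is that the transfer of the Q-learning rate to the TD update (and its precise $V_{\max}$/$\beta$ dependence in \eqref{TD_l_lb}) is asserted rather than re-derived — but the paper itself omits this adaptation as well.
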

}

The above lemma indicates that any algorithm with a policy-based guarantee also satisfies a value-based guarantee with similar parameters {\color{black}(when combined with the TD updates)}. The policy-based algorithm GMF-P (Algorithm \ref{GMF-P}) makes use of Lemma \ref{conv_2to1} {\color{black}to select the hyper-parameter  $l$ so that the resulting
{\color{black}$\tilde{Q}_{\pi_{T_{\mathcal{M}}(\epsilon,\delta/2)}}^l$} 
forms a good value-based certificate.  

}

{\color{black}

{\color{black}
\begin{algorithm}[h]
  \caption{\textbf{GMF-P(\textit{Alg}, ~$f_{c,c'}$)}}
  \label{GMF-P}
\begin{algorithmic}[1]
  \STATE \textbf{Input}: Initial $\mathcal{L}_0$, $\epsilon$-net $S_{\epsilon}$, temperatures $c,~c'>0$, tolerances $\epsilon_k,~\delta_k>0$, $k=0,1,\dots$. 
 \FOR {$k=0, 1, \cdots$}
  \STATE 
  \begin{addmargin}[1em]{0em}
  Apply \textit{Alg} to  find the approximate policy  $\hat{\pi}_k=\pi_{T_k}$  of the MDP $\mathcal{M}_k:=\mathcal{M}_{\mathcal{L}_k}$, where $T_k=T_{\mathcal{M}_{k}}(\epsilon_k,\delta_k/2)$. 
 \end{addmargin}\vspace{0.3em}
  \STATE 
  \begin{addmargin}[1em]{0em}
  Compute $\tilde{Q}_{\hat{\pi}_k}^{l_k}$ using TD updates \eqref{TD_update} for MDP $\mathcal{M}_k$, with $l_k$ satisfying \eqref{TD_l_lb} (with $\epsilon$ and $\delta$ replaced by $\epsilon_k$ and $\delta_k/2$, respectively).   
  \end{addmargin}
  \STATE 
    \begin{addmargin}[1em]{0em}
  Compute  $\pi_k(s)=f_{c,c'}(\tilde{Q}_{\hat{\pi}_k}^{l_k}(s,\cdot))$.
  \end{addmargin}
  \STATE 
    \begin{addmargin}[1em]{0em}
  Sample $s\sim \mu_k$ ($\mu_k$ is the population state marginal of $\mathcal{L}_k$), obtain $\tilde{\mathcal{L}}_{k+1}$ from $\mathcal{G}(s,\pi_k,\mathcal{L}_k)$.
  \end{addmargin}
  \STATE 
    \begin{addmargin}[1em]{0em}
  Find $\mathcal{L}_{k+1}=\textbf{Proj}_{S_{\epsilon}}(\tilde{\mathcal{L}}_{k+1})$
  \end{addmargin}
\ENDFOR
\end{algorithmic}
\end{algorithm}
}


We next present the convergence property for the GMF-P algorithm by combining the proofs of Lemma \ref{conv_2to1} and Theorem \ref{conv_AIQL}. 

\begin{theorem}[Convergence and complexity of GMF-P]\label{conv_AIQL-II}
Assume the same assumptions as in Theorem \ref{thm1_stat}, and in addition that $f_{c,c'}\subseteq \mathcal{F}_{c,c'}$. Suppose that \textit{Alg} has a policy-based guarantee with parameters 
\[
\{C_{\mathcal{M}}^{(i)},\alpha_1^{(i)},\alpha_2^{(i)},\alpha_3^{(i)},\alpha_4^{(i)}\}_{i=1}^m.
\] 
Then for any $\epsilon,~\delta>0$, set $\delta_k=\delta/ K_{\epsilon,\eta}$, $\epsilon_k=(k+1)^{-(1+\eta)}$ for some $\eta\in(0,1]$ $(k=0,\dots,K_{\epsilon,\eta}-1)$, and  {\color{black}$c\geq c'\geq \frac{\log(1/\epsilon)}{\phi(\epsilon)}$},
with probability at least $1-2\delta$, $$W_1(\mathcal{L}_{K_{\epsilon,\eta}},\mathcal{L}^\star)\leq C_0\epsilon.$$   Here $K_{\epsilon,\eta}:=\left\lceil 2\max\left\{(\eta\epsilon/c)^{-1/\eta},\log_d(\epsilon/\max\{\text{diam}(\mathcal{S})\text{diam}(\mathcal{A}),c\})+1\right\}\right\rceil$ is the number of outer iterations, and the constant $C_0$ is independent of $\delta$, $\epsilon$ and $\eta$. \\
Moreover,  the total number of {\color{black}samples} $T=\sum_{k=0}^{K_{\epsilon,\eta} -1}T_{\mathcal{M}_{\mathcal{L}_k}}(\delta_k,\epsilon_k)$ is bounded by
\begin{equation}\label{Tbound-II}
T\leq \sum_{i=1}^{{\color{black}m+3}}\dfrac{2^{\alpha_2^{(i)}}}{2\alpha_1^{(i)}+1}\tilde{C}_{\mathcal{M}}^{(i)}K_{\epsilon,\eta}^{2\alpha_1^{(i)}+1}(K_{\epsilon,\eta}/\delta)^{\alpha_3^{(i)}}\left(\log(K_{\epsilon,\eta}/\delta)\right)^{\alpha_2^{(i)}+\alpha_4^{(i)}},
\end{equation}
 where the parameters $\{\tilde{C}_{\mathcal{M}}^{(i)},\alpha_1^{(i)},\alpha_2^{(i)},\alpha_3^{(i)},\alpha_4^{(i)}\}_{i=1}^{m+3}$ are defined in Lemma \ref{conv_2to1}.
\end{theorem}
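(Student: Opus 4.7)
The plan is to reduce Theorem \ref{conv_AIQL-II} to Theorem \ref{conv_AIQL} by invoking Lemma \ref{conv_2to1}, which tells us that any policy-based algorithm, when chained with the synchronous TD iterations \eqref{TD_update}, satisfies a value-based guarantee with parameters $\{\tilde{C}_{\mathcal{M}}^{(i)},\alpha_1^{(i)},\alpha_2^{(i)},\alpha_3^{(i)},\alpha_4^{(i)}\}_{i=1}^{m+3}$, where the extra three components \eqref{param_policy2value} come from the TD sample complexity. The key observation is that Steps 3--4 of Algorithm \ref{GMF-P} are precisely an implementation of such a chain, and produce $\tilde{Q}_{\hat{\pi}_k}^{l_k}$, which therefore plays exactly the role that the value-based certificate $\hat Q_k^\star$ plays in GMF-V. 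Once this identification is made, Steps 5--7 of GMF-P coincide verbatim with Steps 4--6 of GMF-V, and the proof of Theorem \ref{conv_AIQL} applies with no change.

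Concretely, I would first verify the inner-iteration claim: for each outer step $k$, applying \textit{Alg} with tolerance $(\epsilon_k,\delta_k/2)$ yields $\hat\pi_k$ with $V_{\mathcal{M}_k}^\star-V_{\mathcal{M}_k}^{\hat\pi_k}\le\epsilon_k$ with probability at least $1-\delta_k$; then running TD \eqref{TD_update} for $l_k$ iterations (as prescribed by \eqref{TD_l_lb}) on $\mathcal{M}_k$ against this $\hat\pi_k$ produces $\tilde Q_{\hat\pi_k}^{l_k}$ with $\|\tilde Q_{\hat\pi_k}^{l_k}-Q_{\mathcal{M}_k}^{\star}\|_\infty\le\epsilon_k$ with probability at least $1-\delta_k$ conditional on the first event. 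A union bound gives a joint failure probability of at most $\delta_k$, identical to the single-stage failure budget used in the proof of Theorem \ref{conv_AIQL}.

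With this equivalence in place, the rest is mechanical. The choice $\delta_k=\delta/K_{\epsilon,\eta}$ together with a union bound over the $K_{\epsilon,\eta}$ outer iterations controls the overall failure probability by $\delta$ (inflated to $2\delta$ to absorb the additional sampling/simulator randomness, exactly as in Theorem \ref{conv_AIQL}). The contraction analysis then proceeds identically: $\pi_k=f_{c,c'}(\tilde Q_{\hat\pi_k}^{l_k})$ is $O(\epsilon_k)$-close to $\Gamma_1(\mathcal{L}_k)$ by the Lipschitz and approximation properties defining $\mathcal{F}_{c,c'}$, combined with the action-gap lower bound $\phi(\epsilon)$ on the projection grid $S_\epsilon$; propagating this through $\Gamma_2$ and iterating the contraction $\Gamma=\Gamma_2\circ\Gamma_1$ (with modulus $d_1 d_2+d_3<1$) gives the stated $W_1(\mathcal{L}_{K_{\epsilon,\eta}},\mathcal{L}^\star)\le C_0\epsilon$ under the same choice of $K_{\epsilon,\eta}$. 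Finally, the sample complexity bound \eqref{Tbound-II} is obtained by plugging the parameters $\{\tilde C_{\mathcal{M}}^{(i)},\alpha_1^{(i)},\alpha_2^{(i)},\alpha_3^{(i)},\alpha_4^{(i)}\}_{i=1}^{m+3}$ into the summation carried out in Theorem \ref{conv_AIQL}; the integral-comparison step $\sum_{k=0}^{K-1}(k+1)^{(1+\eta)\alpha_1^{(i)}}\le K^{2\alpha_1^{(i)}+1}/(2\alpha_1^{(i)}+1)$ is unchanged.

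The only mildly delicate point is the bookkeeping of the two-stage inner failure probability noted above, plus verifying that the TD cost in Lemma \ref{conv_2to1}, which scales in $1/\epsilon$ with the \emph{same} exponent $2/h$ as the policy stage, does not dominate the rate of $K_{\epsilon,\eta}$ but merely contributes three additional terms of the same form to \eqref{Tbound-II}. Everything else is a direct transcription of the GMF-V proof with $\hat Q_k^\star$ replaced by $\tilde Q_{\hat\pi_k}^{l_k}$ and with the value-based parameter list enlarged from $m$ to $m+3$.
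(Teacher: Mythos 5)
Your proposal is correct and follows essentially the same route as the paper: it invokes Lemma \ref{conv_2to1} to turn the policy-based guarantee plus the TD updates \eqref{TD_update} into a value-based certificate $\tilde{Q}_{\hat{\pi}_k}^{l_k}$ with $\|\tilde{Q}_{\hat{\pi}_k}^{l_k}-Q_{\mathcal{L}_k}^\star\|_{\infty}\leq \epsilon_k$ holding with probability at least $1-2\delta_k$, then substitutes this for $\hat{Q}_k^\star$ in the contraction argument of Theorem \ref{conv_AIQL} and repeats the summation \eqref{Tbound_APP} with the enlarged parameter list $\{\tilde{C}_{\mathcal{M}}^{(i)},\alpha_1^{(i)},\alpha_2^{(i)},\alpha_3^{(i)},\alpha_4^{(i)}\}_{i=1}^{m+3}$, which is exactly the paper's proof. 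The only minor slip is the per-iteration failure bookkeeping (the two-stage union bound gives at most $2\delta_k$, not $\delta_k$, per outer step), but this matches the $1-2\delta$ overall guarantee and does not affect the conclusion.
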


}

{\color{black}

\subsubsection{GMF-P-TRPO: GMF-P with TRPO}
A special form of the GMF-P algorithm utilizes the trust region policy optimization (TRPO) algorithm \citet{schulman2015trust, SEM2019}. We call it GMF-P-TRPO.

{\color{black} Sample-based TRPO \citet{SEM2019} assumes access to a $\nu$-restart model. That is, it can only access sampled trajectories and restarts according to the distribution $\nu$. Here we pick $\nu$ such that  $C^{\pi^\star}:=\left\|\frac{d_{\text{Unif}_{\mathcal{S}}}^{\pi^\star}}{\nu}\right\|_{\infty}=\max_{s \in \mathcal{S}}\left| \frac{d_{\text{Unif}_{\mathcal{S}}}^{\pi^\star}(s)}{\nu(s)}\right|<\infty$, where $d_{\rho}^{\pi}=(1-\gamma)\rho (I-\gamma P^{\pi})^{-1}$ and  $\text{Unif}_{\mathcal{S}}$ is the uniform distribution on set $\mathcal{S}$. } Sample-based TRPO samples $M_0$ trajectories per
episode. { \color{black}The initial state $s_0$ at the beginning of each episode is sampled from $\nu$. In every trajectory $m$ ($m=1,2,\cdots,M_0$) of the $l$-th episode, it first samples $s_m \sim d_{\nu}^{\pi_l}$ and takes an action $a_m \sim \text{Unif}_{\mathcal{A}}$ where
$\text{Unif}_{\mathcal{A}}$ is the uniform distribution on the set $\mathcal{A}$.
}
Then, by following the current $\pi_l$, it estimates $Q^{\pi_l}(s_m,a_m)$ using a rollout. Denote this estimate as $\hat{Q}^{\pi_l}(s_m,a_m,m)$ and observe that it is (nearly) an unbiased estimator of $Q^{\pi_l}(s_m,a_m)$. We assume that each rollout runs sufficiently long so that the bias is sufficiently small. Sample-Based TRPO updates the policy at the end of the $l$-th episode, by the following proximal problem
\[
\pi_{l+1} \in \arg {\color{black}\max_{\pi \in \Delta_{\mathcal{A}}^{|\mathcal{S}|}}} \left\{\frac{1}{M_0} \sum_{m=1}^{M_0} \frac{1}{t_l(1-\gamma)}B_{w}(s_m;\pi,\pi_l) +\langle \hat{\nabla}V^{\pi_l}[m],\pi(s_m)-\pi_l(s_m)\rangle \right\},
\]
where the estimation of the gradient is
{\color{black}
\[
\hat{\nabla}V^{\pi_l}[m]:=\frac{1}{1-\gamma}|\mathcal{A}| \hat{Q}^{\pi_l}(s_m,\cdot,m)\circ\textbf{I}_{\{\cdot=a_m\}}.
\]
}
{\color{black} 
Given two policies $\pi_1$ and $\pi_2$, we denote their Bregman distance associated with a strongly convex function $w$ as
 $B_w(s;\pi_1,\pi_2) =B_{w}(\pi_1(s),\pi_2(s))$, where
 $B_{w}(x,y):=w(x)-w(y)-\langle \nabla w(y),x-y \rangle$ and $\pi_i(s)\in P(\mathcal{A})$  $(i=1,2)$. 
Denote $B_w(\pi_1,\pi_2)\in \mathbb{R}^{|\mathcal{S}|}$ as the corresponding state-wise vector. Here we consider two common cases for $w$: when $w(x) = \frac{1}{2}\|x\|_2^2$ is the Euclidean distance, $B_w(x,y)=\frac{1}{2}\|x-y\|_2^2$; when $w(x) = H(x)$ is the negative entropy, $B_w(x,y)=d_{\text{KL}}(x||y)$.} {\color{black}We refer to \cite[Section 6.2]{SEM2019} for more detailed discussion on Sample-based TRPO.}

The above guarantee follows from the sample complexity result below {\color{black}by specifying $\mu := \text{Unif}_{\mathcal{S}}$.} {\color{black}Notice that here for any $\mu\in\mathcal{P}(\mathcal{S})$, we define $V^\star(\mu):=\sum_{s\in\mathcal{S}}\mu(s) V^\star(s)$, and similarly $V^{\pi_k}(\mu):=\sum_{s\in\mathcal{S}}\mu(s) V^{\pi_k}(s)$.} 

The sample complexity of TRPO algorithm can be characterized as below.

\begin{lemma}[Theorem 5 in \citet{SEM2019}: sample complexity of TRPO]\label{sample_comp_trpo}
Let $\{\pi_l\}_{l \ge 0}$ be the sequence generated by Sample-Based TRPO, using $$M_0 \ge \Omega(\frac{|\mathcal{A}|^2 {\color{black}C^2}(|\mathcal{S}|\log|\mathcal{A}|+\log 1/\delta)}{(1-\gamma)^2\epsilon^2})$$ samples in each episode, with $t_l = \frac{(1-\gamma)}{C_{\omega,1}C\sqrt{l+1}}$.  Let $\{V^N_{\text{best}}\}_{N \ge 0}$ be the sequence of best achieved values,
$V^N_{\text{best}}(\mu):= \max_{l=0,1,\cdots,N} V^{\pi_l}(\mu)$, where $\mu\in\mathcal{P}(\mathcal{S})$. Then with probability greater than $1-\delta$ for every $\epsilon>0$, the following holds for all $N\geq 1$:
$$ V^\star(\mu)-V^N_{\text{best}}(\mu) \leq O\left(\frac{C_{\omega,1}C}{(1-\gamma)^2\sqrt{N}}+\frac{C^{\pi^\star}\epsilon}{(1-\gamma)^2}\right).$$

\end{lemma}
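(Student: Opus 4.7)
The plan is to treat Sample-Based TRPO as a stochastic mirror-descent scheme on the policy simplex $\Delta_{\mathcal{A}}^{|\mathcal{S}|}$, where the proximal regularizer is precisely the Bregman divergence $B_w$ and the ``gradient'' is the state-wise Q-function advantage. The starting point would therefore be to rewrite the one-step update
\[
\pi_{l+1}(s) \in \arg\max_{\pi(s)\in\mathcal{P}(\mathcal{A})}\bigl\{t_l(1-\gamma)\langle Q^{\pi_l}(s,\cdot),\pi(s)-\pi_l(s)\rangle - B_w(s;\pi,\pi_l)\bigr\}
\]
(its population version) and then invoke the standard mirror-descent three-point identity to get a per-iteration ``progress'' inequality of the form
\[
t_l(1-\gamma)\langle Q^{\pi_l}(s,\cdot),\pi^\star(s)-\pi_l(s)\rangle \le B_w(s;\pi^\star,\pi_l)-B_w(s;\pi^\star,\pi_{l+1}) + \tfrac{t_l^2}{2}\|Q^{\pi_l}(s,\cdot)\|_*^2,
\]
followed by the performance-difference lemma
$V^{\pi^\star}(\rho)-V^{\pi_l}(\rho) = \tfrac{1}{1-\gamma}\mathbb{E}_{s\sim d_\rho^{\pi^\star}}\langle Q^{\pi_l}(s,\cdot),\pi^\star(s)-\pi_l(s)\rangle$
to turn the left-hand side into a value gap. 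The re-weighting between $d_\rho^{\pi^\star}$ and the sampling distribution $\nu$ is where the coefficient $C^{\pi^\star}=\|d_{\mathrm{Unif}_{\mathcal{S}}}^{\pi^\star}/\nu\|_\infty$ enters.

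Next I would control the sampling error. The gradient surrogate $\hat{\nabla}V^{\pi_l}[m]$ is built from (i) drawing $s_m\sim d_\nu^{\pi_l}$, $a_m\sim\mathrm{Unif}_{\mathcal{A}}$, and (ii) a truncated rollout returning $\hat Q^{\pi_l}(s_m,a_m,m)$. A direct computation shows $\mathbb{E}[\hat{\nabla}V^{\pi_l}[m]\mid\pi_l]$ equals the true first-variation of $V^{\pi_l}(\nu)$ up to a rollout truncation bias that is exponentially small if the rollout horizon is $\Omega(\log(1/\epsilon)/(1-\gamma))$. Because $\hat Q$ is a.s.\ bounded by $V_{\max}=R_{\max}/(1-\gamma)$, a Hoeffding/Bernstein bound applied to $M_0$ independent rollouts yields, with probability $1-\delta/N$, a uniform deviation of order $V_{\max}|\mathcal{A}|\sqrt{(|\mathcal{S}|\log|\mathcal{A}|+\log(N/\delta))/M_0}$. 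The required lower bound $M_0=\Omega(|\mathcal{A}|^2 C^2(|\mathcal{S}|\log|\mathcal{A}|+\log(1/\delta))/((1-\gamma)^2\epsilon^2))$ is exactly what makes this stochastic noise $\lesssim C^{\pi^\star}\epsilon/(1-\gamma)^2$ after re-weighting. A union bound over $l=0,\dots,N-1$ handles all episodes simultaneously.

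I would then telescope the per-iteration inequality from $l=0$ to $N-1$, plug in $t_l=(1-\gamma)/(C_{w,1}C\sqrt{l+1})$, and bound $\sum_{l=0}^{N-1} t_l = \Theta(\sqrt{N}(1-\gamma)/(C_{w,1}C))$ and $\sum_{l=0}^{N-1} t_l^2 = \Theta(\log N\,(1-\gamma)^2/(C_{w,1}C)^2)$. Dividing by $\sum_l t_l$ produces an average-iterate bound of order $C_{w,1}C/((1-\gamma)^2\sqrt{N})$, and since $\max_l V^{\pi_l}(\mu)\ge$ average, the same bound applies to $V^N_{\text{best}}(\mu)$. Adding the sampling-noise term $C^{\pi^\star}\epsilon/(1-\gamma)^2$ from the previous paragraph yields the announced rate.

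The main obstacle is the distribution-mismatch step: the mirror-descent analysis naturally controls the value gap measured under the \emph{sampling} distribution $d_\nu^{\pi_l}$, but the statement asks for the gap at an arbitrary $\mu$ (ultimately $\mathrm{Unif}_{\mathcal{S}}$). Passing from one to the other requires the change-of-measure factor $C^{\pi^\star}$ and the assumption $\|d_{\mathrm{Unif}_{\mathcal{S}}}^{\pi^\star}/\nu\|_\infty<\infty$; done carelessly this introduces $(1-\gamma)^{-1}$ factors beyond those stated, so the bookkeeping between the two $(1-\gamma)^{-2}$ terms in the final bound is the delicate point. A secondary obstacle is that $\{V^{\pi_l}\}$ is not monotone; this is circumvented by stating the result in terms of $V^N_{\text{best}}$ rather than $V^{\pi_N}$, i.e.\ invoking a best-iterate rather than last-iterate guarantee.
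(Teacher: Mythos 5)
The paper does not actually prove this lemma: it is imported verbatim as Theorem 5 of the cited reference \cite{SEM2019}, so there is no in-paper proof to compare against. Your sketch is, in substance, a faithful reconstruction of the argument used in that source: a stochastic mirror-descent (proximal/Bregman) analysis of the TRPO update, the three-point lemma combined with the performance-difference (extended value-difference) lemma, the distribution-mismatch coefficient $C^{\pi^\star}$ entering through the change of measure from the sampling distribution to $d^{\pi^\star}$, a uniform concentration bound over the $M_0$ rollouts per episode (the $|\mathcal{S}|\log|\mathcal{A}|$ term coming from uniformity over the per-state simplices and the $|\mathcal{A}|$ factor from the importance weighting of $a_m\sim\mathrm{Unif}_{\mathcal{A}}$) with a union bound over episodes, and finally telescoping with $t_l\propto 1/\sqrt{l+1}$ and passing from the average iterate to $V^N_{\text{best}}$. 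Your identification of the delicate points is also accurate: in \cite{SEM2019} the mismatch coefficient indeed multiplies only the statistical-error term, not the $1/\sqrt{N}$ optimization term, so the bookkeeping of the two $(1-\gamma)^{-2}$ factors must be done exactly as you describe.

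Two small caveats if you were to write this out in full. First, the naive bound $\bigl(R^2+\sum_l t_l^2 G^2\bigr)/\sum_l t_l$ with $t_l\propto 1/\sqrt{l+1}$ yields an extra $\log N$ factor relative to the stated $O(1/\sqrt{N})$; the original analysis absorbs or avoids this, so you would need either to hide it in the $O(\cdot)$ or to mimic their summation argument more carefully. Second, the empirical proximal step is solved only on the sampled states $s_1,\dots,s_{M_0}$, so the per-iteration inequality you write for every $s$ does not hold verbatim; the concentration step must be phrased as a uniform bound on the gap between the empirical and population proximal objectives (which is exactly where the $|\mathcal{S}|\log|\mathcal{A}|$ dependence in $M_0$ originates), rather than as a pointwise statement per state. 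With those adjustments your route coincides with the proof in \cite{SEM2019}.
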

{\color{black}Here $C>0$ is the upper bound on the reward function $r$, $C_{w,1}=\sqrt{|\mathcal{A}|}$ in the euclidean case and $C_{w,1}=1$ in the non-euclidean case, $C_{w,2}=1$ for the euclidean case and $C_{w,2}=|\mathcal{A}|^2$ for the non-euclidean case. } {\color{black}Note that unlike the case of Q-learning, here we are only guaranteed to have \emph{some} iterate among iterations $0,\dots,N$ that satisfy the desired sub-optimality bound. Note that this is a common pattern of the theoretical results for policy optimization algorithms in the RL literature \citet{agarwal2021theory, wang2019neural}, unless the (oracle) access to exact policy gradients is assumed \citet{mei2020global}. For simplicity, hereafter we assume an oracle access to such an iterate after running TRPO. In practice, with additional (polynomial number of) samples, one can explicitly identify a single policy satisfying the desired bound with high probability; see \textit{e.g.}, the two-phase technique in \citet{ghadimi2013stochastic}.}

{\color{black}Note that \cite[Theorem 5]{SEM2019} has both regularized version and unregularized version of TRPO. Here we only adopt the unregularized version which fits the framework of Algorithm \ref{GMF-P}. For more materials on regularized MDPs and reinforcement learning, we refer the readers to \citet{neu2017unified, geist2019theory, derman2020distributional}.}

{\color{black}
Based on the sample complexity in Lemma \ref{sample_comp_trpo}, the following policy-based guarantee for TRPO algorithm and the convergence result for GMF-P-TRPO can be obtained.
\begin{corollary}
Let $t_l = \frac{(1-\gamma)}{C_{\omega,1}C\sqrt{l+1}}$, then TRPO algorithm satisfies the policy-based guarantee with parameters $\{\tilde{C}_{\mathcal{M}}^{(i)},\alpha_1^{(i)},\alpha_2^{(i)},\alpha_3^{(i)},\alpha_4^{(i)}\}_{i=1}^{2}$, where $C_{\mathcal{M}}^{(i)}(i=1,2)$ are constants depending on  $|\mathcal{S}|,|\mathcal{A}|,V_{\max},\beta$ and $h$, and we have:
\begin{eqnarray*}
\alpha_1^{(1)} = 5/2, \,\,\alpha_j^{(1)}=0 \,\,\mbox{for} \,\,j=2,3,4,\\
\alpha_1^{(2)} = 5/2,\,\,\alpha_4^{(2)} = 1,\alpha_2^{(2)}=\alpha_3^{(2)}=0.
\end{eqnarray*}
In addition, under same assumptions as Theorem \ref{thm1_stat}, then for Algorithm \ref{GMF-P} using TRPO method, with probability at least $1-2\delta$, $W_1(\mathcal{L}_{K_{\epsilon,\eta}},\mathcal{L}^\star)\leq C_0\epsilon$, where $K_{\epsilon,\eta}$ is defined as in Theorem \ref{conv_AIQL}.
And the total number of samples $T=\sum_{k=0}^{K_{\epsilon,\eta} -1}T_{\mathcal{M}_{\mathcal{L}_k}}(\delta_k,\epsilon_k)$ is bounded by
\[
T\leq O\left(K_{\epsilon,\eta}^6\left(\log\frac{K_{\epsilon,\eta}}{\delta}\right) + K_{\epsilon,\eta}^{\frac{4}{h}+1}\left(\log\frac{K_{\epsilon,\eta}}{\delta}\right)^{\frac{1}{h}}+\left(\log \frac{K_{\epsilon,\eta}}{\delta}\right)^{\frac{1}{1-h}}\right).
\]
\end{corollary}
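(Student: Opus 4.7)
The plan is to establish the three claims of the corollary in succession: first, that TRPO satisfies a policy-based guarantee with the stated parameters; second, that GMF-P-TRPO converges to an approximate NE; third, that the total sample complexity admits the displayed bound. All three pieces are consequences of chaining Lemma \ref{sample_comp_trpo} (for TRPO) through Lemma \ref{conv_2to1} (policy-to-value upgrade via TD) and Theorem \ref{conv_AIQL-II} (the GMF-P convergence meta-theorem), so the novelty is entirely in bookkeeping the exponents $\alpha_j^{(i)}$ correctly.

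For the policy-based guarantee, I would specialize Lemma \ref{sample_comp_trpo} to $\mu=\mathrm{Unif}_{\mathcal{S}}$. To convert the $\mu$-averaged gap $V^{\star}(\mu)-V^{N}_{\mathrm{best}}(\mu)$ into the pointwise gap required by Definition \ref{policy_based_gua}, I would use $V^{\star}(s)-V^{\pi}(s)\le \mu(s)^{-1}\bigl(V^{\star}(\mu)-V^{\pi}(\mu)\bigr)$, valid because $V^{\star}-V^{\pi}\ge 0$ coordinatewise; under the uniform choice this inflates the bound by a factor of $|\mathcal{S}|$ which is absorbed into the constant $\tilde{C}_{\mathcal{M}}^{(i)}$. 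Then I balance the two error terms of Lemma \ref{sample_comp_trpo} so that each is of order $\epsilon$, solve for the episode count $N$ and the per-episode sample count $M_{0}$ in terms of $\epsilon$ and $\delta$, multiply them, and read off the exponents $\alpha_{j}^{(i)}$ from the resulting expression. The two index families arise from the two qualitatively different dependences (constant versus $\log(1/\delta)$) in the $M_{0}$ lower bound; grouping them gives precisely the four exponents claimed, with the $\log(1/\epsilon)$ and $(1/\delta)^{\alpha_3}$ powers vanishing because Lemma \ref{sample_comp_trpo} yields only high-probability $\log(1/\delta)$-type dependence.

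For the convergence statement, I would directly invoke Theorem \ref{conv_AIQL-II} with \textit{Alg} $=$ TRPO, using the policy-based guarantee just derived together with the hypothesis $f_{c,c'}\in\mathcal{F}_{c,c'}$ and the temperature calibration $c\ge c'\ge \log(1/\epsilon)/\phi(\epsilon)$. Since the contractivity hypothesis $d_{1}d_{2}+d_{3}<1$ of Theorem \ref{thm1_stat} is assumed, the conclusion $W_{1}(\mathcal{L}_{K_{\epsilon,\eta}},\mathcal{L}^{\star})\le C_{0}\epsilon$ with probability at least $1-2\delta$ follows immediately. For the sample complexity, I substitute the two TRPO exponents together with the three additional exponents from \eqref{param_policy2value} (coming from the TD evaluator that Lemma \ref{conv_2to1} attaches to the policy-based routine) into the master bound \eqref{Tbound-II}. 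The TRPO contributions produce $K_{\epsilon,\eta}^{2\cdot 5/2+1}=K_{\epsilon,\eta}^{6}$ multiplied by $\log(K_{\epsilon,\eta}/\delta)$, while the TD contributions produce the $K_{\epsilon,\eta}^{4/h+1}(\log(K_{\epsilon,\eta}/\delta))^{1/h}$ and $(\log(K_{\epsilon,\eta}/\delta))^{1/(1-h)}$ terms, giving the claimed bound after summation.

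The main obstacle will be pinning down the $5/2$ exponent. A naive balance of the two terms in Lemma \ref{sample_comp_trpo} sets $N=\Theta(\epsilon^{-2})$ and $M_{0}=\tilde{\Theta}(\epsilon^{-2})$ and yields a crude total of $\tilde{\Theta}(\epsilon^{-4})$, whereas the corollary claims $\tilde{\Theta}(\epsilon^{-5/2})$. Obtaining the sharper rate requires exploiting the fact that $\epsilon$ enters the per-episode lower bound on $M_{0}$ quadratically \emph{only through the variance-control term}, so one may choose the floor-level error strictly larger than $1/\sqrt{N}$ and rebalance, absorbing $(|\mathcal{S}|,|\mathcal{A}|,(1-\gamma)^{-1})$ factors into $\tilde{C}_{\mathcal{M}}^{(i)}$. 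This calibration, together with careful propagation of the failure probability $\delta$ (split as $\delta/2$ between TRPO and the TD evaluator as in Algorithm \ref{GMF-P}), is where the technical work lies; everything else is a mechanical application of results already stated in the excerpt.
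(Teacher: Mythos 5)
Your overall route is the intended one: the paper states this corollary without a written proof, and the derivation it implies is exactly your chain — specialize Lemma \ref{sample_comp_trpo} with $\mu=\mathrm{Unif}_{\mathcal{S}}$ to get a policy-based guarantee in the sense of Definition \ref{policy_based_gua}, upgrade it through the TD step of Lemma \ref{conv_2to1}, and then read the convergence statement and the sample bound off Theorem \ref{conv_AIQL-II} and \eqref{Tbound-II}, with the three TD exponents from \eqref{param_policy2value} supplying the $K_{\epsilon,\eta}^{4/h+1}(\log(K_{\epsilon,\eta}/\delta))^{1/h}$ and $(\log(K_{\epsilon,\eta}/\delta))^{1/(1-h)}$ terms and the TRPO exponents supplying $K_{\epsilon,\eta}^{2\cdot(5/2)+1}\log(K_{\epsilon,\eta}/\delta)=K_{\epsilon,\eta}^{6}\log(K_{\epsilon,\eta}/\delta)$. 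Your averaged-to-pointwise conversion $V^{\star}(s)-V^{\pi}(s)\le \mu(s)^{-1}\bigl(V^{\star}(\mu)-V^{\pi}(\mu)\bigr)$ and the $\delta_k/2$ bookkeeping are also fine.

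The genuine gap is precisely the step you flag and then claim to resolve: obtaining $\alpha_1^{(i)}=5/2$ from Lemma \ref{sample_comp_trpo}. Your proposed fix — take the floor-level error larger than $1/\sqrt{N}$ and rebalance — does not work. Writing the constraint as $a/\sqrt{N}+b\,\epsilon_{\mathrm{in}}\le\epsilon$ with per-episode batch $M_0=\Theta\bigl(\epsilon_{\mathrm{in}}^{-2}\log(1/\delta)\bigr)$, the total sample count is $N\,M_0=\Theta\bigl(a^2 c\log(1/\delta)\,\epsilon_{\mathrm{in}}^{-2}(\epsilon-b\,\epsilon_{\mathrm{in}})^{-2}\bigr)$, and minimizing over $\epsilon_{\mathrm{in}}\in(0,\epsilon/b)$ gives $\Theta(\epsilon^{-4}\log(1/\delta))$ at $b\,\epsilon_{\mathrm{in}}=\epsilon/2$; no choice of split changes the exponent, it only changes the constant. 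So the rebalancing you describe cannot turn the naive $\tilde{\Theta}(\epsilon^{-4})$ into $\tilde{\Theta}(\epsilon^{-5/2})$, and the headline parameter of the corollary is left unproven in your write-up. To actually justify $\alpha_1^{(i)}=5/2$ one needs something beyond the product accounting $N\cdot M_0$ applied to Lemma \ref{sample_comp_trpo} as stated — for example a different convention for what is counted as a sample, or a sharper per-episode complexity than the $\epsilon_{\mathrm{in}}^{-2}$ batch bound — and your proposal does not identify which of these is being used; since the paper offers no derivation either, you should at minimum state explicitly the accounting under which $5/2$ arises, or else carry the honest $\epsilon^{-4}$ exponent (i.e., $\alpha_1^{(i)}=4$, giving $K_{\epsilon,\eta}^{9}$ in place of $K_{\epsilon,\eta}^{6}$) through \eqref{Tbound-II}. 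The remaining two claims of the corollary are unaffected by this issue, since the convergence bound $W_1(\mathcal{L}_{K_{\epsilon,\eta}},\mathcal{L}^\star)\le C_0\epsilon$ depends only on having some polynomial policy-based guarantee, not on its exponent.
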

}

}

\section{{\color{black}Applications to $N$-player Games}}\label{appl_n_game}
{\color{black} 
In this section, we discuss a potential application of our modeling and approach to $N$-player settings. To this end, we consider extensions of Algorithms \ref{GMF-V} and \ref{GMF-P} with weaker assumptions on the simulator access. In particular, we weaken the simulator oracle assumption in Section \ref{AIQL} as follows.
\paragraph{Weak simulator oracle.} For each player $i$, given any policy $\pi\in\Pi$, the current state $s_i\in\mathcal{S}$, for any empirical population state-action distribution $\mathcal{L}_N$, one can obtain a \textit{sample} of the next state $s_i'\sim P_{\mathcal{L}_N}(\cdot|s_i,\pi(s_i))=P(\cdot|s_i,\pi(s_i),\mathcal{L}_N)$ and a reward $r= r_{\mathcal{L}_N}(s_i,\pi(s_i))=r(s_i,\pi(s_i),\mathcal{L}_N)$.  For brevity, we denote the simulator as $(s_i',r)=\mathcal{G}_W(s_i,\pi,\mathcal{L}_N)$. 

We say that $\mathcal{L}_N$ is an empirical population state-action distribution of $N$-players if for each $s\in\mathcal{S},~a\in\mathcal{A}$, $\mathcal{L}_N(s,a)=\frac{1}{N}\sum_{i=1}^N\textbf{I}_{s_i=s,a_i=a}$ for some state-action profile of $\{s_i,a_i\}_{i=1}^N$. Equivalently, this holds if $N\mathcal{L}_N(s,a)$ is a non-negative integer for each $s\in\mathcal{S},~a\in\mathcal{A}$, and $\sum_{s,a}\mathcal{L}_N(s,a)=1$. We denote the set of empirical population state-action distributions as {\color{black}$\textbf{Emp}_{N}$}. 

\paragraph{RL algorithms with access only to $\mathcal{G}_W$.}

Compared to the original simulator oracle $\mathcal{G}$, the weak simulator $\mathcal{G}_W$ only accepts empirical population state-action distributions as inputs, and does not directly output the next (empirical) population state-action distribution.

To make use of the simulator $\mathcal{G}_W$, we modify Algorithm \ref{GMF-V} and Algorithm \ref{GMF-P} to  algorithms (Algorithms \ref{GMF-VW} and \ref{GMF-PW}). {\color{black}In particular, see Step 6 in Algorithm \ref{GMF-VW} and Step 7 in Algorithm \ref{GMF-PW} for generating empirical distributions from simulator $\mathcal{G}_W$.}

\begin{algorithm}[h]
  \caption{\textbf{GMF-VW(\textit{Alg}, ~$f_{c,c'}$)}: weak simulator}
  \label{GMF-VW}
\begin{algorithmic}[1]
  \STATE \textbf{Input}: Initial $\mathcal{L}_0$, temperatures $c,~c'>0$, tolerances $\epsilon_k,~\delta_k>0$, $k=0,1,\dots$. 
 \FOR {$k=0, 1, \cdots$}
  \STATE 
  \begin{addmargin}[1em]{0em}
  Apply \textit{Alg} to  find the approximate Q-function $\hat{Q}_k^\star=\hat{Q}^{T_k}$ of the MDP $\mathcal{M}_{\mathcal{L}_k}$, where $T_k=T_{\mathcal{M}_{\mathcal{L}_k}}(\epsilon_k,\delta_k)$.
 \end{addmargin}
  \STATE 
    \begin{addmargin}[1em]{0em}
  Compute  $\pi_k(s)=f_{c,c'}(\hat{Q}^\star_k(s,\cdot))$.
  \end{addmargin}\vspace{-0.3cm}
    \begin{addmargin}[1em]{0em}
     \FOR {$i=1, 2, \cdots,N$}
 \STATE
 \begin{addmargin}[1em]{0em}
 Sample $s_i\overset{\text{i.i.d.}}{\sim} \mu_k$,  then obtain $s_i'$ i.i.d. from $\mathcal{G}_W(s_i,\pi_k,\mathcal{L}_k)$ and $a_i'\overset{\text{i.i.d.}}{\sim}\pi_k(s_i')$. 
 \end{addmargin}\vspace{0.3em}
 \ENDFOR
 \STATE
 Compute $\mathcal{L}_{k+1}$ with $\mathcal{L}_{k+1}(s,a)=\frac{1}{N}\sum_{i=1}^N\textbf{I}_{s_i'=s,a_i'=a}$.
  \end{addmargin}
\ENDFOR
\end{algorithmic}
\end{algorithm}

\begin{algorithm}[h]
  \caption{\textbf{GMF-PW(\textit{Alg}, ~$f_{c,c'}$)}: weak simulator}
  \label{GMF-PW}
\begin{algorithmic}[1]
  \STATE \textbf{Input}: {\color{black}Initial $\mathcal{L}_0$,  temperatures $c,~c'>0$, tolerances $\epsilon_k,~\delta_k>0$, $k=0,1,\dots$.} 
 \FOR {$k=0, 1, \cdots$}
  \STATE 
  \begin{addmargin}[1em]{0em}
  Apply \textit{Alg} to  find the approximate policy  $\hat{\pi}_k=\pi_{T_k}$  of the MDP $\mathcal{M}_k:=\mathcal{M}_{\mathcal{L}_k}$, where $T_k=T_{\mathcal{M}_{\mathcal{L}_k}}(\epsilon_k,\delta_k/2)$. 
 \end{addmargin}\vspace{0.3em}
  \STATE 
  \begin{addmargin}[1em]{0em}
  Compute $\tilde{Q}_{\hat{\pi}_k}^{l_k}$ using TD updates \eqref{TD_update} for MDP $\mathcal{M}_k$, with $l_k$ satisfying \eqref{TD_l_lb} (with $\epsilon$ and $\delta$ replaced by $\epsilon_k$ and $\delta_k/2$, respectively).
  \end{addmargin}
  \STATE 
    \begin{addmargin}[1em]{0em}
  Compute  $\pi_k(s)=f_{c,c'}(\hat{Q}_{\mathcal{M}_k,M_k,l_k}^{\hat{\pi}_k}(s,\cdot))$.
  \end{addmargin}\vspace{-0.6em}
      \begin{addmargin}[1em]{0em}
     \FOR {$i=1, 2, \cdots,N$}
 \STATE
 \begin{addmargin}[1em]{0em}
 Sample $s_i\overset{\text{i.i.d.}}{\sim} \mu_k$,  then obtain $s_i'$ i.i.d. from $\mathcal{G}_W(s_i,\pi_k,\mathcal{L}_k)$ and $a_i'\overset{\text{i.i.d.}}{\sim}\pi_k(s_i')$.
 \end{addmargin}\vspace{0.3em}
 \ENDFOR
 \STATE 
 Compute $\mathcal{L}_{k+1}$ with $\mathcal{L}_{k+1}(s,a)=\frac{1}{N}\sum_{i=1}^N\textbf{I}_{s_i'=s,a_i'=a}$.
  \end{addmargin}
\ENDFOR
\end{algorithmic}
\end{algorithm}

One can observe that {\color{black}$\textbf{Emp}_{N}$} already serves as an $1/N$-net. So one can directly use it without additional projections. 
The definition of $\mathcal{L}_k$ also makes sure that $\mathcal{L}_k\in{\color{black}\textbf{Emp}_{N}}$ as required for the input of the weaker simulator. 


{\color{black}Convergence results similar to Theorems \ref{conv_AIQL} and \ref{conv_AIQL-II} can be obtained for Algorithms \ref{GMF-VW} and \ref{GMF-PW}, respectively.} (See Appendix \ref{sec:weak}.) Here the major difference is an additional $O(1/\sqrt{N})$ term in the finite step error bound. {\color{black} It is worth mentioning that $O(1/\sqrt{N})$ is consistent with the literature on MFG approximation errors of finite $N$-player games \citet{HMC2006}.}
}
\section{Proof of the main results}
\label{sec:proof}
\subsection{Proof of Lemma \ref{assumption2_exp}}
In this section, we provide the proof of Lemma \ref{assumption2_exp}.
\begin{proof}{[Proof of Lemma \ref{assumption2_exp}]}
We begin by noticing that $\mathcal{L}'=\Gamma_2(\pi,\mathcal{L})$ can be expanded and computed as follows:
\begin{equation}
\mu'(s')=\sum\nolimits_{s\in\mathcal{S},a\in\mathcal{A}}\mu(s)P(s'|s,a,\mathcal{L})\pi(a|s),\quad \mathcal{L}'(s',a')=\mu'(s')\pi(a'|s'),
\end{equation}
where $\mu$ is the state marginal distribution of $\mathcal{L}$.

Since the Wasserstein distance $W_1$ can be related to the total variation distance via the following inequalities \citet{metrics_prob}: 
\begin{equation}\label{W1_tv}
d_{\min}(\mathcal{X})d_{TV}(\nu,\nu')\leq W_1(\nu,\nu')\leq \text{diam}(\mathcal{X})d_{TV}(\nu,\nu'),
\end{equation}
where $d_{\min}(\mathcal{X})=\min_{x\neq y\in\mathcal{X}}\|x-y\|_2$, we have 

\begin{equation}
\begin{split}
W_1&(\Gamma_2(\pi_1,\mathcal{L}),\Gamma_2(\pi_2,\mathcal{L}))\leq \text{diam}(\mathcal{S}\times\mathcal{A})d_{TV}(\Gamma_2(\pi_1,\mathcal{L}),\Gamma_2(\pi_2,\mathcal{L}))\\
=&\dfrac{\text{diam}(\mathcal{S}\times\mathcal{A})}{2}\sum_{s'\in\mathcal{S},a'\in\mathcal{A}}\left|\sum_{s\in\mathcal{S},a\in\mathcal{A}}\mu(s)P(s'|s,a,\mathcal{L})\left(\pi_1(a|s)\pi_1(a'|s')-\pi_2(a|s)\pi_2(a'|s')\right)\right|\\
\leq& \dfrac{\text{diam}(\mathcal{S}\times\mathcal{A})}{2}\max_{s,a,\mathcal{L},s'}P(s'|s,a,\mathcal{L})\sum_{s,a,s',a'}\mu(s)(\pi_1(a|s)+\pi_2(a|s))|\pi_1(a'|s')-\pi_2(a'|s')|\\
\leq& \dfrac{\text{diam}(\mathcal{S}\times\mathcal{A})}{2}\max_{s,a,\mathcal{L},s'}P(s'|s,a,\mathcal{L})\sum_{s',a'}|\pi_1(a'|s')-\pi_2(a'|s')|\cdot (1+1)\\
=&2\text{diam}(\mathcal{S}\times\mathcal{A})\max_{s,a,\mathcal{L},s'}P(s'|s,a,\mathcal{L})\sum_{s'}d_{TV}(\pi_1(s'),\pi_2(s'))\\
\leq & \frac{2\text{diam}(\mathcal{S}\times\mathcal{A})\max_{s,a,\mathcal{L},s'}P(s'|s,a,\mathcal{L})|\mathcal{S}|}{d_{\min}(\mathcal{A})}D(\pi_1,\pi_2)=  \frac{2\text{diam}(\mathcal{S})\text{diam}(\mathcal{A})|\mathcal{S}|c_1}{d_{\min}(\mathcal{A})}D(\pi_1,\pi_2).
\end{split}
\end{equation}

Similarly, we have
\begin{equation}
\begin{split}
W_1&(\Gamma_2(\pi,\mathcal{L}_1),\Gamma_2(\pi,\mathcal{L}_2))\leq \text{diam}(\mathcal{S}\times\mathcal{A})d_{TV}(\Gamma_2(\pi,\mathcal{L}_1),\Gamma_2(\pi,\mathcal{L}_2))\\
=&\dfrac{\text{diam}(\mathcal{S}\times\mathcal{A})}{2}\sum_{s'\in\mathcal{S},a'\in\mathcal{A}}\left|\sum_{s\in\mathcal{S},a\in\mathcal{A}}\pi(a|s)\pi(a'|s')\left(\mu_1(s)P(s'|s,a,\mathcal{L}_1)-\mu_2(s)P(s'|s,a,\mathcal{L}_2)\right)\right|\\
\leq&\dfrac{\text{diam}(\mathcal{S}\times\mathcal{A})}{2}\sum_{s'\in\mathcal{S},a'\in\mathcal{A}}\left|\sum_{s\in\mathcal{S},a\in\mathcal{A}}\pi(a|s)\pi(a'|s')\mu_1(s)\left(P(s'|s,a,\mathcal{L}_1)-P(s'|s,a,\mathcal{L}_2)\right)\right|\\
&+\dfrac{\text{diam}(\mathcal{S}\times\mathcal{A})}{2}\sum_{s'\in\mathcal{S},a'\in\mathcal{A}}\left|\sum_{s\in\mathcal{S},a\in\mathcal{A}}\pi(a|s)\pi(a'|s')(\mu_1(s)-\mu_2(s))P(s'|s,a,\mathcal{L}_2)\right|\\
\leq & \dfrac{\text{diam}(\mathcal{S}\times\mathcal{A})}{2}\sum_{s,a,s',a'}\mu_1(s)\pi(a|s)\pi(a'|s')\left|P(s'|s,a,\mathcal{L}_1)-P(s'|s,a,\mathcal{L}_2)\right|\\
&+  \dfrac{\text{diam}(\mathcal{S}\times\mathcal{A})}{2}\sum_{s,a,s',a'}|\mu_1(s)-\mu_2(s)|\pi(a|s)\pi(a'|s')P(s'|s,a,\mathcal{L}_2)\\
\leq & \frac{\text{diam}(\mathcal{S})\text{diam}(\mathcal{A})|\mathcal{S}|c_2}{2}W_1(\mathcal{L}_1,\mathcal{L}_2) + \frac{\text{diam}(\mathcal{S})\text{diam}(\mathcal{A})|\mathcal{S}|}{2}2d_{TV}(\mathcal{L}_1,\mathcal{L}_2)\\
\leq & \text{diam}(\mathcal{S})\text{diam}(\mathcal{A})|\mathcal{S}|\left(\frac{c_2}{2}+\frac{1}{d_{\min}(\mathcal{S}\times\mathcal{A})}\right)W_1(\mathcal{L}_1,\mathcal{L}_2).
\end{split}
\end{equation}
Here $\mu_1$ and $\mu_2$ are the state marginals of $\mathcal{L}_1$ and $\mathcal{L}_2$, respectively. 

This completes {\color{black}the} proof.
\qed
\end{proof}

{\color{black}\subsection{Proof of Lemma \ref{conv_2to1}}\label{conv_2tol_proof}
For notation simplicity, in the following analysis we fix the MDP and omit the notation $\mathcal{M}$.

We begin by establishing the convergence rate of the synchronous TD updates \eqref{TD_update}. 
\begin{lemma}\label{TD_convergence}
Take $\tilde{Q}_{\pi}^l$ from \eqref{TD_update}. Then for any $\delta\in(0,1)$ and $\epsilon>0$, 
with probability at least $1-\delta$, 
$\|\tilde{Q}_{\pi_{T(\epsilon,\delta)}}^l-Q^{\pi_{T(\epsilon,\delta)}}\|_{\infty}\leq \epsilon$ if
\begin{equation}
l=\Omega\left(\left(\frac{V_{\max}\log\left(\frac{|\mathcal{S}||\mathcal{A}|V_{\max}}{\delta\beta\epsilon}\right)}{\beta^2\epsilon^2}\right)^{1/h}+\left(\frac{1}{\beta}\log\frac{V_{\max}}{\epsilon}\right)^{1/(1-h)}\right),
\end{equation}
where $V_{\max}=R_{\max}/(1-\gamma)$ and  $\beta=(1-\gamma)/2$.
\end{lemma}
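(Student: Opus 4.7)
The plan is to recognize that the synchronous TD iteration \eqref{TD_update} is exactly the Robbins--Monro stochastic-approximation scheme for the policy Bellman operator $T^{\pi}$ defined by $(T^{\pi}Q)(s,a) = r(s,a) + \gamma\,\mathbb{E}_{s'\sim P(\cdot|s,a),\,a'\sim\pi(s')}[Q(s',a')]$, and that $T^{\pi}$ shares the two structural properties that drive the synchronous Q-learning complexity bound in Lemma \ref{Q-finite-bd}: it is a $\gamma$-contraction in $\|\cdot\|_\infty$ with unique fixed point $Q^{\pi}$, and the one-step stochastic target $r(s,a)+\gamma\tilde{Q}_{\pi}^l(s',a')$ is bounded a.s.\ by $V_{\max}$. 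Since the proof of Lemma \ref{Q-finite-bd} in \cite{Q-rate} only uses these two properties of the Bellman optimality operator and never exploits the $\max$ specifically (the $\max$ enters only as a non-expansion in $\|\cdot\|_\infty$, which also holds for the $\pi$-expectation), the entire argument should transfer verbatim to $T^{\pi}$ with the same constants.

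Concretely I would first rewrite the iteration as $\tilde{Q}_{\pi}^{l+1} = (1-\beta_l)\tilde{Q}_{\pi}^l + \beta_l(T^{\pi}\tilde{Q}_{\pi}^l + w^l)$, where $w^l(s,a) := \gamma\tilde{Q}_{\pi}^l(s',a') - \gamma\mathbb{E}[\tilde{Q}_{\pi}^l(s',a')\mid \tilde{Q}_{\pi}^l]$ is a martingale-difference noise bounded by $2V_{\max}$. Setting $\Delta^l := \tilde{Q}_{\pi}^l - Q^{\pi}$ and using $T^{\pi}Q^{\pi}=Q^{\pi}$ together with the $\gamma$-contractivity of $T^{\pi}$, one obtains the standard pointwise recursion $|\Delta^{l+1}| \le (1-\beta_l)|\Delta^l| + \beta_l\gamma\|\Delta^l\|_\infty + \beta_l|w^l|$. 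From here I would follow the Even-Dar--Mansour two-time-scale decomposition used in \cite{Q-rate}: split the error into a deterministic bias part $\bar{\Delta}^l$ driven by the contractive update and a stochastic variance part $\tilde{\Delta}^l$ that aggregates the $w^l$ noises.

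The bias part, via the standard argument for polynomial stepsizes $\beta_l=(l+1)^{-h}$ with $h\in(1/2,1)$, reaches $\epsilon/2$ after $\Omega((\beta^{-1}\log(V_{\max}/\epsilon))^{1/(1-h)})$ iterations, which produces the second summand in \eqref{TD_l_lb} with $\beta=(1-\gamma)/2$. The variance part, written as a weighted martingale $\sum_j \beta_j\prod_{k>j}(1-\beta_k)w^j$, is controlled via an Azuma/Freedman-type coordinatewise bound; requiring this quantity to be $\le \epsilon/2$ uniformly over $(s,a)$ with probability at least $1-\delta$ demands $\Omega((V_{\max}\log(|\mathcal{S}||\mathcal{A}|V_{\max}/(\delta\beta\epsilon))/(\beta^2\epsilon^2))^{1/h})$ iterations, with the logarithm arising from the union bound over the $|\mathcal{S}||\mathcal{A}|$ coordinates and an additional union bound over iteration indices up to $l$. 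Combining the two contributions and absorbing constants yields exactly the bound stated in \eqref{TD_l_lb}.

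The main obstacle I anticipate is bookkeeping rather than any genuinely new inequality: one must be careful that the policy $\pi$ being evaluated, namely the $\pi_{T(\epsilon,\delta)}$ output by \emph{Alg}, may itself be random. Within Algorithm \ref{GMF-P} the TD iterations are run only after $\hat{\pi}_k$ has been produced, so by conditioning on the $\sigma$-field generated by \emph{Alg}'s randomness one applies the deterministic-policy TD bound above to a fixed $\pi$ and obtains the $1-\delta$ high-probability statement. The factor of two in the overall $1-2\delta$ level that later appears in Lemma \ref{conv_2to1} is then accounted for by a single additional union bound against the policy-based guarantee event, and does not enter the TD bound itself.
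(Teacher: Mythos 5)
Your proposal is correct and follows essentially the same route as the paper: the paper's proof of Lemma \ref{TD_convergence} simply states that the argument of \cite[Theorem 2]{Q-rate} for synchronous Q-learning carries over with the $\max$ in the Bellman operator replaced by actions sampled from $\pi$, which is exactly the observation (contraction of $T^{\pi}$, bounded martingale noise, Even-Dar--Mansour bias/variance decomposition with polynomial stepsizes) that you spell out in detail. Your extra remark about conditioning on the randomness of $\pi_{T(\epsilon,\delta)}$ is a harmless refinement of the same argument.
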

The proof is adapted from that of \cite[Theorem 2]{Q-rate}, with the $\max$ term in the Bellman operator modified to actions sampled from the current policy $\pi$. The details are omitted. 

\begin{proof}{[Proof of Lemma \ref{conv_2to1}]} 
First, if $V^\star(s')-V^{\pi_{T(\epsilon,\delta/2)}}(s')\leq \epsilon$, then
\begin{equation}\label{type2_to_type1_exact}
\begin{split}
\left|Q^{\pi_{T(\epsilon,\delta/2)}}(s,a)-Q^\star(s,a)\right|&=\gamma \left|\sum_{s'\in\mathcal{S}}P(s'|s,a)V^{\pi_{T(\epsilon,\delta/2)}}(s')-\sum_{s'\in\mathcal{S}}P(s'|s,a)V^\star(s')\right|\\
&\leq \gamma\sum_{s'\in\mathcal{S}}P(s'|s,a)\left|V^{\pi_{T(\epsilon,\delta/2)}}(s')-V^\star(s')\right|\leq \gamma\epsilon<\epsilon.
\end{split}
\end{equation}
for any $s\in\mathcal{S},~a\in\mathcal{A}$. Since \textit{Alg} is assumed to satisfying the policy-based guarantee, \eqref{type2_to_type1_exact} holds with probability at least $1-\delta$. 

In addition, by Lemma \ref{TD_convergence}, whenever $l$ satisfies \eqref{TD_l_lb},  with probability at least $1-\delta/2\geq 1-\delta$,
\begin{equation}\label{Qpi-Qstar-TD}
\|\tilde{Q}_{\pi_{T(\epsilon,\delta/2)}}^l-Q^{\pi_{T(\epsilon,\delta/2)}}\|_{\infty}\leq (1-\gamma)\epsilon.
\end{equation}

Combining \eqref{type2_to_type1_exact} and \eqref{Qpi-Qstar-TD}, then for any $l$ satisfying \eqref{TD_l_lb}, with probability at least $1-2\delta$, we have 
\[
\|\tilde{Q}_{\pi_{T(\epsilon,\delta/2)}}^l-Q^\star\|_\infty\leq \gamma\epsilon + (1-\gamma)\epsilon = \epsilon.
\]

The above result shows that for any $\delta\in(0,1)$ and $\epsilon>0$, after obtaining $T(\epsilon,\delta/2)+|\mathcal{S}||\mathcal{A}|l$ samples (with $l$ satisfying the lower bound \eqref{TD_l_lb}) from the simulator, with probability at least $1-2\delta$, it outputs an approximate $Q$-function $\tilde{Q}_{\pi_{T(\epsilon,\delta/2)}}^l$ which satisfies $\|\tilde{Q}_{\pi_{T(\epsilon,\delta/2)}}^l-Q^\star\|_\infty\leq \epsilon$.
Thus \textit{Alg} also has a value-based guarantee with parameters
\begin{equation}
\{\tilde{C}_{\mathcal{M}}^{(i)},\alpha_1^{(i)},\alpha_2^{(i)},\alpha_3^{(i)},\alpha_4^{(i)}\}_{i=1}^{m+3},
\end{equation}
specified in \eqref{param_policy2value}. Here the first $m$ groups of parameters come from $T(\epsilon,\delta/2)$ while the last three groups of parameters come from $|\mathcal{S}||\mathcal{A}|l$ (with the lower bound \eqref{TD_l_lb} of $l$ plugged in here).  
\qed
\end{proof}
}

\subsection{Proof of $\mathcal{B}_{c,c'}\subseteq \mathcal{F}_{c,c'}$} \label{bcc-fcc}


{\color{black}
\begin{lemma}\label{softmax-lip}
Suppose that $h:\mathbb{R}\rightarrow\mathbb{R}$ satisfies $h(a)-h(b)\leq c(a-b)$ for any $a\geq b\in\mathbb{R}$. Then the softmax function $\emph{\textbf{softmax}}_h$ is $c$-Lipschitz, \textit{i.e.}, $\|\emph{\textbf{softmax}}_h(x)-\emph{\textbf{softmax}}_h(y)\|_2\leq c\|x-y\|_2$ for any $x,~y\in\mathbb{R}^n$.
\end{lemma}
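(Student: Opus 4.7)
The plan is to bound the operator norm of the Jacobian of $\textbf{softmax}_h$ almost everywhere by $c$, and then globalize via integration of the Jacobian along line segments. First, I would observe that the one-sided bound $h(a)-h(b)\le c(a-b)$ for $a\ge b$, combined with the nondecreasing property of $h$ inherited from the ambient class $\mathcal{B}_{c,c'}$ (where $c'\ge 0$), implies $h$ is $c$-Lipschitz on $\mathbb{R}$; consequently $h$ is absolutely continuous and a.e.\ differentiable with $|h'|\le c$.

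Second, at any $x\in\mathbb{R}^n$ where $h'$ exists at each coordinate $x_i$, writing $p:=\textbf{softmax}_h(x)$ and applying the quotient rule to $f_i/\sum_j f_j$ with $f_i=\exp(h(x_i))$ yields the Jacobian
\[
J_{ij}(x)\;=\;h'(x_j)\,p_i\bigl[\mathbf{1}_{\{i=j\}}-p_j\bigr].
\]
For any $v\in\mathbb{R}^n$, set $w_i:=h'(x_i)v_i$ and $\bar w:=\sum_j p_j w_j$. A direct calculation gives $(Jv)_i=p_i(w_i-\bar w)$, and hence
\[
\|Jv\|_2^2\;=\;\sum_i p_i^2(w_i-\bar w)^2\;\le\;\sum_i p_i(w_i-\bar w)^2\;\le\;\sum_i p_i w_i^2\;\le\; c^2\sum_i p_i v_i^2\;\le\; c^2\|v\|_2^2,
\]
where the four inequalities use, respectively, $p_i\le 1$, the identity $\sum_i p_i(w_i-\bar w)^2=\sum_i p_i w_i^2-\bar w^2$, the bound $|h'|\le c$, and $\sum_i p_i v_i^2\le \|v\|_\infty^2\le \|v\|_2^2$. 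Thus $\|J(x)\|_2\le c$ wherever $J$ is defined.

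Finally, since each component of $\textbf{softmax}_h$ is a smooth function of the Lipschitz map $x\mapsto(h(x_1),\dots,h(x_n))$, it is absolutely continuous along any line segment in $\mathbb{R}^n$, and the fundamental theorem of calculus applied along $t\mapsto y+t(x-y)$ gives
\[
\|\textbf{softmax}_h(x)-\textbf{softmax}_h(y)\|_2\;\le\;\int_0^1 \bigl\|J(y+t(x-y))\,(x-y)\bigr\|_2\,dt\;\le\; c\|x-y\|_2.
\]
The only (mild) obstacle is that $h$ may fail to be differentiable on a Lebesgue null set, so the line $y+t(x-y)$ could intersect the bad set in coordinates; this is handled in a routine way by the a.e.\ differentiability of Lipschitz functions together with a density/mollification argument (alternatively, approximate $h$ by $C^1$ functions with the same Lipschitz constant and pass to the limit). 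No other technical difficulty is anticipated.
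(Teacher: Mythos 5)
Your proof is correct, but it takes a genuinely different route from the paper. The paper's argument is a two-line composition: it writes $\textbf{softmax}_h = \textbf{softmax}\circ\tilde h$ with $\tilde h(x)_i = h(x_i)$, cites a known result that the standard softmax is $1$-Lipschitz, notes that $\tilde h$ is $c$-Lipschitz, and concludes by composing Lipschitz constants. You instead differentiate the composite map directly, compute the Jacobian $J_{ij}=h'(x_j)\,p_i(\mathbf{1}_{\{i=j\}}-p_j)$, and bound its operator norm by $c$ via the variance-type identity $\sum_i p_i(w_i-\bar w)^2=\sum_i p_i w_i^2-\bar w^2$, then integrate along segments. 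Your calculation is sound (the Jacobian formula, the identity, and the chain of bounds all check out), and it buys self-containedness: specializing to $h(x)=x$ it in effect reproves the $1$-Lipschitzness of softmax that the paper imports by citation. The price is the extra measure-theoretic bookkeeping: since $h$ is only Lipschitz, $h'$ exists merely a.e., and your mollification (or a.e.-differentiability-along-lines) patch is needed, whereas the paper's composition argument avoids any differentiation of $h$ entirely. One shared caveat: as stated, the lemma's hypothesis $h(a)-h(b)\le c(a-b)$ for $a\ge b$ is only a one-sided bound and does not by itself make $h$ (hence $\tilde h$) $c$-Lipschitz; both you and the paper implicitly use the lower bound $c'(a-b)\le h(a)-h(b)$ with $c'\ge 0$ from the definition of $\mathcal{B}_{c,c'}$, and you are right to flag this explicitly.
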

\begin{proof}{[Proof of Lemma \ref{softmax-lip}]}
Notice that $\textbf{softmax}_h(x) = \textbf{softmax}(\tilde{h}(x))$, where $$\textbf{softmax}(x)_i=\frac{\exp(x_i)}{\sum_{j=1}^n\exp(x_j)} (i=1,\dots,n)$$ is the standard softmax function and $\tilde{h}(x)_i=h(x_i)$ for $i=1,\dots,n$. Now since \textbf{softmax} is $1$-Lipschitz continuous (\textit{cf.}  \cite[Proposition 4]{softmax}), and $\tilde{h}$ is $c$-Lipschitz continuous, we conclude that the composition $\textbf{softmax}\circ\tilde{h}$ is $c$-Lipschitz continuous.
\qed
\end{proof}
}
\vspace{0.3em}

{\color{black} Notice that for a finite set $\mathcal{X}\subseteq\mathbb{R}^k$ and any two (discrete) distributions $\nu,~\nu'$ over $\mathcal{X}$, we have
\begin{equation}
\begin{split}
W_1(\nu,\nu')&\leq \text{diam}(\mathcal{X})d_{TV}(\nu,\nu')=\frac{\text{diam}(\mathcal{X})}{2}\|\nu-\nu'\|_1\leq \frac{\text{diam}(\mathcal{X})\sqrt{|\mathcal{X}|}}{2}\|\nu-\nu'\|_2,
\end{split}
\end{equation}
where in computing the $\ell_1$-norm, $\nu,~\nu'$ are viewed as vectors of length $|\mathcal{X}|$. 

Lemma \ref{softmax-lip} implies that for any $x,~y\in\mathbb{R}^{|\mathcal{X}|}$, when $\textbf{softmax}_c(x)$ and $\textbf{softmax}_c(y)$ are viewed as probability distributions over $\mathcal{X}$, we have 
\[
W_1(\textbf{softmax}_c(x),\textbf{softmax}_c(y))\leq \frac{\text{diam}(\mathcal{X})\sqrt{|\mathcal{X}|}c}{2}\|x-y\|_2\leq \frac{\text{diam}(\mathcal{X})|\mathcal{X}|c}{2}\|x-y\|_{\infty}.
\]}

{\color{black}
\begin{lemma}\label{soft-arg-diff}
Suppose that $h:\mathbb{R}\rightarrow\mathbb{R}$ satisfies $c'(a-b)\leq h(a)-h(b)$ for any $a\leq b\in\mathbb{R}$. 
Then for any $x\in\mathbb{R}^n$, the distance between the $\emph{\textbf{softmax}}_h$ and the \emph{\textbf{argmax-e}} mapping is bounded by
\[
\|\emph{\textbf{softmax}}_h(x)-\emph{\textbf{argmax-e}}(x)\|_2\leq 2n\exp(-c'\delta),
\]
where {\color{black}$\delta=x_{\max}-\max_{x_j<x_{\max}}x_j$, $x_{\max}=\max_{i=1,\dots,n}x_i$, and}  $\delta:=\infty$ when all $x_j$ are equal.
\end{lemma}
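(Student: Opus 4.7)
The plan is to reduce the problem to a direct computation by controlling the ``tail mass'' of the softmax, and then use $\|\cdot\|_2 \leq \|\cdot\|_1$ to get the claimed bound. First, set $x_{\max}=\max_i x_i$, let $I^\star=\{i:x_i=x_{\max}\}$ and $k=|I^\star|$, so that $\textbf{argmax-e}(x)_i = 1/k$ for $i\in I^\star$ and $0$ otherwise. The case in which all coordinates are equal is trivial since both $\textbf{softmax}_h(x)$ and $\textbf{argmax-e}(x)$ are the uniform distribution, so assume $k<n$ and $\delta<\infty$.

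Next, the lower-slope assumption on $h$ gives, for every $j\notin I^\star$,
\[
h(x_{\max})-h(x_j)\ \geq\ c'(x_{\max}-x_j)\ \geq\ c'\delta,
\]
hence $\exp(h(x_j)-h(x_{\max}))\leq \exp(-c'\delta)$. Define the tail mass
\[
\rho\ :=\ \sum_{j\notin I^\star}\exp\bigl(h(x_j)-h(x_{\max})\bigr)\ \leq\ (n-k)\exp(-c'\delta)\ \leq\ n\exp(-c'\delta).
\]
Normalizing by $\exp(h(x_{\max}))$ in numerator and denominator of $\textbf{softmax}_h(x)$ expresses the entries compactly: for $i\in I^\star$ the entry equals $1/(k+\rho)$, and for $i\notin I^\star$ it equals $\exp(h(x_i)-h(x_{\max}))/(k+\rho)$.

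Now form the difference vector $v=\textbf{softmax}_h(x)-\textbf{argmax-e}(x)$. For $i\in I^\star$,
\[
v_i\ =\ \frac{1}{k+\rho}-\frac{1}{k}\ =\ -\frac{\rho}{k(k+\rho)},
\]
and for $i\notin I^\star$, $v_i=\exp(h(x_i)-h(x_{\max}))/(k+\rho)\geq 0$. Summing the absolute values, the $I^\star$ contribution is $k\cdot\rho/(k(k+\rho))=\rho/(k+\rho)$ and the complementary contribution is $\sum_{j\notin I^\star}\exp(h(x_j)-h(x_{\max}))/(k+\rho)=\rho/(k+\rho)$, so
\[
\|v\|_1\ =\ \frac{2\rho}{k+\rho}\ \leq\ 2\rho\ \leq\ 2n\exp(-c'\delta).
\]
Since $\|v\|_2\leq \|v\|_1$ (as $v$ has at most $n$ nonzero entries and the inequality is standard), the claimed bound $\|\textbf{softmax}_h(x)-\textbf{argmax-e}(x)\|_2\leq 2n\exp(-c'\delta)$ follows.

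There is no real obstacle here: the only subtlety is recognizing that working in $\ell_1$ (which is exact by the telescoping cancellation above) yields a strictly sharper constant than one would obtain by bounding each coordinate separately in $\ell_2$. The lower-slope hypothesis on $h$ is used only once, to convert the coordinate gap $\delta$ into an exponential decay rate on the off-max softmax weights.
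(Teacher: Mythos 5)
Your proof is correct and follows essentially the same route as the paper's: both compute the $\ell_1$ distance exactly (the paper after relabeling so the maximizers come first, you after normalizing by $e^{h(x_{\max})}$), obtaining $2\rho/(k+\rho)$ for the off-max mass $\rho$, bound $\rho\le n e^{-c'\delta}$ via the lower-slope hypothesis on $h$, and finish with $\|\cdot\|_2\le\|\cdot\|_1$. No gaps; the argument matches the paper's proof up to notation.
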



{\color{black}
Similar to Lemma \ref{softmax-lip}, Lemma \ref{soft-arg-diff} implies that 
for any $x\in\mathbb{R}^{|\mathcal{X}|}$, viewing $\textbf{softmax}_h(x)$  as probability distributions over $\mathcal{X}$ leads to  
\[
W_1(\textbf{softmax}_{\color{black}h}(x),\textbf{argmax-e}(x))\leq \text{diam}(\mathcal{X})|\mathcal{X}|\exp(-c\delta).
\]
}

\begin{proof}{[Proof of Lemma~\ref{soft-arg-diff}]}
Without loss of generality, assume that $x_1=x_2=\dots= x_m=\max_{i=1,\dots,n}x_i=x^\star>x_j$ for all $m<j\leq n$. Then \[
\textbf{argmax-e}(x)_i=
\begin{cases}
\frac{1}{m}, & i \leq m,\\
0, & otherwise.
\end{cases}
\]
\[
\textbf{softmax}_h(x)_i=
\begin{cases}
\frac{e^{h(x^\star)}}{me^{h(x^\star)}+\sum_{j=m+1}^ne^{h(x_j)}}, & i\leq m, \\
\frac{e^{h(x_i)}}{me^{h(x^\star)}+\sum_{j=m+1}^ne^{h(x_j)}}, & otherwise.
\end{cases}
\]
Therefore 
\begin{equation*}
\begin{split}
\|\textbf{soft}&\textbf{max}_h(x)-\textbf{argmax-e}(x)\|_2\leq \|\textbf{softmax}_h(x)-\textbf{argmax-e}(x)\|_1\\
=&m\left(\frac{1}{m}-\frac{e^{h(x^\star)}}{me^{h(x^\star)}+\sum_{j=m+1}^ne^{h(x_j)}}\right)+\frac{\sum_{i=m+1}^ne^{h(x_i)}}{me^{h(x^\star)}+\sum_{j=m+1}^ne^{h(x_j)}}\\
=& \frac{2\sum_{i=m+1}^ne^{h(x_i)}}{me^{h(x^\star)}+\sum_{i=m+1}^ne^{h(x_i)}}= \frac{2\sum_{i=m+1}^ne^{-c'\delta_i}}{m+\sum_{i=m+1}^ne^{-c\delta_i}}\\
\leq &\frac{2}{m}\sum_{i=m+1}^ne^{-c'\delta_i}\leq \frac{2(n-m)}{m}e^{-c'\delta}\leq  2ne^{-c'\delta},
\end{split}
\end{equation*}
with $\delta_i=x^\star-x^i$.
\qed
\end{proof}
}

We are now ready to present the proofs of Theorems \ref{conv_AIQL} and \ref{conv_AIQL-II}.

\subsection{Proof of Theorems \ref{conv_AIQL} and \ref{conv_AIQL-II}} \label{thm2-thm3}

\begin{proof}{[Proof of Theorem \ref{conv_AIQL}]}
{\color{black}Here we prove the case when we are using GMF-V and \textit{Alg} has a value-based guarantee. }
Define $\hat{\Gamma}_1^k(\mathcal{L}_k):=f_{c,c'}\left(\hat{Q}^\star_{k}\right)$. In the following, $\pi=f_{c,c'}(Q_{\mathcal{L}})$ is understood as the policy $\pi$ with $\pi(s)=f_{c,c'}(Q_{\mathcal{L}}(s,\cdot))$. Let $\mathcal{L}^\star$ be the population state-action pair in a stationary NE of \eqref{mfg}. Then $\pi_k=\hat{\Gamma}_1^k(\mathcal{L}_k)$.
Denoting $d:=d_1d_2+d_3$,  we see 
\begin{equation*}
    \begin{split}
        W_1(\tilde{\mathcal{L}}_{k+1}&,\mathcal{L}^\star)=W_1(\Gamma_2(\pi_k,\mathcal{L}_k),\Gamma_2(\Gamma_1(\mathcal{L}^\star),\mathcal{L}^\star))\\
        \leq& W_1(\Gamma_2(\Gamma_1(\mathcal{L}_k),\mathcal{L}_k),\Gamma_2(\Gamma_1(\mathcal{L}^\star),\mathcal{L}^\star))+W_1(\Gamma_2(\Gamma_1(\mathcal{L}_k),\mathcal{L}_k),\Gamma_2(\hat{\Gamma}_1^k(\mathcal{L}_k),\mathcal{L}_k))\\
        \leq& W_1(\Gamma(\mathcal{L}_k),\Gamma(\mathcal{L}^\star))+d_2D(\Gamma_1(\mathcal{L}_k),\hat{\Gamma}_1^k(\mathcal{L}_k))\\
        \leq & (d_1d_2+d_3)W_1(\mathcal{L}_k,\mathcal{L}^\star)+d_2D(\textbf{argmax-e}(Q_{\mathcal{L}_k}^\star),f_{c,c'}(\hat{Q}_{k}^\star))\\
        \leq& dW_1(\mathcal{L}_k,\mathcal{L}^\star)+d_2D(f_{c,c'}(\hat{Q}_{k}^\star),f_{c,c'}(Q_{\mathcal{L}_k}^\star))\\
        &+d_2D(\textbf{argmax-e}(Q_{\mathcal{L}_k}^\star),f_{c,c'}(Q_{\mathcal{L}_k}^\star))\\
        \leq & dW_1(\mathcal{L}_k,\mathcal{L}^\star)+\frac{cd_2\text{diam}(\mathcal{A})|\mathcal{A}|}{2}\|\hat{Q}_{{\color{black}k}}^\star-Q_{{\color{black}\mathcal{L}_k}}^\star\|_{\infty}+d_2D(\textbf{argmax-e}(Q_{\mathcal{L}_k}^\star),f_{c,c'}(Q_{\mathcal{L}_k}^\star)).
    \end{split}
\end{equation*}
Since $\mathcal{L}_k\in S_{\epsilon}$ by the projection step, {\color{black} by} Lemma \ref{soft-arg-diff} and 
{\color{black} the algorithm \textit{Alg} has a policy-based guarantee, }with the choice of $T_k=T_{\mathcal{M}_{{\color{black}\mathcal{L}_k}}}(\delta_k,\epsilon_k)$),  
we have,  with probability at least $1-2\delta_k$, 
\begin{equation}
W_1(\tilde{\mathcal{L}}_{k+1},\mathcal{L}^\star)\leq  dW_1(\mathcal{L}_k,\mathcal{L}^\star)+\frac{cd_2\text{diam}(\mathcal{A})|\mathcal{A}|}{2}\epsilon_k+d_2\text{diam}(\mathcal{A})|\mathcal{A}|e^{-c'\phi(\epsilon)}.
\end{equation}
Finally, with probability at least $1-2\delta_k$,
\begin{equation*}
    \begin{split}
   W_1(\mathcal{L}_{k+1},\mathcal{L}^\star)&\leq W_1(\tilde{\mathcal{L}}_{k+1},\mathcal{L}^\star)+ W_1(\tilde{\mathcal{L}}_{k+1},\textbf{Proj}_{S_{\epsilon}}(\tilde{\mathcal{L}}_{k+1})) \\
    &\leq dW_1(\mathcal{L}_k,\mathcal{L}^\star)+\frac{cd_2\text{diam}(\mathcal{A})|\mathcal{A}|}{2}\epsilon_k+d_2\text{diam}(\mathcal{A})|\mathcal{A}|e^{-c'\phi(\epsilon)}+\epsilon.
    \end{split}
\end{equation*}
This implies that with probability at least $1-2\sum_{k=0}^{{\color{black}K-1}}\delta_k$, 
\begin{equation}
\begin{split}
W_1(\mathcal{L}_K, \mathcal{L}^\star)\leq &d^KW_1(\mathcal{L}_0,\mathcal{L}^\star)+\frac{cd_2\text{diam}(\mathcal{A})|\mathcal{A}|}{2}\sum_{k=0}^{K-1}d^{K-k}\epsilon_k \\
&+\dfrac{(d_2\text{diam}(\mathcal{A})|\mathcal{A}|e^{-c'\phi(\epsilon)}+\epsilon)(1-d^{\color{black}K})}{1-d}.
\end{split}
\end{equation}
Since $\epsilon_k$ is summable,  we have $\sup_{k\geq 0}\epsilon_k<\infty$, 
$$\sum_{k=0}^{K-1}d^{K-k}\epsilon_k\leq \dfrac{\sup_{k\geq 0}\epsilon_k}{1-d}d^{\lfloor(K-1)/2\rfloor}+\sum_{k=\lceil(K-1)/2\rceil}^{\infty}\epsilon_k.$$
Now plugging in $K=K_{\epsilon,\eta}$, with the choice of $\delta_k$ and $c=\frac{\log(1/\epsilon)}{\phi(\epsilon)}$, and noticing that $d\in[0,1)$, we have with probability at least $1-2\delta$, 
\begin{equation}
\begin{split}
W_1(\mathcal{L}_{K_{\epsilon,\eta}},\mathcal{L}^\star)\leq &d^{K_{\epsilon,\eta}}W_1(\mathcal{L}_0,\mathcal{L}^\star)\\
&+\frac{cd_2\text{diam}(\mathcal{A})|\mathcal{A}|}{2}\left( \dfrac{\sup_{k\geq 0}\epsilon_k}{1-d}d^{\lfloor(K_{\epsilon,\eta}-1)/2\rfloor}+\sum_{k=\lceil(K_{\epsilon,\eta}-1)/2\rceil}^{\infty}\epsilon_k\right)\\
&+\dfrac{(d_2\text{diam}(\mathcal{A})|\mathcal{A}|+1)\epsilon}{1-d}.
\end{split}
\end{equation}
Setting $\epsilon_k=(k+1)^{-(1+\eta)}$, then when $K_{\epsilon,\eta}\geq 2(\log_d(\epsilon{\color{black}/c})+1)$, 
\[
 \dfrac{\sup_{k\geq 0}\epsilon_k}{1-d}d^{\lfloor(K_{\epsilon,\eta}-1)/2\rfloor}\leq \frac{\epsilon{\color{black}/c}}{1-d}.
\] 
Similarly, when $K_{\epsilon,\eta}\geq 2(\eta\epsilon{\color{black}/c})^{-1/\eta}$,  $$\sum_{k=\left\lceil\frac{K_{\epsilon,\eta}-1}{2}\right\rceil}^{\infty}\epsilon_k\leq \epsilon{\color{black}/c}.$$
Finally, when $K_{\epsilon,\eta}\geq \log_d(\epsilon/(\text{diam}(\mathcal{S})\text{diam}(\mathcal{A})))$, 
$d^{K_{\epsilon,\eta}}W_1(\mathcal{L}_0,\mathcal{L}^\star)\leq \epsilon$,
since $W_1(\mathcal{L}_0,\mathcal{L}^\star)\leq \text{diam}(\mathcal{S}\times\mathcal{A}){\color{black}=\text{diam}(\mathcal{S})\text{diam}(\mathcal{A})}$. 

In summary, if $K_{\epsilon,\eta}=\lceil 2\max\{(\eta\epsilon{\color{black}/c})^{-1/\eta}$, $\log_d(\epsilon/\max\{\text{diam}(\mathcal{S})\text{diam}(\mathcal{A}),{\color{black}c}\})+1\}\rceil$, then with probability at least $1-2\delta$,  
\begin{equation}\label{W1bound}
\begin{split}
&W_1(\mathcal{L}_{K_{\epsilon,\eta}},\mathcal{L}^\star)\leq \left(1+\frac{d_2\text{diam}(\mathcal{A})|\mathcal{A}|(2-d)}{2(1-d)}+\dfrac{(d_2\text{diam}(\mathcal{A})|\mathcal{A}|+1)}{1-d}\right)\epsilon=O(\epsilon).
\end{split}
\end{equation}


{\color{black}Finally, if we are using GMF-V and have assumed that \textit{Alg} satisfies a value-based guarantee with parameters $\{C_{\mathcal{M}}^{(i)},\alpha_1^{(i)},\alpha_2^{(i)},\alpha_3^{(i)},\alpha_4^{(i)}\}_{i=1}^m$,} plugging in $\epsilon_k$ and $\delta_k$ into {\color{black}$T_{\mathcal{M}_{\mathcal{L}}}(\delta_k,\epsilon_k)$}, and noticing that $k {\color{black} \leq }  K_{\epsilon,\eta}$ and $\sum_{k=0}^{K_{\epsilon,\eta}-1}(k+1)^{\alpha}\leq \frac{K_{\epsilon,\eta}^{\alpha+1}}{\alpha+1}$, we  have 
{\color{black}
\begin{equation}\label{Tbound_APP}
\begin{split}
T&=\sum_{k=0}^{K_{\epsilon,\eta}}\sum_{i=1}^mC_{\mathcal{M}}^{(i)}\left(\frac{1}{\epsilon_k}\right)^{\alpha_1^{(i)}}\left(\log\frac{1}{\epsilon_k}\right)^{\alpha_2^{(i)}}\left(\frac{1}{\delta_k}\right)^{\alpha_3^{(i)}}\left(\log\frac{1}{\delta_k}\right)^{\alpha_4^{(i)}}\\
&=\sum_{k=0}^{K_{\epsilon,\eta}}\sum_{i=1}^m(1+\eta)^{\alpha_2^{(i)}}C_{\mathcal{M}}^{(i)}(k+1)^{\alpha_1^{(i)}(1+\eta)}(\log(k+1))^{\alpha_2^{(i)}}(K_{\epsilon,\eta}/\delta)^{\alpha_3^{(i)}}\left(\log(K_{\epsilon,\eta}/\delta)\right)^{\alpha_4^{(i)}}\\
&\leq \sum_{i=1}^m\frac{(1+\eta)^{\alpha_2^{(i)}}}{\alpha_1^{(i)}(1+\eta)+1}C_{\mathcal{M}}^{(i)}K_{\epsilon,\eta}^{\alpha_1^{(i)}(1+\eta)+1}\left(\log(K_{\epsilon,\eta}+1)\right)^{\alpha_2^{(i)}}(K_{\epsilon,\eta}/\delta)^{\alpha_3^{(i)}}\left(\log(K_{\epsilon,\eta}/\delta)\right)^{\alpha_4^{(i)}}\\
&\leq \sum_{i=1}^m\dfrac{2^{\alpha_2^{(i)}}}{2\alpha_1^{(i)}+1}C_{\mathcal{M}}^{(i)}K_{\epsilon,\eta}^{2\alpha_1^{(i)}+1}(K_{\epsilon,\eta}/\delta)^{\alpha_3^{(i)}}\left(\log(K_{\epsilon,\eta}/\delta)\right)^{\alpha_2^{(i)}+\alpha_4^{(i)}},
\end{split}
\end{equation}
which completes the proof of the value-based case.
}
\end{proof}

{\color{black}
\begin{proof}{[Proof of Theorem \ref{conv_AIQL-II}]}
If we use GMF-P and assume that \textit{Alg} has the policy-based guarantee, then by Lemma \ref{conv_2to1}, 
\begin{equation}
\mathbb{P}\left(\left\|\tilde{Q}_{\hat{\pi}_k}^{l_k}-Q_{\mathcal{L}_k}^\star\right\|_{\infty}> \epsilon\right)\leq 2\delta.
\end{equation}
Hence one can simply replace $\hat{Q}_{k}^\star$ by $\tilde{Q}_{\hat{\pi}_k}^{l_k}$ in the proof of Theorem \ref{conv_AIQL}, and obtain the same bound on $W_1(\mathcal{L}_{K_{\epsilon,\eta}},\mathcal{L}^\star)$ (\textit{cf.} \eqref{W1bound}). The only difference is that in each iteration, the required number of samples $T_{\mathcal{M}_{\mathcal{L}}}$ now has parameters $\{\tilde{C}_{\mathcal{M}}^{(i)},\alpha_1^{(i)},\alpha_2^{(i)},\alpha_3^{(i)},\alpha_4^{(i)}\}_{i=1}^{m+3}$  as defined in Lemma \ref{conv_2to1}. Hence repeating the proof of \eqref{Tbound_APP} leads to \eqref{Tbound-II}.
\qed
\end{proof}
}

\section{Experiments}\label{experiments}

In this section, we report the performance of the proposed GMF-V-Q Algorithm and GMF-P-TRPO Algorithm with an equilibrium pricing model (see Section \ref{sec:examples}). The objectives of the experiments include 1) testing the convergence and stability of both GMF-V-Q and GMF-P-TRPO  in the GMFG setting, 2) empirically verifying the contractive property of mapping $\Gamma$, and 3) comparing GMF-V-Q  and GMF-P-TRPO with existing multi-agent reinforcement learning algorithms, including the Independent Learner (IL) algorithm \citet{T1993,hu2020evolutionary} and the MF-Q\footnote{\color{black}Note that MF-Q is designed for global states and coupled local actions, while in our equilibrium price example we have coupled local (private) states and decoupled local actions. To suit this setting, we adapt MF-Q by replacing the mean-field action term with the mean-field state term.} algorithm \citet{YLLZZW2018}. Another set of experiments for the repeated auction model (see Section \ref{sec:examples}) is demonstrated in the short version \citet{guo2019learning}.

\subsection{Set-up and parameter configuration}\label{sec:experiment_setup}
 We introduce two testing environments in our numerical experiments, one is the GMFG environment with a continuum of agents (i.e., infinite number of agents) descried in Section \ref{sec:examples} and the other one is an N-player environment with a weak simulator.

\paragraph{Equilibrium price as an $N$-player game.}
We also consider an $N$-player game version of the equilibrium price model, which is the GMFG version described above with an $N$-player weak simulator oracle as described in Section \ref{appl_n_game}.  {In particular, Take $N$ companies. At each time $t$, company $i$ decides a quantity $q_t^i$ for production and a quantity $h_t^i$ to replenish the inventory. Let $s_t^i$ denote the current inventory level of company $i$ at time $t$. Then similar to Section \ref{sec:examples}, the inventory level evolves according to
\[
s_{t+1}^i = s_t^i-\min\{q_t^i,s_t^i\}+h_t
\]
and the reward of company $i$ at time $t$ is given by
\[
r_t^i = (p_t-c_0)q_t^i-c_1(q_t^i)^2-c_2h_t^i-(c_2+c_3)\max\{q_t^i-s_t^i,0\}-c_4s_t^i.
\]
Here $p_t$, the price of the product at time $t$, is determined according to the supply-demand equilibrium on the market. The total supply is $\sum_{i=1}^Nq_t^i$, while the total demand is assumed to be $d_Np_t^{-\sigma}$, where $d_N=dN$ is supposed to be linearly growing as $N$ grows, \textit{i.e.}, the number of customers grows proportionally to the number of producers in the market. Then by equating supply and demand, we obtain that 
\[
\frac{1}{N}\sum_{i=1}^Nq_t^i=d p_t^{-\sigma},
\]
and by taking the limit $N\rightarrow\infty$, we obtain the mean-field counterpart \eqref{price_equil_model}}.

In this setting, accordingly, we test the performance of GMF-VW-Q, which is GMF-VW (Algorithm \ref{GMF-VW}) with synchronous Q-learning and the standard \textbf{softmax} operator (\textit{cf}. Algorithm \ref{AIQL_MFG_W}) and  GMF-PW-TRPO, which is GMF-PW (Algorithm \ref{GMF-PW}) with TRPO and the standard \textbf{softmax} operator.{\color{black}\footnote{For the sake of brevity, we omit the algorithm frame for GMF-PW-TRPO.} }

{\color{black}
\begin{algorithm}[H]
  \caption{\textbf{Q-learning for GMFGs (GMF-VW-Q)}: weak simulator}
  \label{AIQL_MFG_W}
\begin{algorithmic}[1]
  \STATE \textbf{Input}: Initial $\mathcal{L}_0$, {\color{black}$\epsilon$-net $S_{\epsilon}$, tolerances $\epsilon_k,~\delta_k>0$, $k=0,1,\dots$.} 
 \FOR {$k=0, 1, \cdots$}
  \STATE 
  \begin{addmargin}[1em]{0em}
  Perform Q-learning with hyper-parameters in Lemma \ref{Q-finite-bd} for {\color{black}$T_k=T_{\mathcal{M}_{\mathcal{L}_k}}(\epsilon_k,\delta_k)$} iterations to find the approximate Q-function {\color{black}$\hat{Q}_k^\star=\hat{Q}^{T_k}$} of the MDP {\color{black}$\mathcal{M}_{\mathcal{L}_k}$}. 
  \end{addmargin}
  \STATE 
  \begin{addmargin}[1em]{0em}
  Compute $\pi_k\in\Pi$ with $\pi_k(s)=\textbf{softmax}_c(\hat{Q}^\star_k(s,\cdot))$.
  \end{addmargin}\vspace{-0.3cm}
      \begin{addmargin}[1em]{0em}
       \FOR {$i=1, 2, \cdots,N$}
 \STATE
 \begin{addmargin}[1em]{0em}
 Sample $s_i\overset{\text{i.i.d.}}{\sim} \mu_k$,  then obtain $s_i'$ i.i.d. from $\mathcal{G}_W(s_i,\pi_k,\mathcal{L}_k)$ and $a_i'\overset{\text{i.i.d.}}{\sim}\pi_k(s_i')$. 
 \end{addmargin}\vspace{0.3em}
 \ENDFOR
 \STATE
 Compute $\mathcal{L}_{k+1}$ with $\mathcal{L}_{k+1}(s,a)=\frac{1}{N}\sum_{i=1}^N\textbf{I}_{s_i'=s,a_i'=a}$.
  \end{addmargin}
\ENDFOR
\end{algorithmic}
\end{algorithm}
}


 \paragraph{Parameters.}
The model parameters are (unless otherwise specified):  $\gamma=0.2$, $d=50$ and $\sigma=2$. $S=Q=H=10$ and hence $|\mathcal{S}|=10$ and $|\mathcal{A}|=100$. $c_0 = 0.5$, $c_1 = 0.1$, $c_2=0.5$, $c_3 = 0.2$ and $c_4=0.2$.

The algorithm parameters are (unless otherwise specified):  the temperature parameter is set as $c=4.0$ and the learning rate is set as $\eta=0.01$    \footnote{Lemma \ref{Q-finite-bd} indicates that the learning rate should be inverse proportional to the current visitation number of a given state-action pair, we observe that constant learning rate works well in practice which is easier to implement.}. For simplicity, we set the inner iteration $T_k$ to be $100\times |\mathcal{S}|\times |\mathcal{A}|$. 
The $90\%$-confidence intervals are calculated with $20$ sample paths.

\subsection{Performance evaluation in the GMFG setting.}\label{performance_eval}
Our experiments show that  GMF-V-Q and GMF-P-TRPO Algorithms are efficient and robust.

\paragraph{Performance metric.}
We adopt  the following metric  to measure the difference between  a given policy $\pi$ and an NE (here $\epsilon_0>0$ is a safeguard, and is taken as $0.1$ in the experiments): 
$$C_{MF}({\pi}) =\dfrac{\max_{{\pi}^{\prime}} \mathbb{E}_{s\sim \mu}[V(s,\pi^{\prime},\mathcal{L})]-V(s,\pi,\mathcal{L})}{|\max_{{\pi}^{\prime}} \mathbb{E}_{s\sim \mu}[V(s,\pi^{\prime},\mathcal{L})]|+\epsilon_0}.$$
Here {$\mu$ is the invariant distribution of the transition matrix $P^\pi$, where $P^\pi(s,s')=\sum_{a\in\mathcal{A}}P(s'|s,a)\pi(a|s)$ for $s,s'\in\mathcal{S}$, and $\mathcal{L}(s,a)=\mu(s)\pi(a|s)$ for $s,a\in\mathcal{S}\times\mathcal{A}$. Note that in the equilibrium product pricing model we are considering here, the transition model $P$ is independent of the mean-field term $\mathcal{L}$, and hence we write $P(s'|s,a)=P(s'|s,a,\mathcal{L})$. In general, an additional mean-field  matching error term needs to be added into the definition of $C_{MF}(\pi)$.} 
Clearly $C_{MF}({\pi}) \geq 0$, and $C_{MF}({\pi}^\star)=0$ if and only if $({\pi}^\star,\mathcal{L}^\star)$ is an NE where  {$\mathcal{L}^\star$ is the invariant distribution of $P^{\pi^\star}$}.
   A similar metric without normalization has been adopted in \citet{cui2021approximately}.

\paragraph{Contractiveness of mapping $\Gamma$.} As explained in Remark \ref{rmk:contract} from Section \ref{MFG_basic}, the contractiveness property of $\Gamma$ is the key for establishing the uniqueness of MFG solution and hence the convergence of the GMFG algorithm.  To empirically verify whether this property holds for the equilibrium price example, 
we plot the value of $\frac{\|\Gamma(\mathcal{L}_1)-\Gamma(\mathcal{L}_2)\|_{1}}{\|\mathcal{L}_1-\mathcal{L}_2\|_{1}}$ for randomly generated state-action distributions $\mathcal{L}_1$ and $\mathcal{L}_2$. Technically speaking, $\Gamma$ is contractive and there exists a unique MFG solution if the value of $\frac{\|\Gamma(\mathcal{L}_1)-\Gamma(\mathcal{L}_2)\|_{1}}{\|\mathcal{L}_1-\mathcal{L}_2\|_{1}}$ is smaller than one for all choices of $\mathcal{L}_1$ and $\mathcal{L}_2$. 

 W observe from Figure \ref{fig:contraction} that, with various of choices of different model parameters, the quantity $\frac{\|\Gamma(\mathcal{L}_1)-\Gamma(\mathcal{L}_2)\|_{1}}{\|\mathcal{L}_1-\mathcal{L}_2\|_{1}}$ is always smaller than $0.3$ indicating that $\Gamma$ is contractive.

\begin{figure}[H]
\centering
\subfigure[Default setting (see Section \ref{sec:experiment_setup}).]{ \label{fig:contraction1}
\includegraphics[width=.3\textwidth]{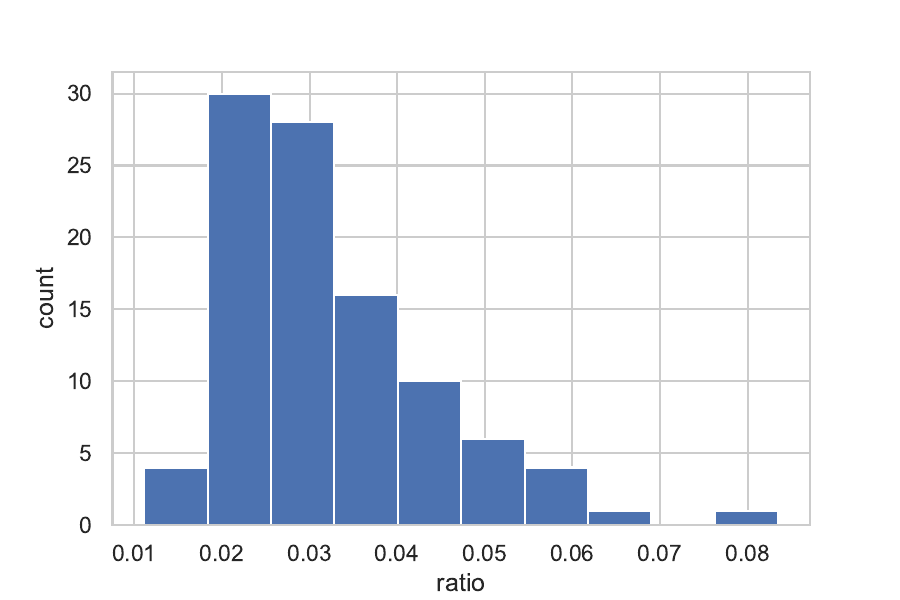}
}
\subfigure[$\gamma=0.1$.]{ \label{fig:contraction2}
\includegraphics[width=.3\textwidth]{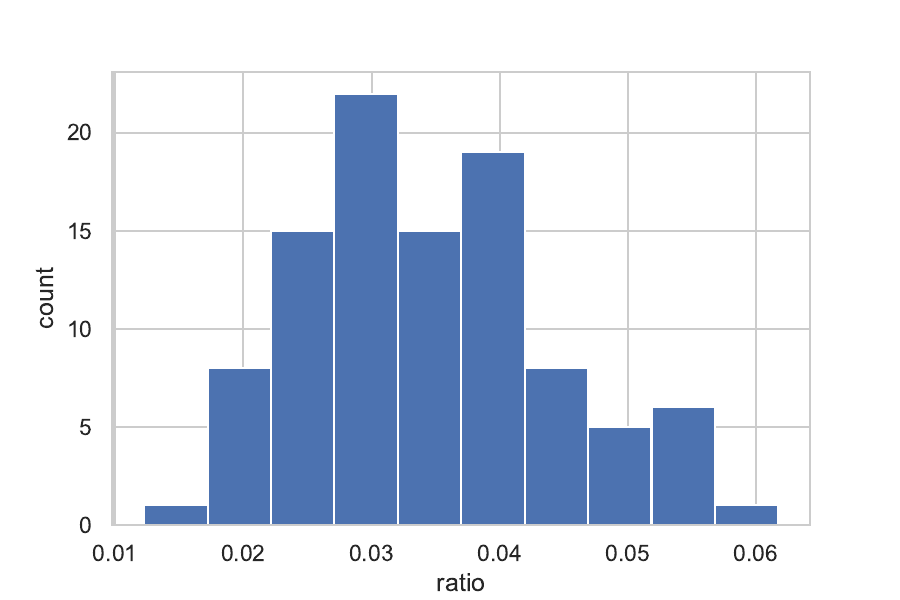}
}
\subfigure[$|\mathcal{S}|=5$ and $|\mathcal{A}|=25$.]{\label{fig:contraction3}
\includegraphics[width=.3\textwidth]{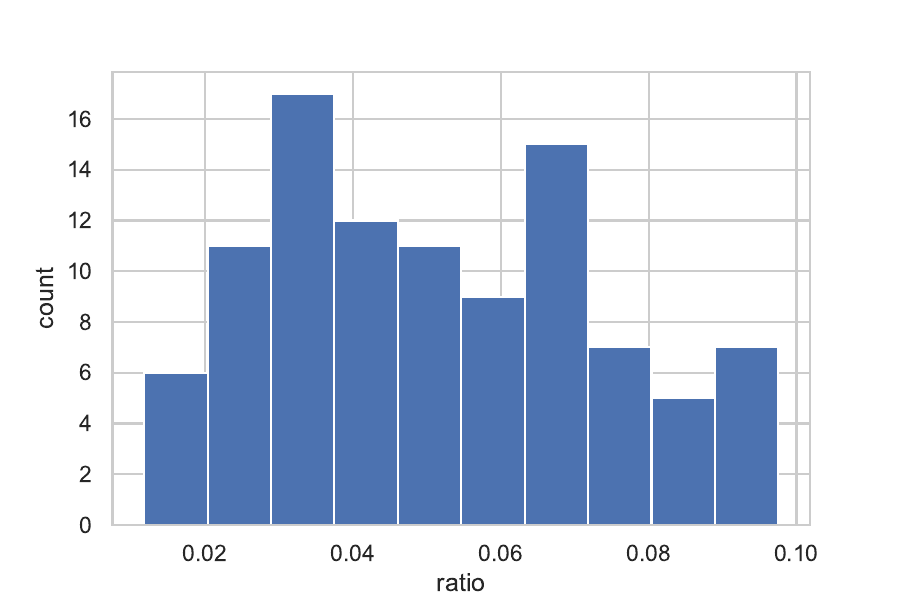}}

\subfigure[$c_0 = 2.5$, $c_1 = 0.5$, $c_2=2.5$, $c_3 = 1.0$ and $c_4=1.0$.]{\label{fig:contraction4}
\includegraphics[width=.3\textwidth]{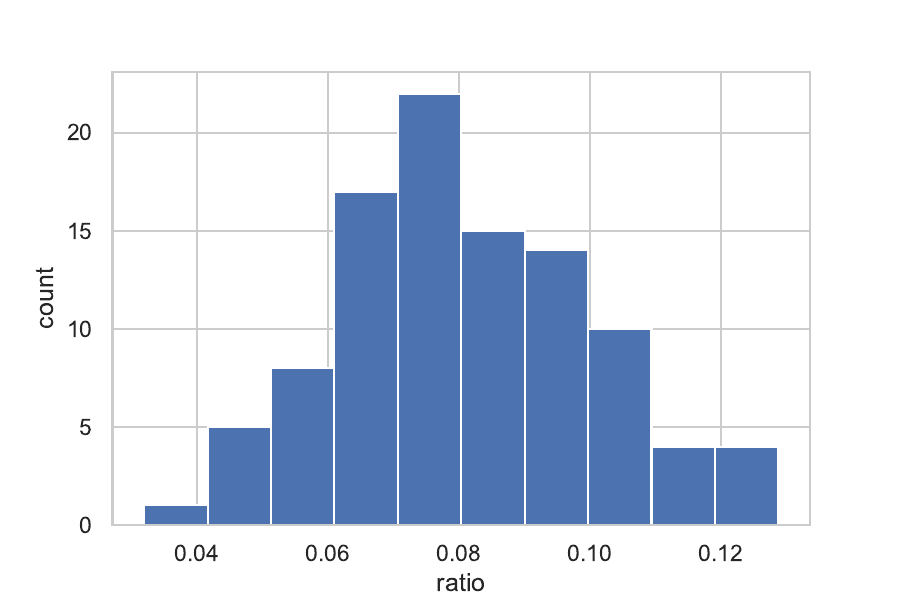}}
\subfigure[$d=200$.]{\label{fig:contraction5}
\includegraphics[width=.3\textwidth]{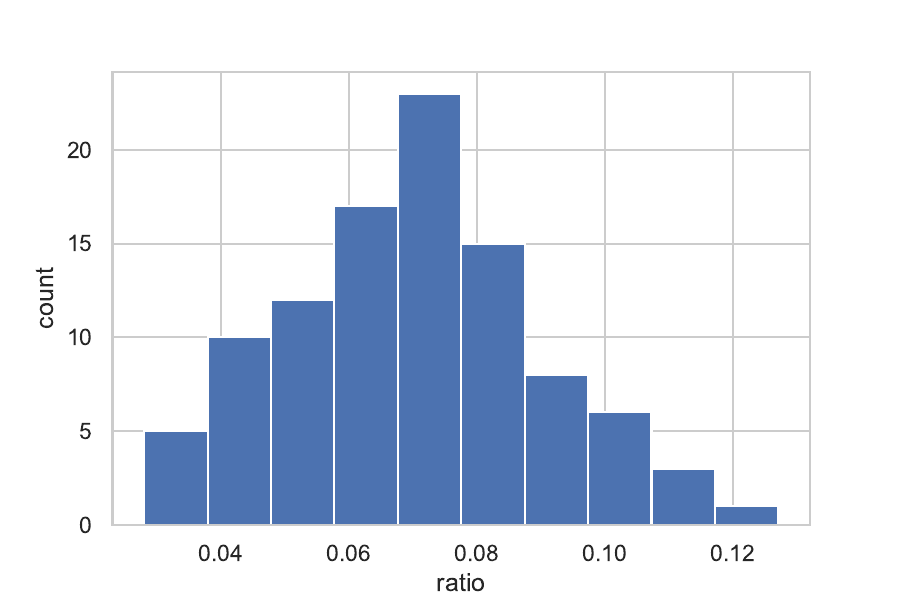}}
\subfigure[$\sigma=1.0$.]{\label{fig:contraction6}
\includegraphics[width=.3\textwidth]{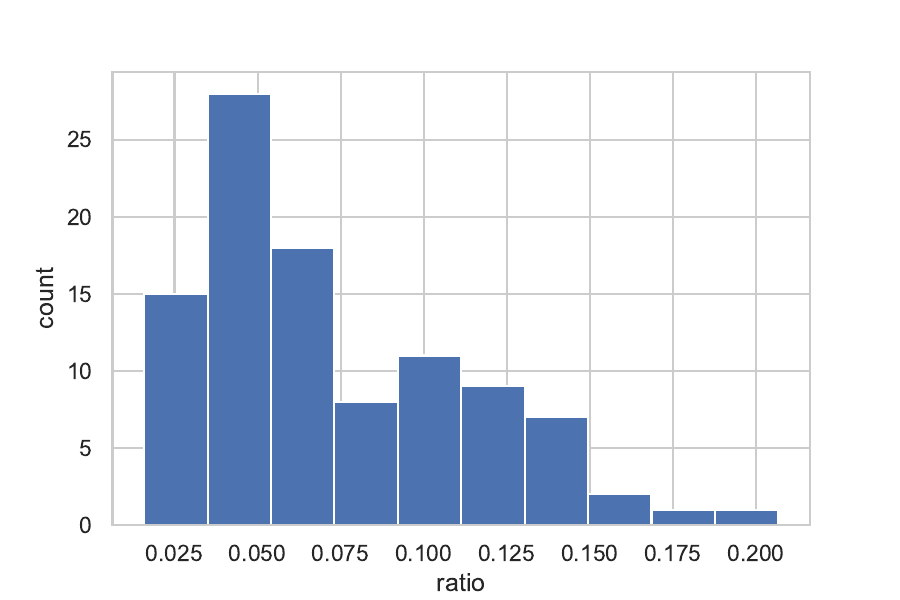}}
\caption{Histogram of  $\frac{\|\Gamma(\mathcal{L}_1)-\Gamma(\mathcal{L}_2)\|_{1}}{\|\mathcal{L}_1-\mathcal{L}_2\|_{1}}$ under various settings ($\mathcal{L}_1$ and $\mathcal{L}_2$ are randomly sampled according to the uniform distribution).}
\label{fig:contraction}
\end{figure}

\paragraph{ Convergence and stability.}
Both GMF-V-Q and GMF-P-TRPO are efficient and robust. First,  both GMF-V-Q and  GMF-P-TRPO converge within about $5$ outer iterations; secondly, as the number of inner iterations increases, the error decreases (Figure~\ref{fig:different_inner}); 
and finally,  the convergence is robust with respect to both the change of number of states and actions (Figure~\ref{fig:different_state}). The performance of GMF-V-Q 
  is (slightly) more stable than GMF-P-TRPO with a smaller variance across 20 repeated experiments (see Figure \ref{fig:different_inner_1} versus Figure \ref{fig:different_inner_2} or Figure \ref{fig:different_state_1} versus Figure \ref{fig:different_state_2}). {This is due to the fact that GMF-P-TRPO uses asynchronous updates, which leads to slightly less stable performance compared to GMF-V-Q, which uses synchronous updates.}

In contrast, the  Naive algorithms, i.e., GMF-V-Q without smoothing (denoted as GMF-V-Q-nonsmoothing) and GMF-P-TRPO without smoothing (denoted as GMF-P-TRPO-nonsmoothing), do not converge even with $50$ outer iterations and $200\times |\mathcal{S}|\times |\mathcal{A}|$ inner iterations within each outer iteration. In particular, GMF-V-Q-nonsmoothing  and GMF-P-TRPO-nonsmoothing present different unstable behaviors (see Figure \ref{fig:naive}).  The  joint distribution $\mathcal{L}_t$ from GMF-V-Q-nonsmoothing keeps fluctuating (Figure~\ref{fig:naive_1}) whereas the  joint distribution $\mathcal{L}_t$ from GMF-P-TRPO (without smoothing) is trapped around the initialization which is far away from the true equilibrium distribution (Figure~\ref{fig:naive_2}).

\begin{figure}[H]
    \centering
\subfigure[GMF-V-Q]{ \label{fig:different_inner_1}
\includegraphics[width=.42\textwidth]{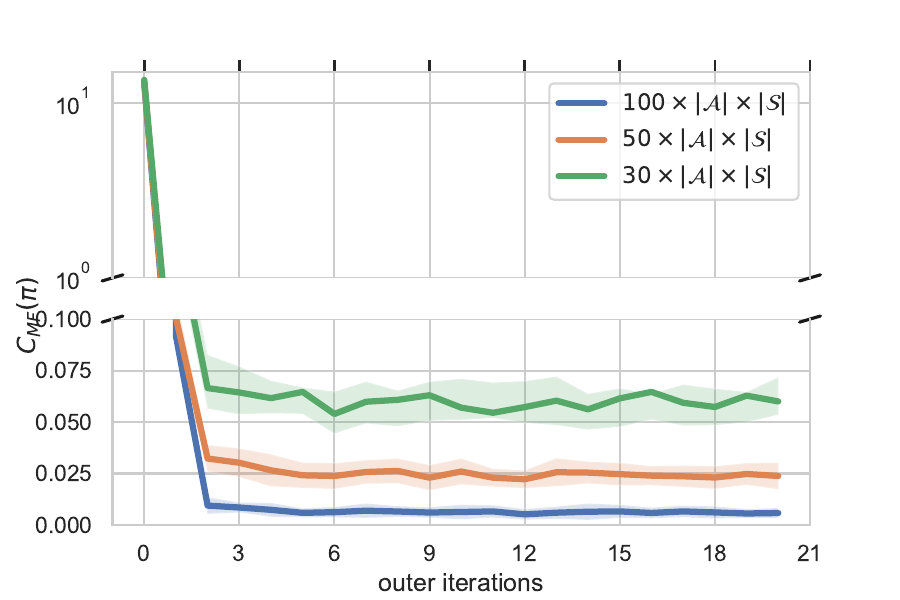}
}
\subfigure[GMF-P-TRPO.]{ \label{fig:different_inner_2}
\includegraphics[width=.42\textwidth]{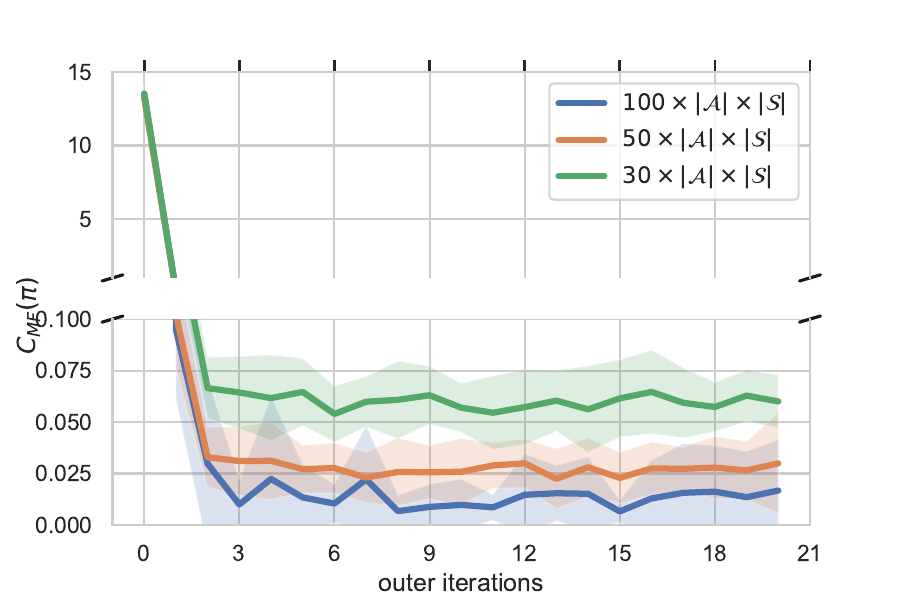}
}
\caption{Convergence with different   number of inner iterations ($|\mathcal{A}|=100$ and $|\mathcal{S}|=10$).\label{fig:different_inner}}
\end{figure}

\begin{figure}[H]
    \centering
\subfigure[GMF-V-Q]{ \label{fig:different_state_1}
\includegraphics[width=.42\textwidth]{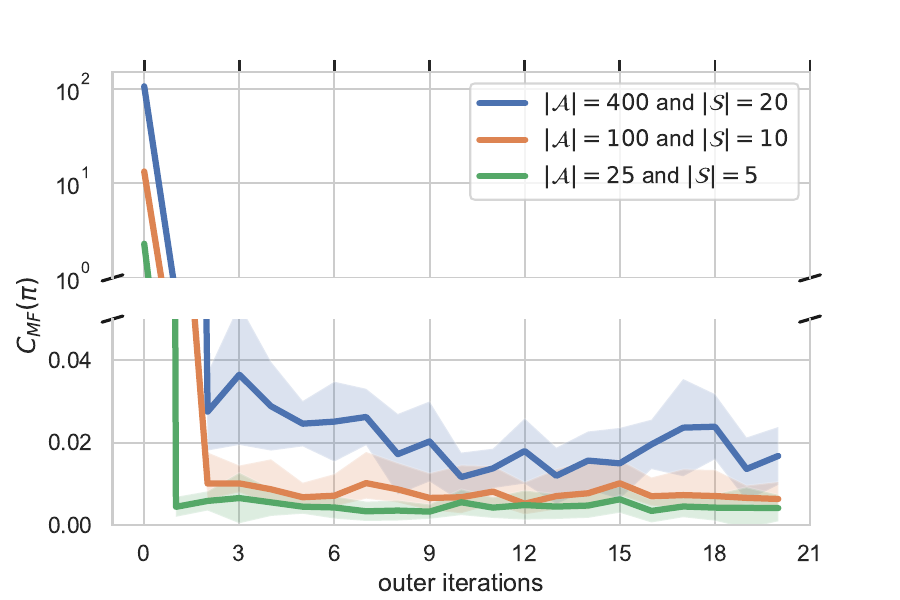}
}
\subfigure[GMF-P-TRPO.]{ \label{fig:different_state_2}
\includegraphics[width=.42\textwidth]{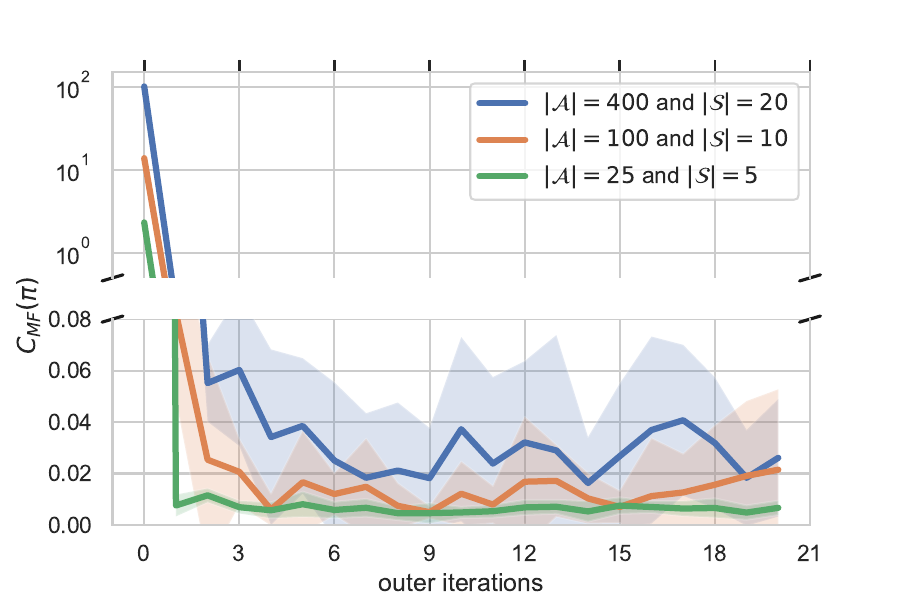}
}
\caption{Convergence with different size of state space and action space .\label{fig:different_state}}
\end{figure}

 \begin{figure}[H]
    \centering
\subfigure[Q-learning (5 sample paths)]{ \label{fig:naive_1}
\includegraphics[width=.42\textwidth]{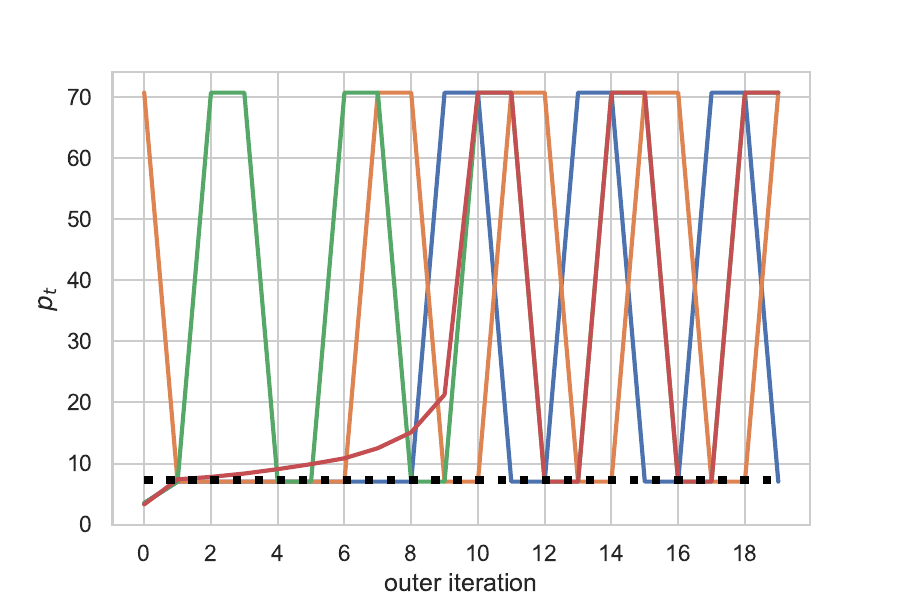}
}
\subfigure[TRPO (5 sample paths).]{ \label{fig:naive_2}
\includegraphics[width=.42\textwidth]{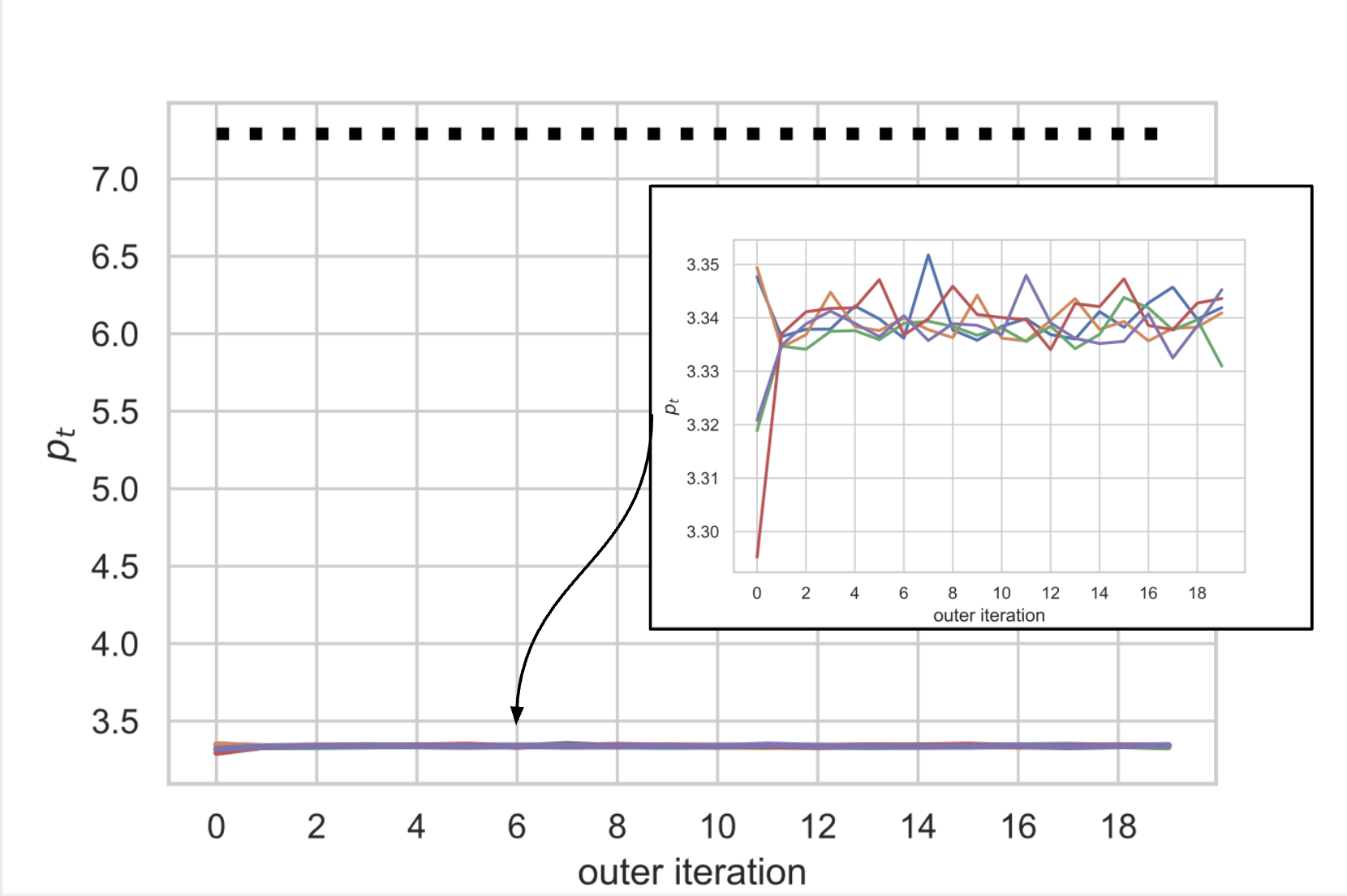}
}
\caption{Fluctuations of algorithms without smoothing (Dotted black line: theoretical value of the equilibrium price).}\label{fig:naive}
\end{figure}

\paragraph{Model verification and interpretation of equilibrium scenario.} 
In Figures \ref{fig:VP} and \ref{fig:VP_v2}, we run both algorithms for 20 outer iterations with the same number of inner iterations (100,000 = $100\times|\mathcal{A}|\times |\mathcal{S}|$) within each outer iteration. The final equilibrium inventory distribution and production distribution from both algorithms are close to each other. 
\begin{figure}[H]
\centering
\subfigure[Equilibrium inventory distribution.]{ \label{fig:VP1}
\includegraphics[width=.3\textwidth]{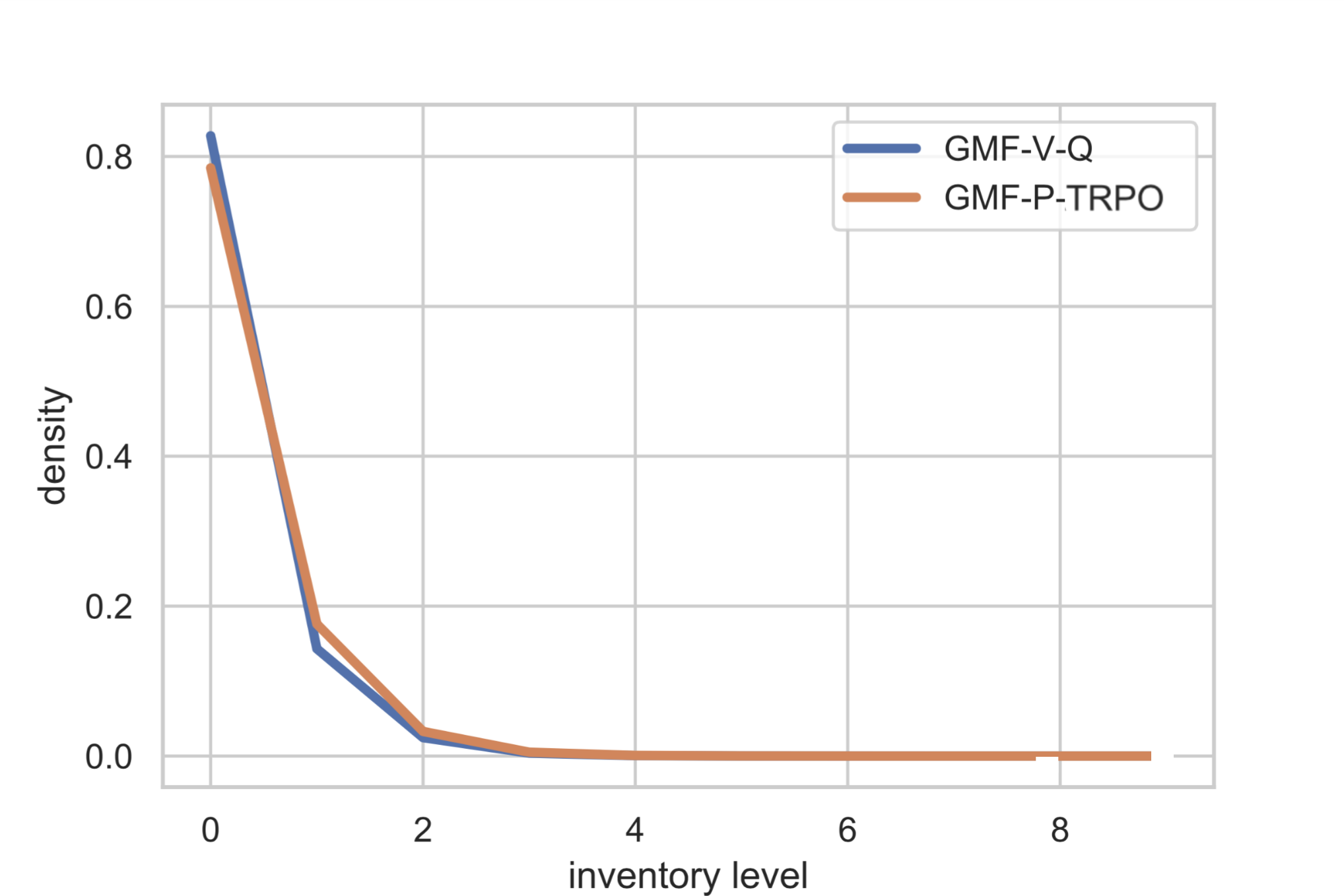}
}
\subfigure[Equilibrium production distribution.]{ \label{fig:VP2}
\includegraphics[width=.3\textwidth]{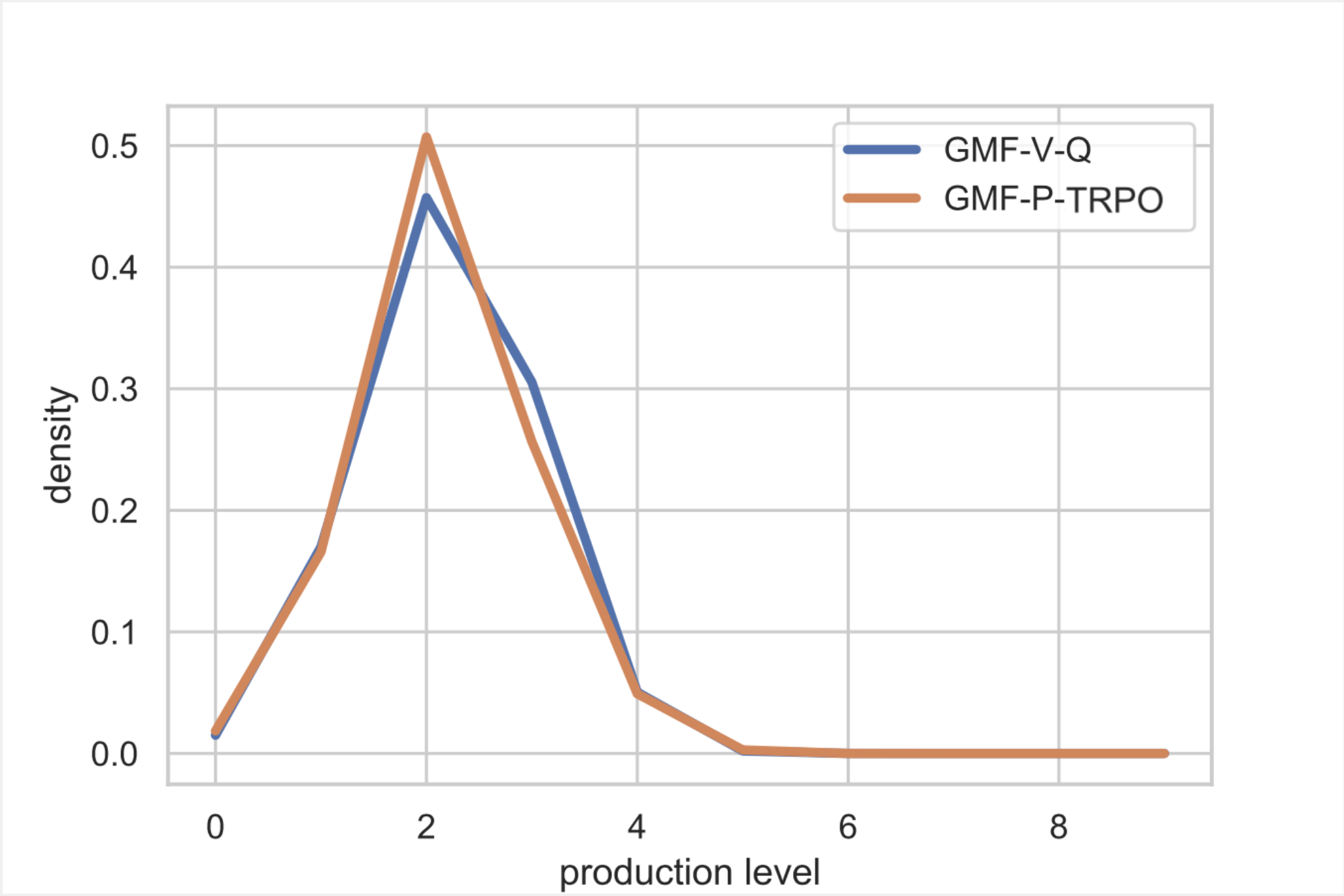}
}
\subfigure[Equilibrium price at the end of each outer iteration.]{\label{fig:VP3}
\includegraphics[width=.3\textwidth]{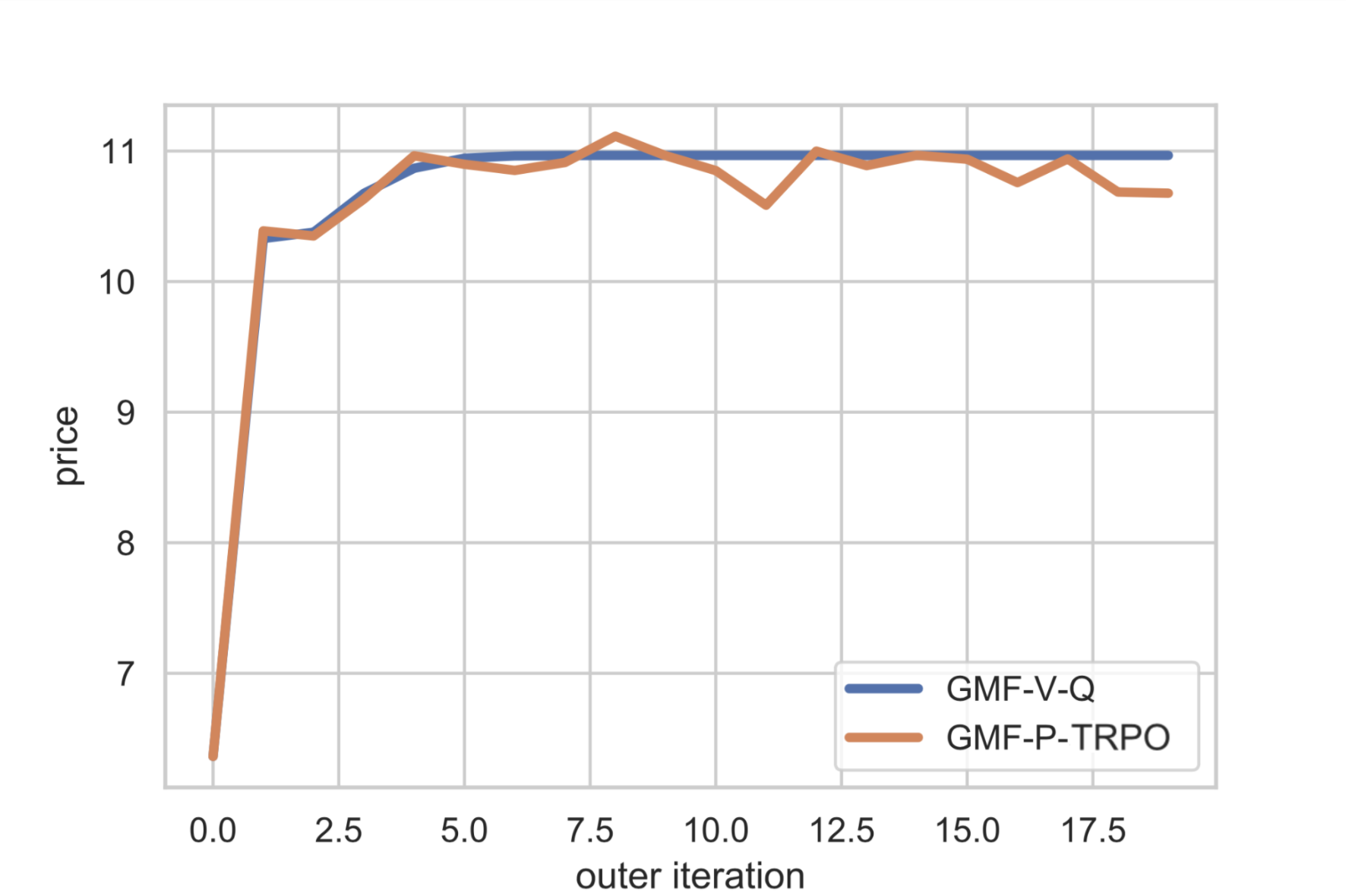}}
\caption{GMF-V-Q versus GMF-P-TRPO ($\sigma = 1.3$ and one trajectory). }
\label{fig:VP}
\end{figure}
\begin{figure}[H]
\centering
\subfigure[Equilibrium inventory distribution.]{ \label{fig:VP1_v2}
\includegraphics[width=.3\textwidth]{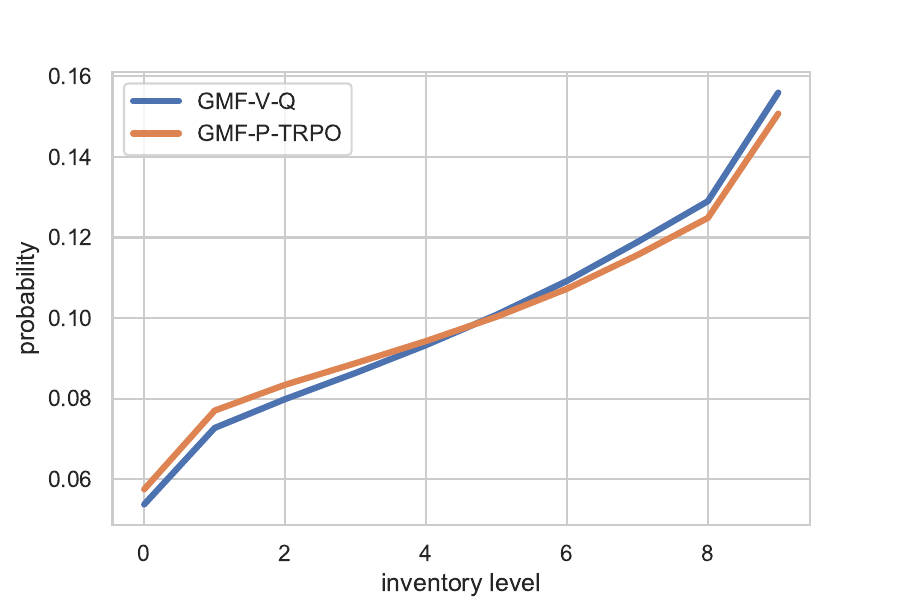}
}
\subfigure[Equilibrium production distribution.]{ \label{fig:VP2_v2}
\includegraphics[width=.3\textwidth]{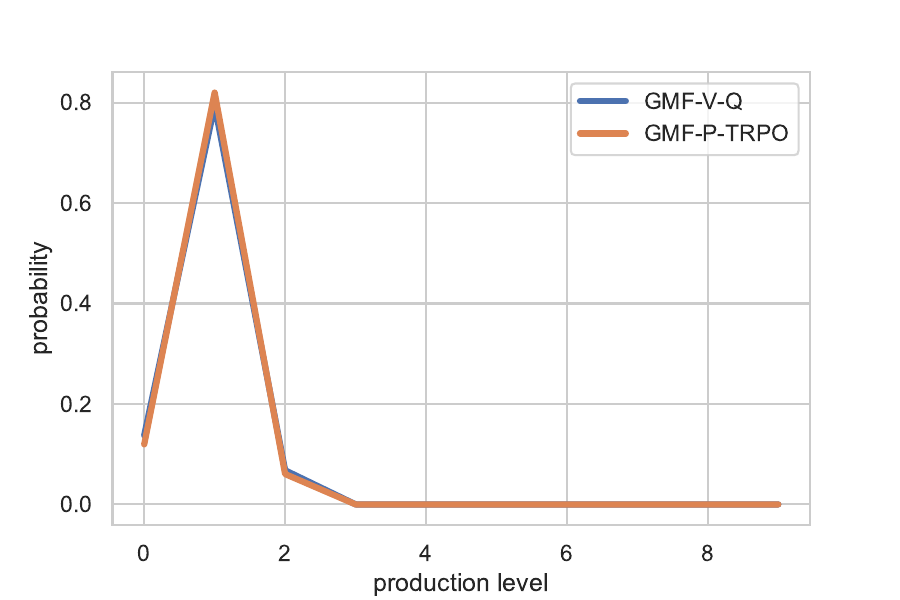}
}
\subfigure[Equilibrium price at the end of each outer iteration.]{\label{fig:VP3_v2}
\includegraphics[width=.3\textwidth]{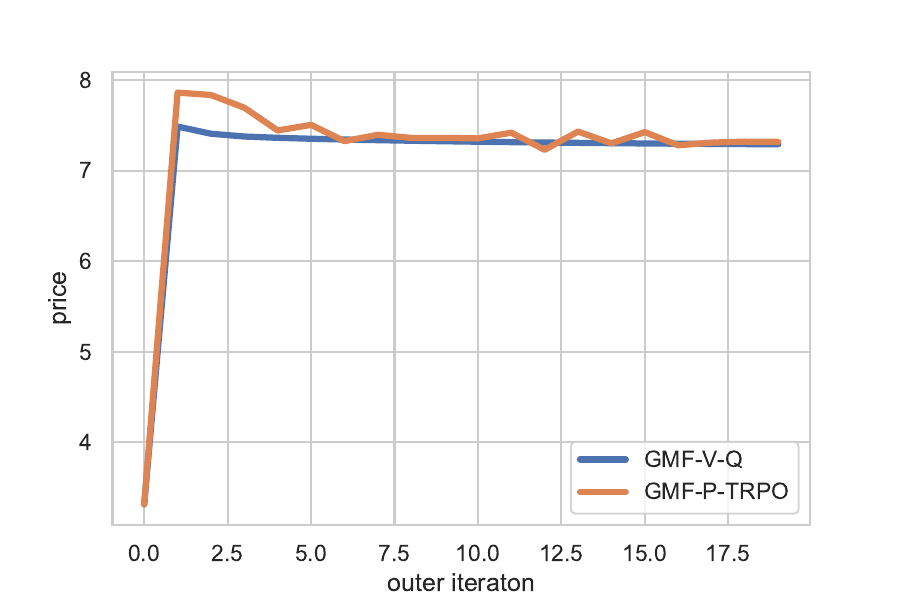}}
\caption{GMF-V-Q versus GMF-P-TRPO ($\sigma =2.0$ and one trajectory). }
\label{fig:VP_v2}
\end{figure}
Note that the demand elasticity $\sigma$  captures how sensitive demand for a product is compared to the changes in other economic factors, such as price or income. When $\sigma$ is increased from $1.3$ to $2.0$ indicating that the demand is more sensitive to price rise, the equilibrium price decreases from $10.9$ to $7.3$ (see Figures \ref{fig:VP3} and \ref{fig:VP3_v2}) and the distribution of the equilibrium production level is centered towards smaller values (see Figures \ref{fig:VP2} and \ref{fig:VP2_v2}).  The equilibrium inventory level has a huge mass at $0$ when $\sigma=1.3$. This implies that producers do not keep large inventories and pay the inventory cost in the equilibrium. On the other hand, the equilibrium inventory is more uniformly distributed when $\sigma=2$.

\subsection{Performance evaluation in the N-player setting}

 \paragraph{Performance metric.}
Similar to the performance metric introduced in Section \ref{performance_eval} for the GMFG setting, we adopt  the following metric  to measure the difference between  a given policy $\pi$ and an NE under THE N-player setting (here $\epsilon_0>0$ is a safeguard, and is taken as $0.1$ in the experiments): {\color{black}
$$C(\pmb{\pi}) = \frac{1}{N|\mathcal{S}|^N}\sum\nolimits_{i=1}^N \sum\nolimits_{{\bf s} \in \mathcal{S}^N  }\dfrac{\max_{{\pi}^i} V^i({\bf s},({\pmb{\pi}^{-i}},\pi^i))-V^i({\bf s},{\pmb{\pi}})}{|\max_{{\pi}^i} V^i({\bf s},({\pmb{\pi}^{-i}},\pi^i))|+\epsilon_0}.$$
}
Clearly $C(\pmb{\pi}) \geq 0$, and $C(\pmb{\pi}^\star)=0$ if and only if $\pmb{\pi}^\star$ is an NE. 
 Policy $\arg \max _{{\pi}_i} V_i({\bf s},({\pmb{\pi}^{-i}},\pi_i))$ is called the best response to $\pmb{\pi}^{-i}$.  A similar metric without normalization has been adopted in \citet{PPP2018}.

\paragraph{Existing algorithms for $N$-player games.}
To test the effectiveness of {\color{black}GMF-VW-Q} for approximating $N$-player games, we next compare {\color{black}GMF-VW-Q} with the IL algorithm and the MF-Q algorithm. The IL algorithm \citet{T1993} considers $N$ independent players and each player solves a decentralized reinforcement learning problem ignoring other players in the system. The MF-Q algorithm \citet{YLLZZW2018} extends the NASH-Q Learning algorithm for the $N$-player game introduced in \citet{HW2003}, adds the aggregate actions $(\bar{\pmb{a}}_{-i}=\frac{\sum_{j \ne i}a_j}{{\color{black}N}-1})$ from the opponents, and  works for the class of games where the interactions are only through the average actions of $N$ players.

\paragraph{Results and analysis.}
Our experiment (Figure \ref{fig:comparison}) shows that {GMF-VW-Q and GMF-PW-TRPO achieve similar performance, and both of them are superior in terms of convergence rate, accuracy, and stability for approximating an $N$-player game. In general, both algorithms converge faster than  IL and  MF-Q and achieve the smallest errors.} 

For instance, when $N=20$,  IL Algorithm converges  with the largest error $0.220$. 
The error from MF-Q is $0.101$,  smaller than  IL  but  still bigger than the error from GMF-VW-Q.
The GMF-VW-Q and GMF-PW-TRPO converge with  the lowest error $0.065$.  Moreover, as $N$ increases, the error of GMF-VW-Q  and GMF-PW-TRPO decease while the errors of both MF-Q and IL increase significantly. As $|\mathcal{S}|$ and  $|\mathcal{A}|$ increase, GMF-VW-Q and GMF-PW-TRPO are robust with respect to this increase of dimensionality, while 
 both MF-Q and IL clearly suffer from the increase of the dimensionality with decreased  convergence rate and accuracy.
Therefore, GMF-VW-Q and GMF-PW-TRPO are more scalable than IL and MF-Q, when the system is complex and the number of players $N$ is large. 


\begin{figure}[H]
\centering
\subfigure[$|\mathcal{S}|=10,|\mathcal{A}|=100,{\color{black}N}=20$.]{ \label{fig:comparison1}
\includegraphics[width=.3\textwidth]{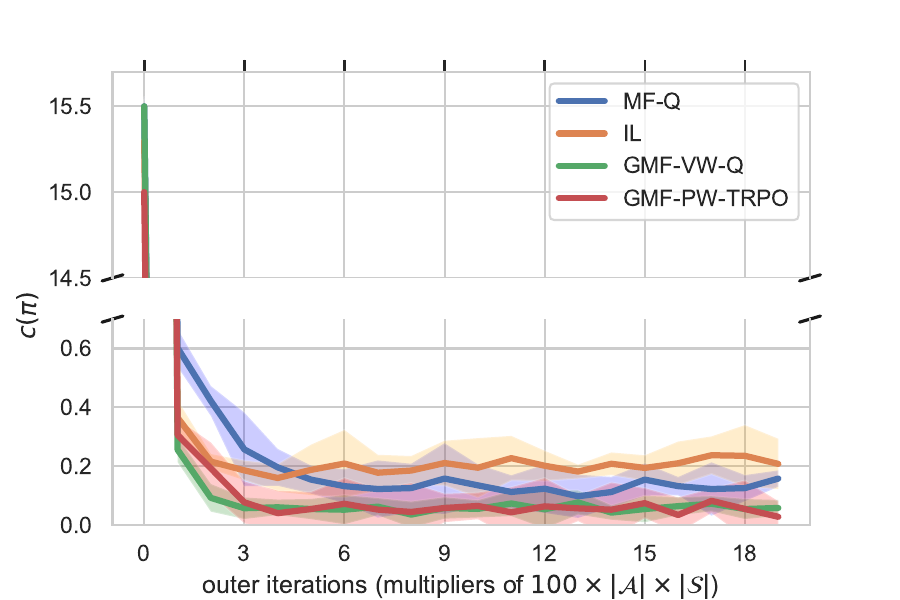}
}
\subfigure[$|\mathcal{S}|=15,|\mathcal{A}|=225,{\color{black}N}=20$.]{ \label{fig:comparison1}
\includegraphics[width=.3\textwidth]{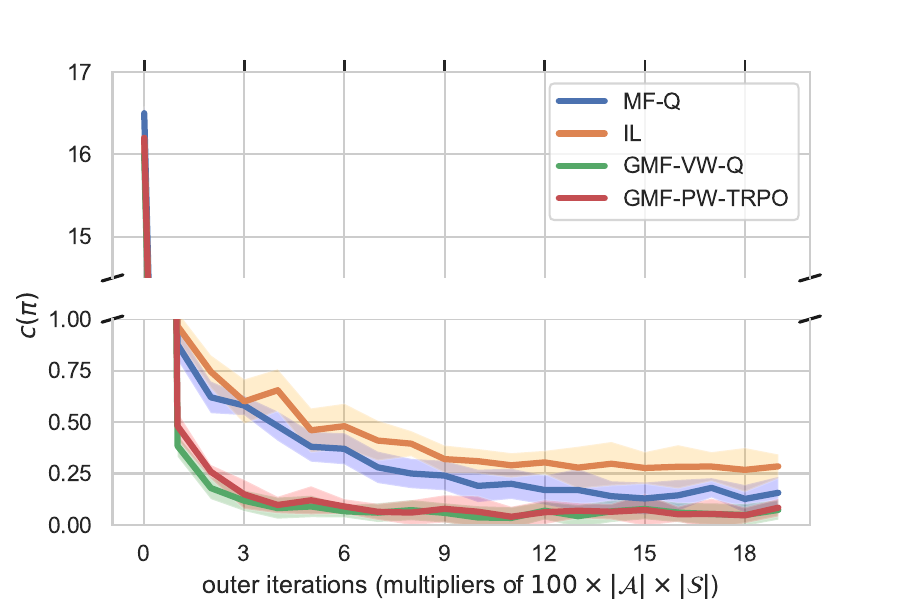}
}
\subfigure[$|\mathcal{S}|=10,|\mathcal{A}|=100,{\color{black}N}=60$.]{\label{fig:comparison2}
\includegraphics[width=.3\textwidth]{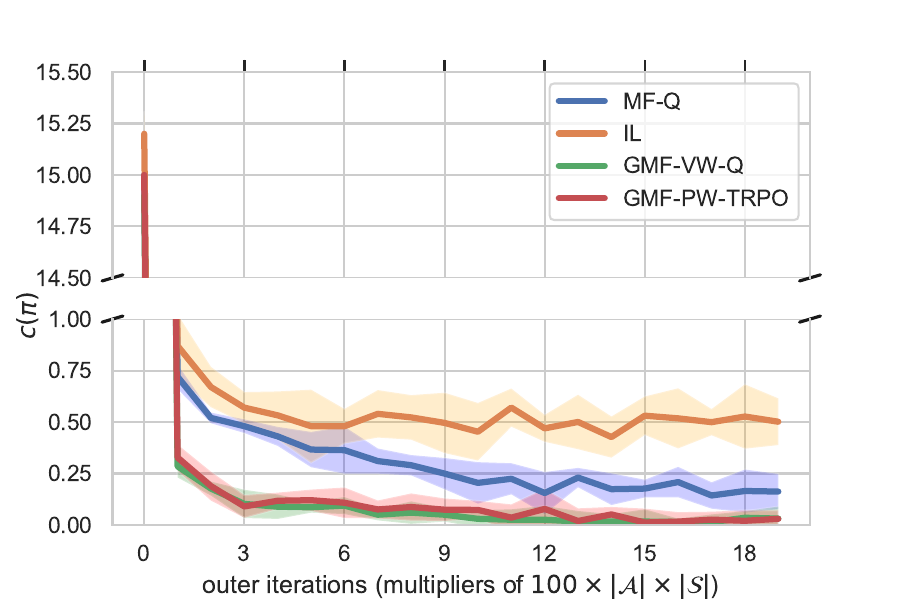}}

\caption{Learning accuracy based on $C(\pmb{\pi})$.}
\label{fig:comparison}
\end{figure}


\section{Extension: Existence and uniqueness for {\color{black}non-stationary} NE of GMFGs} \label{stat_mfg_app}
{\color{black}In this section, we describe the setting of non-stationary NE for GMFGs and establish the corresponding results of existence and uniqueness.}
\begin{definition}[NE for GMFGs]\label{nash2} 
In \eqref{mfg}, a player-population profile $(\pmb{\pi}^{\star},\pmb{\mathcal{L}}^{\star}):=(\{\pi_t^\star\}_{t=0}^{\infty},\{\mathcal{L}_t^\star\}_{t=0}^{\infty})$
  is called an NE if 
\begin{enumerate}
    \item (Single player side) Fix $\pmb{\mathcal{L}}^{\star}$, for any policy sequence $\pmb{\pi}:=\{\pi_t\}_{t=0}^{\infty}$ and initial state $s\in \mathcal{S}$, 
\begin{equation}
V\left(s,\pmb{\pi}^{\star},\pmb{\mathcal{L}}^{\star}\right)\geq V\left(s,\pmb{\pi},\pmb{\mathcal{L}}^{\star}\right).
\end{equation}
\item  (Population side) $\mathbb{P}_{s_t,a_t}= {\mathcal{L}_t^{\star}}$ for all $t\geq 0$, where $\{s_t,a_t\}_{t=0}^{\infty}$ is the dynamics under the policy sequence $\pmb{\pi}^{\star}$ starting from $s_0 \sim \mu_0^{\star}$, with $a_t\sim\pi_t^\star(s_t,{\color{black}\mu_t^{\star}})$, $s_{t+1}\sim P(\cdot|s_t,a_t,{\color{black}\mathcal{L}_t^\star})$, and $\mu_t^{\star}$ being the population state marginal of $\mathcal{L}_t^\star$.
\end{enumerate}
\end{definition}

\paragraph{Step A.}
Fix $\pmb{\mathcal{L}}:=\{\mathcal{L}_t\}_{t=0}^{\infty}$,   (\ref{mfg}) becomes the  classical optimization problem.  Indeed, with $\pmb{\mathcal{L}}$ fixed, the population state distribution sequence $\pmb{\mu}:=\{\mu_t\}_{t=0}^{\infty}$ is also fixed, hence the space of admissible policies is reduced to the single-player case. Solving (\ref{mfg}) is now reduced to finding a policy sequence
 $\pi_{t,\pmb{\mathcal{L}}}^\star\in \Pi:=\{\pi\,|\,\pi:\mathcal{S}\rightarrow\mathcal{P}(\mathcal{A})\}$
 over all admissible  $\pmb{\pi}_{\pmb{\mathcal{L}}}=\{\pi_{t,\pmb{\mathcal{L}}}\}_{t=0}^{\infty}$,
   to maximize 
 \[
\begin{array}{ll}
  V(s,\pmb{\pi}_{\pmb{\mathcal{L}}}, \pmb{\mathcal{L}}):= &\mathbb{E}\left[\sum\limits_{t=0}^{\infty}\gamma^tr(s_t,a_t,\mathcal{L}_t)|s_0=s\right],\\
\text{subject to}& s_{t+1}\sim P(s_t,a_t,\mathcal{L}_t),\quad a_t\sim\pi_{t,\pmb{\mathcal{L}}}(s_t).
\end{array}
\]
Notice that with $\pmb{\mathcal{L}}$ fixed, one can safely suppress the dependency on $\mu_t$ in the admissible policies. 
Moreover,  given this fixed $\pmb{\mathcal{L}}$ sequence and the solution $\pmb{\pi}_{\pmb{\mathcal{L}}}^\star:=\{\pi_{t,\pmb{\mathcal{L}}}^\star\}_{t=0}^{\infty}$, one can 
define a mapping from the fixed population distribution sequence $\pmb{\mathcal{L}}$ to an optimal randomized policy sequence. That is, 
$$\Gamma_1:\{\mathcal{P}(\mathcal{S}\times \mathcal{A})\}_{t=0}^{\infty}\rightarrow\{\Pi\}_{t=0}^{\infty}, $$
such that  $\pmb{\pi}_{\pmb{\mathcal{L}}}^\star=\Gamma_1(\pmb{\mathcal{L}})$.
Note that this $\pmb{\pi}_{\pmb{\mathcal{L}}}^\star$ sequence satisfies the single player side condition in Definition \ref{nash2} for the population state-action pair sequence $\pmb{\mathcal{L}}$. That is, 
$
V\left(s,\pmb{\pi}_{\pmb{\mathcal{L}}}^\star,{\color{black}\pmb{\mathcal{L}}}\right)\geq V\left(s,\pmb{\pi},{\color{black}\pmb{\mathcal{L}}}\right),$
for any policy sequence $\pmb{\pi} = \{\pi_t\}_{t=0}^{\infty}$ and any initial state $s\in\mathcal{S}$. 

Accordingly, a similar feedback regularity condition is needed in this step.
\begin{assumption}\label{policy_assumption}
There exists a constant $d_1\geq 0$,
 such that for any $\pmb{\mathcal{L}}, \pmb{\mathcal{L}}^{\prime} \in \{\mathcal{P}(\mathcal{S}\times\mathcal{A})\}_{t=0}^{\infty}$, 
\begin{equation}\label{Gamma1_lip}
D(\Gamma_1(\pmb{\mathcal{L}}),\Gamma_1( \pmb{\mathcal{L}}^{\prime})) \leq d_1\mathcal{W}_1(\pmb{\mathcal{L}}, \pmb{\mathcal{L}}^{\prime}),
\end{equation}
where 
\begin{equation}
\begin{split}
D(\pmb{\pi},\pmb{\pi}^{\prime})&:=\sup_{s\in\mathcal{S}}\mathcal{W}_1(\pmb{\pi}(s),\pmb{\pi}^{\prime}(s))=\sup_{s\in\mathcal{S}}\sup_{t\in\mathbb{N}}W_1(\pi_t(s),\pi_t'(s)),\\
\mathcal{W}_1(\pmb{\mathcal{L}}, \pmb{\mathcal{L}}^{\prime})&:=\sup_{t\in\mathbb{N}}W_1(\mathcal{L}_t,\mathcal{L}_t'),
\end{split}
\end{equation}
and $W_1$ is the $\ell_1$-Wasserstein distance between probability measures. 
\end{assumption}



 \paragraph{Step B.} Based on the analysis in Step A and $\pmb{\pi}_{\pmb{\mathcal{L}}}^\star=\{\pi_{t,\pmb{\mathcal{L}}}^\star\}_{t=0}^{\infty}$, update the initial sequence $\pmb{\mathcal{L}}$ to $ \pmb{\mathcal{L}}^{\prime}$ following the controlled dynamics $P(\cdot|s_t, a_t,\mathcal{L}_t)$.

Accordingly, for any admissible policy sequence $\pmb{\pi} \in \{\Pi\}_{t=0}^{\infty}$ and a joint population state-action pair sequence $\pmb{\mathcal{L}}\in \{\mathcal{P}(\mathcal{S}\times \mathcal{A})\}_{t=0}^{\infty}$, define a mapping $\Gamma_2:\{\Pi\}_{t=0}^{\infty}\times \{\mathcal{P}(\mathcal{S}\times\mathcal{A})\}_{t=0}^{\infty}\rightarrow \{\mathcal{P}(\mathcal{S}\times\mathcal{A})\}_{t=0}^{\infty}$ as follows: 
\begin{eqnarray}
\Gamma_2(\pmb{\pi},\pmb{\mathcal{L}}):= \pmb{\hat{\mathcal{L}}} = \{\mathbb{P}_{s_t,a_t}\}_{t=0}^{\infty},
\end{eqnarray}
where $s_{t+1}\sim \mu_t P(\cdot|\cdot,a_t,\mathcal{L}_t)$,  $a_t\sim \pi_t(s_t)$, $s_0 \sim \mu_0$, and $\mu_t$ is the population state marginal of $\mathcal{L}_t$.

One also needs a similar assumption in this step.
\begin{assumption}\label{population_assumption}
There exist constants $d_2,~d_3\geq 0$, such that for any admissible policy sequences $\pmb{\pi},\pmb{\pi}^1,\pmb{\pi}^2$ and joint distribution sequences $\pmb{\mathcal{L}}, \pmb{\mathcal{L}}^{1}, \pmb{\mathcal{L}}^{2}$, 
\begin{equation}\label{Gamma2_lip1}
\mathcal{W}_1(\Gamma_2(\pmb{\pi}^1,\pmb{\mathcal{L}}),\Gamma_2(\pmb{\pi}^2,\pmb{\mathcal{L}})) \leq d_2 D(\pmb{\pi}^1,\pmb{\pi}^2), 
\end{equation}
\begin{equation}\label{Gamma2_lip2}
\mathcal{W}_1(\Gamma_2(\pmb{\pi},\pmb{\mathcal{L}}^1{\color{black})},\Gamma_2(\pmb{\pi},\pmb{\mathcal{L}}^2)) \leq d_3 \mathcal{W}_1(\pmb{\mathcal{L}}^1,\pmb{\mathcal{L}}^2).
\end{equation}
\end{assumption}
Similarly, Assumption \ref{population_assumption} can be reduced to Lipschitz continuity  and boundedness of the transition dynamics $P$ under certain conditions. 


\paragraph{Step C.} Repeat Step A and Step B until $ \pmb{\mathcal{L}}^{\prime}$ matches $\pmb{\mathcal{L}}$.

This step is to take care of the population side condition. To ensure the convergence of
 the combined step A and step B,  it suffices if  $\Gamma:\{\mathcal{P}(\mathcal{S}\times\mathcal{A})\}_{t=0}^{\infty}\rightarrow \{\mathcal{P}(\mathcal{S}\times\mathcal{A})\}_{t=0}^{\infty}$ is a contractive mapping under the $\mathcal{W}_1$ distance, with $\Gamma(\pmb{\mathcal{L}}):=\Gamma_2(\Gamma_1(\pmb{\mathcal{L}}), \pmb{\mathcal{L}})$.  Then by the Banach fixed point theorem {\color{black}and the completeness of the related metric spaces}, there exists  a unique NE to the GMFG. 

 In summary, we have                                                                                                                    
 \begin{theorem}[Existence and Uniqueness of GMFG solution] \label{thm1} Given Assumptions \ref{policy_assumption} and \ref{population_assumption}, and assuming that $d_1d_2+d_3< 1$, there exists a unique NE  to \eqref{mfg}.
 
\end{theorem}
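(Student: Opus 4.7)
The plan is to mirror the argument for Theorem \ref{thm1_stat} almost verbatim, but now working on the time-indexed space $\{\mathcal{P}(\mathcal{S}\times\mathcal{A})\}_{t=0}^{\infty}$ equipped with the sup-in-time Wasserstein metric $\mathcal{W}_1$. The key observation is that, by Definition \ref{nash2} together with the definitions of $\Gamma_1$ and $\Gamma_2$, $(\pmb{\pi},\pmb{\mathcal{L}})$ is an NE of \eqref{mfg} if and only if $\pmb{\mathcal{L}}$ is a fixed point of $\Gamma := \Gamma_2(\Gamma_1(\cdot),\cdot)$ and $\pmb{\pi}=\Gamma_1(\pmb{\mathcal{L}})$. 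So the whole task reduces to showing that $\Gamma$ is a strict contraction on a complete metric space.

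For the contraction estimate, given any two distribution flows $\pmb{\mathcal{L}}^1,\pmb{\mathcal{L}}^2$, I would insert $\Gamma_2(\Gamma_1(\pmb{\mathcal{L}}^2),\pmb{\mathcal{L}}^1)$ between $\Gamma(\pmb{\mathcal{L}}^1)$ and $\Gamma(\pmb{\mathcal{L}}^2)$ and apply the triangle inequality for $\mathcal{W}_1$. The first piece is bounded using \eqref{Gamma2_lip1} and then \eqref{Gamma1_lip}, yielding $d_1 d_2\,\mathcal{W}_1(\pmb{\mathcal{L}}^1,\pmb{\mathcal{L}}^2)$, while the second piece is bounded directly by \eqref{Gamma2_lip2}, yielding $d_3\,\mathcal{W}_1(\pmb{\mathcal{L}}^1,\pmb{\mathcal{L}}^2)$. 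Summing, one gets
\begin{equation*}
\mathcal{W}_1(\Gamma(\pmb{\mathcal{L}}^1),\Gamma(\pmb{\mathcal{L}}^2)) \leq (d_1 d_2 + d_3)\,\mathcal{W}_1(\pmb{\mathcal{L}}^1,\pmb{\mathcal{L}}^2),
\end{equation*}
and the hypothesis $d_1 d_2 + d_3 < 1$ gives strict contractivity. This calculation is identical to the stationary case; the only bookkeeping difference is that every $W_1$ has been upgraded to the sup-in-time $\mathcal{W}_1$, and Assumptions \ref{policy_assumption}, \ref{population_assumption} are precisely tailored so that the same chain of inequalities passes through uniformly in $t$.

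The one step that genuinely requires extra care beyond the stationary proof is verifying that $(\{\mathcal{P}(\mathcal{S}\times\mathcal{A})\}_{t=0}^{\infty},\mathcal{W}_1)$ is a complete metric space, which is what the Banach fixed-point theorem needs. The plan is to recall that for compact $\mathcal{S}\times\mathcal{A}$, $(\mathcal{P}(\mathcal{S}\times\mathcal{A}),W_1)$ is a complete (in fact compact) metric space, and then note that $\mathcal{W}_1$ is the $\ell^\infty$-type product metric; a standard argument shows that a countable product of complete metric spaces under the sup metric is complete, provided one restricts to sequences with uniformly bounded diameter — which holds automatically here since $W_1$ on $\mathcal{P}(\mathcal{S}\times\mathcal{A})$ is bounded by $\mathrm{diam}(\mathcal{S}\times\mathcal{A})$. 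This completeness fact is precisely what the paper's Appendix \ref{appendixA} is cited for in the stationary case, so I would simply invoke the analogous statement here.

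Once contractivity and completeness are both in hand, the Banach fixed-point theorem delivers a unique $\pmb{\mathcal{L}}^\star$ with $\Gamma(\pmb{\mathcal{L}}^\star)=\pmb{\mathcal{L}}^\star$; setting $\pmb{\pi}^\star := \Gamma_1(\pmb{\mathcal{L}}^\star)$ yields the unique NE by the equivalence noted above. The only real obstacle, as indicated, is the completeness of the flow space under $\mathcal{W}_1$; the contraction estimate itself is routine triangle-inequality manipulation once Assumptions \ref{policy_assumption} and \ref{population_assumption} are assumed in their time-uniform form.
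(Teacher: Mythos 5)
Your proposal is correct and is essentially the paper's own argument: the paper proves Theorem \ref{thm1} by exactly this modification of the stationary proof of Theorem \ref{thm1_stat} (same intermediate point $\Gamma_2(\Gamma_1(\pmb{\mathcal{L}}^2),\pmb{\mathcal{L}}^1)$ in the triangle inequality, same use of \eqref{Gamma1_lip}--\eqref{Gamma2_lip2}, then Banach), and the completeness of $(\{\mathcal{P}(\mathcal{S}\times\mathcal{A})\}_{t=0}^{\infty},\mathcal{W}_1)$ that you flag as the only delicate point is precisely Lemma \ref{metrics} in Appendix \ref{appendixA}, proved there by the same coordinate-wise Cauchy argument you sketch.
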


The proof of Theorem \ref{thm1} can be established by modifying  appropriately the fixed-point approach for the {\color{black}stationary} GMFG in Theorem \ref{thm1_stat}.



\newpage
\bibliographystyle{informs2014} 
\bibliography{mfg_rl}

\newpage
%
%
%
 \begin{APPENDICES}
\section{Distance metrics and completeness} \label{appendixA}
This section reviews some basic properties of the Wasserstein distance. 
It then proves that the metrics defined in the main text are indeed distance functions and define complete metric spaces. 

\paragraph{$\ell_1$-Wasserstein distance and dual representation.} 
The $\ell_1$ Wasserstein distance over $\mathcal{P}(\mathcal{X})$ for $\mathcal{X}\subseteq\mathbb{R}^k$ is defined as 
\begin{equation}\label{W1}
W_1(\nu,\nu'):=\inf_{M\in\mathcal{M}(\nu,\nu')}\int_{\mathcal{X}\times\mathcal{X}}\|x-y\|_2\text{d}M(x,y).
\end{equation}
where $\mathcal{M}(\nu,\nu')$ is the set of all measures (couplings) on $\mathcal{X}\times\mathcal{X}$, with marginals $\nu$ and $\nu'$ on the two components, respectively.

The Kantorovich duality theorem enables the following equivalent dual representation of $W_1$:
\begin{equation}\label{W1_dual}
W_1(\nu,\nu')=\sup_{\|f\|_L\leq 1}\left|\int_{\mathcal{X}}fd\nu-\int_{\mathcal{X}}fd\nu'\right|,
\end{equation}
where the supremum is taken over all $1$-Lipschitz functions $f$, \textit{i.e.}, $f$ satisfying $|f(x)-f(y)|\leq \|x-y\|_2$ for all $x,y\in\mathcal{X}$.

The Wasserstein distance $W_1$ can also be related to the total variation distance via the following inequalities \citet{metrics_prob}: 
\begin{equation}\label{W1_tv}
d_{\min}(\mathcal{X})d_{TV}(\nu,\nu')\leq W_1(\nu,\nu')\leq \text{diam}(\mathcal{X})d_{TV}(\nu,\nu'),
\end{equation}
where $d_{\min}(\mathcal{X})=\min_{x\neq y\in\mathcal{X}}\|x-y\|_2$, which is guaranteed to be positive when $\mathcal{X}$ is finite.

When $\mathcal{S}$ and $\mathcal{A}$ are compact,  for any compact subset $\mathcal{X}\subseteq\mathbb{R}^k$, and for any $\nu,\nu'\in\mathcal{P}(\mathcal{X})$, $W_1(\nu,\nu')\leq \text{diam}(\mathcal{X})d_{TV}(\nu,\nu')\leq \text{diam}(\mathcal{X})<\infty$, where $\text{diam}(\mathcal{X})=\sup_{x,y\in\mathcal{X}}\|x-y\|_2$ and $d_{TV}$ is the total variation distance.
Moreover,  one can verify  
\begin{lemma}\label{metrics}
Both $D$ and $\mathcal{W}_1$ are distance functions, and they are finite for any input distribution pairs. In addition, both $(\{\Pi\}_{t=0}^{\infty},D)$ and $(\{\mathcal{P}(\mathcal{S}\times\mathcal{A})\}_{t=0}^{\infty}, \mathcal{W}_1)$ are \textit{complete metric spaces}.
\end{lemma}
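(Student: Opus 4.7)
The plan is to verify the metric axioms for $D$ and $\mathcal{W}_1$ by lifting them through the supremum from the underlying $W_1$ metric, and then obtain completeness by reducing to the completeness of the base space $(\mathcal{P}(\mathcal{S}\times\mathcal{A}),W_1)$. Finiteness is immediate from compactness: by inequality \eqref{W1_tv}, for any $\nu,\nu'\in\mathcal{P}(\mathcal{X})$ with $\mathcal{X}\subseteq\mathbb{R}^k$ compact, $W_1(\nu,\nu')\leq \text{diam}(\mathcal{X})\cdot d_{TV}(\nu,\nu')\leq \text{diam}(\mathcal{X})<\infty$. Applying this with $\mathcal{X}=\mathcal{S}\times\mathcal{A}$ and $\mathcal{X}=\mathcal{A}$ respectively yields the uniform bounds $\mathcal{W}_1(\pmb{\mathcal{L}},\pmb{\mathcal{L}}')\leq \text{diam}(\mathcal{S}\times\mathcal{A})$ and $D(\pmb{\pi},\pmb{\pi}')\leq \text{diam}(\mathcal{A})$, so both distances are always finite.

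For the metric axioms, non-negativity, symmetry, and the identity of indiscernibles transfer coordinatewise from $W_1$, since a supremum of nonnegative quantities vanishes if and only if every entry vanishes. The triangle inequality also lifts: if $W_1(\mathcal{L}_t,\mathcal{L}_t'')\leq W_1(\mathcal{L}_t,\mathcal{L}_t')+W_1(\mathcal{L}_t',\mathcal{L}_t'')$ pointwise in $t$, taking $\sup_t$ on both sides yields $\mathcal{W}_1(\pmb{\mathcal{L}},\pmb{\mathcal{L}}'')\leq \mathcal{W}_1(\pmb{\mathcal{L}},\pmb{\mathcal{L}}')+\mathcal{W}_1(\pmb{\mathcal{L}}',\pmb{\mathcal{L}}'')$. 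The same reasoning applied to the double supremum over $s\in\mathcal{S}$ and $t\in\mathbb{N}$ handles $D$.

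For completeness, first note that $(\mathcal{P}(\mathcal{S}\times\mathcal{A}),W_1)$ is itself a complete metric space: on a compact Polish space, $W_1$-convergence coincides with weak convergence, and $\mathcal{P}(\mathcal{S}\times\mathcal{A})$ is weakly compact, hence complete. Given a Cauchy sequence $\{\pmb{\mathcal{L}}^{(n)}\}_{n\geq 1}$ under $\mathcal{W}_1$, for each fixed $t$ the entries $\{\mathcal{L}_t^{(n)}\}_n$ are Cauchy in $W_1$ and therefore converge to some $\mathcal{L}_t^{\infty}\in\mathcal{P}(\mathcal{S}\times\mathcal{A})$. A standard $\varepsilon/3$ argument, exploiting that the $\mathcal{W}_1$-Cauchy property gives Cauchyness uniform in $t$, then shows $\mathcal{W}_1(\pmb{\mathcal{L}}^{(n)},\pmb{\mathcal{L}}^{\infty})\to 0$. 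An identical argument with the double supremum over $(s,t)\in\mathcal{S}\times\mathbb{N}$ establishes completeness of $(\{\Pi\}_{t=0}^{\infty},D)$. The only substantive step is precisely this uniform-in-$t$ (and in $s$) passage from pointwise limits to limits in the supremum metric, which is the classical ``sup-metric over a complete space is complete'' argument and constitutes the main (though still routine) content of the proof.
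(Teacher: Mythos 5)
Your proposal is correct and follows essentially the same route as the paper's proof: finiteness via the diameter bound $W_1 \leq \mathrm{diam}(\mathcal{X})\, d_{TV} \leq \mathrm{diam}(\mathcal{X})$, metric axioms lifted coordinatewise through the supremum from $W_1$, and completeness by extracting pointwise $W_1$-limits from a Cauchy sequence (using completeness of $(\mathcal{P}(\mathcal{X}),W_1)$ on a compact set) and then passing to the supremum metric. The only cosmetic difference is that you sketch why the base space is complete (weak compactness plus equivalence of $W_1$ and weak convergence), whereas the paper simply cites this fact.
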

These facts enable the usage of Banach fixed-point mapping theorem for the proof of existence and uniqueness (Theorems \ref{thm1} and \ref{thm1_stat}).
\begin{proof}{[Proof of Lemma \ref{metrics}]}
It is known that for any compact set $\mathcal{X}\subseteq\mathbb{R}^k$, $(\mathcal{P}(\mathcal{X}), W_1)$ defines a complete metric space \citet{wass_comp}. Since $W_1(\nu,\nu')\leq \text{diam}(\mathcal{X})$ is uniformly bounded for any $\nu,~\nu'\in\mathcal{P}(\mathcal{X})$, we know that $\mathcal{W}_1(\pmb{\mathcal{L}},\pmb{\mathcal{L}}')\leq \text{diam}(\mathcal{X})$ and $D(\pmb{\pi},\pmb{\pi'})\leq \text{diam}(\mathcal{X})$ as well, so they are both finite for any input distribution pairs. It is clear that they are distance functions based on the fact that $W_1$ is a distance function.

Finally, we show the completeness of the two metric spaces $(\{\Pi\}_{t=0}^{\infty},D)$ and $(\{\mathcal{P}(\mathcal{S}\times\mathcal{A})\}_{t=0}^{\infty}, \mathcal{W}_1)$. Take $(\{\Pi\}_{t=0}^{\infty},D)$ for example. Suppose that $\pmb{\pi}^k$ is a Cauchy sequence in $(\{\Pi\}_{t=0}^{\infty},D)$. Then for any $\epsilon>0$, there exists a positive integer $N$, such that for any $m,~n\geq N$, 
\begin{equation}
D(\pmb{\pi}^n,\pmb{\pi}^{m})\leq \epsilon\Longrightarrow W_1(\pi_t^n(s),\pi_t^{m}(s))\leq \epsilon\text{ for any $s\in\mathcal{S}$, $t\in\mathbb{N}$},
\end{equation}
which implies that $\pi_t^k(s)$ forms a Cauchy sequence in $(\mathcal{P}(\mathcal{A}),W_1)$, and hence by the completeness of $(\mathcal{P}(\mathcal{A}),W_1)$, $\pi_t^k(s)$ converges to some $\pi_t(s)\in\mathcal{P}(\mathcal{A})$. As a result,  $\pmb{\pi}^n\rightarrow\pmb{\pi}\in\{\Pi\}_{t=0}^{\infty}$ under metric $D$, which shows that $(\{\Pi\}_{t=0}^{\infty},D)$ is complete. 

The completeness of $(\{\mathcal{P}(\mathcal{S}\times\mathcal{A})\}_{t=0}^{\infty}, \mathcal{W}_1)$ can be proved similarly.
\qed
\end{proof}

The same argument for Lemma \ref{metrics} shows that   both $D$ and $W_1$ are distance functions and are finite for any input distribution pairs, with both $(\Pi, D)$ and $(\mathcal{P}(\mathcal{S}\times\mathcal{A}),W_1)$  again complete metric spaces. 

{\color{black}
\section{Bounds for GMF-V-Q using asynchronous Q-learning}\label{asyn-q}
In the main text, we have shown the results by using synchronous Q-learning algorithm. Here for the completeness, we also show the corresponding results for asynchronous Q-learning algorithm.

For asynchronous Q-learning algorithm, at each step $l$ with the state $s$ and an action $a$, the system reaches state $s'$ according to the
controlled dynamics and the Q-function approximation $Q_l$ is updated according to
\begin{equation}\label{asyn-q-update}
\hat{Q}^{l+1}(s,a)= (1-\beta_l(s,a))\hat{Q}^l(s,a)+\beta_l(s,a)\left[r(s,a)+\gamma\max_{\bar{a}}\hat{Q}^l(s',\bar{a})\right],
\end{equation}
where $\hat{Q}^0(s,a)=C$ for some constant $C\in\mathbb{R}$ for any $s\in\mathcal{S}$ and $a\in\mathcal{A}$, and the step size $\beta_l(s,a)$ can  be chosen as (\citet{Q-rate})
\begin{equation}\label{asyn-q-step-size}
\beta_l(s,a)=
\begin{cases}
|\#(s,a,l)+1|^{-h}, & (s,a)=(s_l,a_l),\\
0, & \text{otherwise}.
\end{cases}
\end{equation}
with $h\in(1/2,1)$. Here $\#(s,a,l)$ is the number of times up to time $l$ that one visits the state-action pair $(s,a)$. The algorithm then proceeds to choose action $a'$ based on $\hat{Q}^{l+1}$ with appropriate exploration strategies, including the $\epsilon$-greedy strategy.

\begin{lemma}[\citet{Q-rate}: sample complexity of asynchronous Q-learning]\label{asyn-Q-finite-bd} For an MDP, say $\mathcal{M}=(\mathcal{S},\mathcal{A},P,r,\gamma)$, suppose that the Q-learning algorithm takes step-sizes \eqref{q-step-size}.
Also suppose that the covering time of the state-action pairs is bounded by $L$ with probability at least $1-p$ for some $p\in(0,1)$. Then $\|{\color{black}\hat{Q}^{T_{\mathcal{M}}(\delta,\epsilon)}}-Q_{\mathcal{M}}^\star\|_{\infty}\leq \epsilon$  with probability at least $1-2\delta$. Here {\color{black}$\hat{Q}^T$} is the $T$-th update in the  Q-learning updates \eqref{q-update}, $Q_{\mathcal{M}}^\star$ is the (optimal) Q-function, and 
\[
\begin{split}
T_{\mathcal{M}}(\delta,&\epsilon)=\Omega\left(\left(\dfrac{L\log_p(\delta)}{\beta}\log\dfrac{V_{\max}}{\epsilon}\right)^{\frac{1}{1-h}}+\left(\dfrac{\left(L\log_p(\delta)\right)^{1+3h}V_{\max}^2\log\left(\frac{|\mathcal{S}||\mathcal{A}|V_{\max}}{\delta\beta\epsilon}\right)}{\beta^2\epsilon^2}\right)^{\frac{1}{h}}\right),
\end{split}
\]

where $\beta=(1-\gamma)/2$, $V_{\max}=R_{\max}/(1-\gamma)$, and $R_{\max}$ {\color{black} is such that a.s.
$0\leq {\color{black}r(s,a)}\leq R_{\max}$. 
}
\end{lemma}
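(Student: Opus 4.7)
The plan is to follow the standard stochastic-approximation analysis for asynchronous Q-learning (in the spirit of Even-Dar and Mansour's work, which is the reference \cite{Q-rate} being quoted). The basic strategy is to decompose the error $\hat{Q}^l - Q^\star_{\mathcal{M}}$ into a deterministic (bias) part and a stochastic (noise) part, bound each separately, and then chain the bounds across epochs defined by the covering time. First, I would write the Q-learning update in operator form: $\hat{Q}^{l+1}(s,a) = (1-\beta_l(s,a))\hat{Q}^l(s,a) + \beta_l(s,a)\bigl((T\hat{Q}^l)(s,a) + w_l(s,a)\bigr)$, where $T$ is the Bellman optimality operator (a $\gamma$-contraction in $\|\cdot\|_\infty$) and $w_l$ is the zero-mean noise from sampling a single successor state. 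The key reduction is to write $\hat{Q}^l - Q^\star = Y^l + Z^l$ where $Y^l$ evolves as a noiseless contraction iteration with step sizes $\beta_l$, and $Z^l$ aggregates the noise increments $\beta_l(s,a) w_l(s,a)$.

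Next, I would analyze $Y^l$ and $Z^l$ over epochs. Define the $k$-th epoch to be a block of iterations of length $L_k := L \cdot \log_p(\delta)$; by the covering-time assumption together with a union-bound argument over $\log_p(\delta)$ many ``retries,'' with probability at least $1-\delta$ every state-action pair is visited in each epoch. Within an epoch, the bias part $\|Y^l\|_\infty$ contracts by a factor of roughly $(1-\beta)$ per epoch (here $\beta = (1-\gamma)/2$), so after $O(\tfrac{1}{\beta}\log(V_{\max}/\epsilon))$ epochs the bias is below $\epsilon$. Each epoch costs $L_k$ iterations, and because of the polynomial step-size $\beta_l \sim l^{-h}$, the effective number of steps needed to realize one contraction cycle scales as the $1/(1-h)$ power of $L_k/\beta \cdot \log(V_{\max}/\epsilon)$. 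This gives the first summand in $T_{\mathcal{M}}(\delta,\epsilon)$.

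For the stochastic term $Z^l$, I would use the fact that $\beta_l(s,a) w_l(s,a)$ is a bounded martingale difference (bounded by $\beta_l(s,a) V_{\max}$) and apply Azuma-Hoeffding after a suitable rescaling. Combined with a union bound over $|\mathcal{S}||\mathcal{A}|$ state-action pairs and over the iterations needed, one obtains that with probability at least $1-\delta$, $\|Z^l\|_\infty \leq \epsilon$ provided each state-action pair has been visited $\Omega(V_{\max}^2 \log(|\mathcal{S}||\mathcal{A}|V_{\max}/(\delta\beta\epsilon))/(\beta^2 \epsilon^2))$ times with non-zero step-size. Translating ``number of visits per pair'' back to total iterations: since visits are guaranteed every $L_k$ steps and the $l$-th visit uses step size $l^{-h}$, the total number of iterations is the $1/h$ power of $L_k^{1+h}$ times the per-pair visit count (the $L^{1+3h}$ exponent in the stated bound arises from carrying the $L$-factor through both the epoch-length inflation and the step-size polynomial). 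This yields the second summand.

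The main obstacle in this approach is getting the exponents right, not the overall architecture. In particular, tracking how the polynomial step-size $\beta_l = l^{-h}$ interacts with (i) the per-epoch contraction needed for $Y^l$, which naturally produces the $(1/(1-h))$-power in the first summand, and (ii) the effective number of visits per state-action pair needed to control $Z^l$ under Azuma-Hoeffding, which naturally produces the $(1/h)$-power in the second summand, requires careful accounting. The coupling to the covering time $L$ inflates both pieces, and the replacement of $L$ by $L \log_p(\delta)$ is what absorbs the probability that the covering-time bound itself fails over the many epochs (a geometric tail over independent covering windows). Once these bookkeeping issues are resolved, combining the two high-probability bounds via a final union bound gives $\|\hat{Q}^{T_{\mathcal{M}}(\delta,\epsilon)} - Q^\star_{\mathcal{M}}\|_\infty \leq \epsilon$ with probability at least $1-2\delta$, as claimed.
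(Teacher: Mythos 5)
This lemma is not proved in the paper at all: it is quoted verbatim from the cited reference \cite{Q-rate} (Even--Dar and Mansour's analysis of Q-learning with polynomial learning rates), so there is no in-paper argument to compare your attempt against. That said, your sketch is a faithful high-level reconstruction of the proof in that reference: the decomposition of $\hat{Q}^l - Q^\star$ into a noiseless contraction iterate $Y^l$ and a martingale noise accumulation $Z^l$, the use of covering-time epochs to guarantee that every state--action pair is updated, the per-epoch contraction of the bias yielding the $\bigl(\tfrac{L\log_p(\delta)}{\beta}\log\tfrac{V_{\max}}{\epsilon}\bigr)^{1/(1-h)}$ term, and the Azuma--Hoeffding control of the noise yielding the $1/h$-power term are all exactly the ingredients of the original argument.

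Two caveats. First, the replacement of $L$ by $L\log_p(\delta)$ is not a union bound over retries as you describe it; it is a chaining argument via the Markov property: the probability that a window of length $L$ fails to cover all pairs is at most $p$, so the probability that $\log_p(\delta)$ consecutive windows all fail is at most $p^{\log_p(\delta)} = \delta$ (a product of conditional failure probabilities, after which one does union-bound over epochs). Second, and more substantively, the exponent $(1+3h)/h$ on $L\log_p(\delta)$ in the second summand is asserted rather than derived in your sketch; this is precisely where the nontrivial bookkeeping in \cite{Q-rate} lives (tracking how the $l^{-h}$ step size at the $l$-th \emph{visit} to a pair interacts with the worst-case spacing $L\log_p(\delta)$ between visits, both in the variance of the accumulated noise and in the number of iterations per effective visit). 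As a blind reconstruction of a cited result your proposal identifies the correct architecture, but it would not by itself certify the stated exponents.
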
 
Here the covering time $L$ of a state-action pair sequence is defined to be the number of steps needed to visit all state-action pairs starting from any arbitrary state-action pair. 
Also notice that the $l_{\infty}$ norm above is defined in an element-wise sense, \textit{i.e.}, for $M\in\mathbb{R}^{|\mathcal{S}|\times|\mathcal{A}|}$, we have $\|M\|_\infty=\max_{s\in\mathcal{S},a\in\mathcal{A}}|M(s,a)|$.

\begin{corollary}[Value-based guarantee of asynchronous Q-learning algorithm]\label{guarantee_q}
The asynchronous Q-learning algorithm with appropriate choices of step-sizes (\textit{cf}. \eqref{q-step-size}) satisfies the following value-based guarantee, where $C_{\mathcal{M}}^{(i)}(i=1,2,3)$ are constants depending on  $|\mathcal{S}|,|\mathcal{A}|,V_{\max},\beta$ and $h$, and we have:
\begin{eqnarray*}
\alpha_2^{(1)} = \alpha_4^{(1)} = \frac{1}{1-h},\,\,\alpha_1^{(1)} = \alpha_3^{(2)}=0,\\
\alpha_1^{(2)}=\frac{2}{h}, \,\,\alpha_4^{(2)}=\frac{2+3h}{h},\,\,\alpha_j^{(2)}=0 \,\,\mbox{for} \,\,j=2,3,\\
\alpha_1^{(3)}=\frac{2}{h}, \,\,\alpha_2^{(3)}=\frac{1}{h}, \,\,\alpha_4^{(3)}=\frac{1+3h}{h}, \,\,\alpha_3^{(3)}=0.
\end{eqnarray*}
In addition, assume the same assumptions as Theorem \ref{thm1_stat}, 
then for Algorithm \ref{AIQL_MFG} with asynchronous Q-learning method, with probability at least $1-2\delta$, $W_1(\mathcal{L}_{K_{\epsilon,\eta}},\mathcal{L}^\star)\leq C_0\epsilon$, where $K_{\epsilon,\eta}$ is defined as in Theorem \ref{conv_AIQL}.
And the total number of samples $T=\sum_{k=0}^{K_{\epsilon,\eta} -1}T_{\mathcal{M}_{\mathcal{L}_k}}(\delta_k,\epsilon_k)$ is bounded by
\[
T\leq O\left(K_{\epsilon,\eta}^{\frac{4}{h}+1}\left(\log\frac{K_{\epsilon,\eta}}{\delta}\right)^{3+\frac{2}{h}}+\left(\log \frac{K_{\epsilon,\eta}}{\delta}\right)^{\frac{2}{1-h}}\right).
\]
\end{corollary}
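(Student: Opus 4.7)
}

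The overall plan has two parts: first show that the asynchronous Q-learning sample complexity of Lemma \ref{asyn-Q-finite-bd} fits the value-based guarantee template of Definition \ref{value_based_gua} with the three claimed parameter tuples; then invoke Theorem \ref{conv_AIQL} to transfer that guarantee into the convergence and sample-complexity conclusion for GMF-V-Q driven by asynchronous Q-learning. The strategy mirrors the synchronous corollary proved earlier in the paper; the only genuine extra work is in decomposing the $\log_p(\delta)^{1+3h}$ factor and the triple-logarithm term that are specific to the asynchronous bound.

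For the value-based guarantee, I first convert $\log_p(\delta)=\log(1/\delta)/\log(1/p)$ (absorbing the $p$-dependent constant into $\tilde{C}_{\mathcal{M}}^{(i)}$), and I use $\log(V_{\max}/\epsilon)\le\log V_{\max}+\log(1/\epsilon)$ together with the elementary bound $(a+b)^{q}\le 2^{q-1}(a^{q}+b^{q})$ for $q\ge 1$. Applied to the first summand in Lemma \ref{asyn-Q-finite-bd}, this produces a leading term proportional to $\log(1/\epsilon)^{1/(1-h)}\log(1/\delta)^{1/(1-h)}$, which yields the first parameter tuple with $\alpha_1^{(1)}=\alpha_3^{(1)}=0$ and $\alpha_2^{(1)}=\alpha_4^{(1)}=1/(1-h)$; the leftover constant-in-$\epsilon$ piece is dominated by this term and absorbed. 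For the second summand, I split $\log\!\big(\tfrac{|\mathcal{S}||\mathcal{A}|V_{\max}}{\delta\beta\epsilon}\big)\le C_0+\log(1/\delta)+\log(1/\epsilon)$ and raise to the $1/h$ power via $(a+b+c)^{1/h}\le 3^{1/h-1}(a^{1/h}+b^{1/h}+c^{1/h})$. Multiplying through by the prefactor $(\log_p\delta)^{(1+3h)/h}\epsilon^{-2/h}$, the constant and $\log(1/\delta)^{1/h}$ pieces both combine with $(\log_p\delta)^{(1+3h)/h}$ to give $\log(1/\delta)^{(2+3h)/h}\epsilon^{-2/h}$ (matching the second parameter tuple $\alpha_1^{(2)}=2/h,\ \alpha_4^{(2)}=(2+3h)/h$), while the $\log(1/\epsilon)^{1/h}$ piece gives $\epsilon^{-2/h}\log(1/\epsilon)^{1/h}\log(1/\delta)^{(1+3h)/h}$ (matching the third tuple $\alpha_1^{(3)}=2/h,\ \alpha_2^{(3)}=1/h,\ \alpha_4^{(3)}=(1+3h)/h$). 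All remaining $L,\ \beta,\ V_{\max},\ |\mathcal{S}|,\ |\mathcal{A}|,\ \log(1/p)$ factors are constants (independent of $\epsilon,\delta$) and get folded into $\tilde{C}_{\mathcal{M}}^{(i)}$.

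With the guarantee in hand, the convergence and $T$-bound follow directly from Theorem \ref{conv_AIQL} by substitution. Computing $2\alpha_1^{(i)}+1$ and $\alpha_2^{(i)}+\alpha_4^{(i)}$ for each $i$ gives exponents $(1,\ 2/(1-h))$ for $i=1$ and $(4/h+1,\ (2+3h)/h)$ for both $i=2$ and $i=3$; using $(2+3h)/h=2/h+3$ and noting that $\alpha_3^{(i)}=0$ throughout so no explicit $(K_{\epsilon,\eta}/\delta)^{\alpha_3}$ factor appears, plugging into \eqref{Tbound-I} yields
\[
T\;\le\;O\!\left(K_{\epsilon,\eta}^{\,4/h+1}\Big(\log\tfrac{K_{\epsilon,\eta}}{\delta}\Big)^{3+2/h}\;+\;K_{\epsilon,\eta}\Big(\log\tfrac{K_{\epsilon,\eta}}{\delta}\Big)^{2/(1-h)}\right),
\]
matching the stated bound up to the (absorbed) leading polynomial factor $K_{\epsilon,\eta}$ on the second summand, as in the analogous synchronous corollary.

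The main obstacle is purely bookkeeping: carefully tracking how the product $(\log_p\delta)^{(1+3h)/h}$ combines with the split logarithm $(C_0+\log(1/\delta)+\log(1/\epsilon))^{1/h}$ so that the exponents on $\log(1/\delta)$ add to exactly $(2+3h)/h$ in the $\epsilon$-free piece and $(1+3h)/h$ in the $\log(1/\epsilon)^{1/h}$ piece. Once this decomposition is done correctly, and all MDP-dependent constants (including the covering time $L$, which enters through a fixed power) are swept into $\tilde{C}_{\mathcal{M}}^{(i)}$, both claims of the corollary are immediate consequences of the general Theorem \ref{conv_AIQL} applied to these three parameter tuples.
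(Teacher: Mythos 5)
Your proposal is correct and follows essentially the same route as the paper, which treats this corollary as an immediate consequence of Lemma~\ref{asyn-Q-finite-bd} (splitting the logarithms so the asynchronous complexity bound fits the template of Definition~\ref{value_based_gua} with the three stated tuples) combined with Theorem~\ref{conv_AIQL} and the substitution into \eqref{Tbound-I}. Your bookkeeping of the $(\log_p\delta)^{(1+3h)/h}$ factor and your remark that the stray $K_{\epsilon,\eta}$ factor on the $(\log(K_{\epsilon,\eta}/\delta))^{2/(1-h)}$ term is absorbed exactly mirror the paper's (implicit) treatment, including in the synchronous analogue.
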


}

\section{Weak simulator}\label{sec:weak}
In this section, we state the counterpart of Theorems \ref{conv_AIQL} and \ref{conv_AIQL-II} for Algorithms \ref{GMF-VW} and \ref{GMF-PW}, respectively. Notice that here the major difference is the additional $O(1/\sqrt{N})$ term.

{\color{black}
We first (re)state the relation between $\textbf{Emp}_{N}$ (which serves as a $1/N$-net) and action gaps:

\textit{For any positive integer $N$,
there exist a positive constant $\phi_N>0$, with the property that 
$\max_{a'\in\mathcal{A}}Q^\star_{\mathcal{L}}(s,a')-Q^\star_{\mathcal{L}}(s,a)\geq \phi_N$
 for any $\mathcal{L}\in\textbf{Emp}_N$,  $s\in\mathcal{S}$, and any $a\notin \text{argmax}_{a\in\mathcal{A}}Q^\star_{\mathcal{L}}(s,a)$. 
}

Now we are ready to state the convergence results.
}

\begin{theorem}[Convergence and complexity of GMF-VW]\label{conv_AIQL-VW}
Assume the same assumptions as Theorem \ref{thm1_stat}. Suppose that \textit{Alg} has a value-based guarantee with parameters 
\[
\{C_{\mathcal{M}}^{(i)},\alpha_1^{(i)},\alpha_2^{(i)},\alpha_3^{(i)},\alpha_4^{(i)}\}_{i=1}^m.
\] 

For any $\epsilon,~\delta>0$, set $\delta_k=\delta/ K_{\epsilon,\eta}$, $\epsilon_k=(k+1)^{-(1+\eta)}$ for some $\eta\in(0,1]$ $(k=0,\dots,K_{\epsilon,\eta}-1)$, and  $c'=c=\frac{\log(1/\epsilon)}{{\color{black}\phi_N}}$.\footnote{\color{black}Here we actually only need $c'=\Omega(\frac{\log(1/\epsilon)}{\phi_N})$ and $c=O(\frac{\log(1/\epsilon)}{\phi_N})$, and the corresponding result will differ only in some absolute constants.} Then with probability at least $1-4\delta$, $$W_1(\mathcal{L}_{K_{\epsilon,\eta}},\mathcal{L}^\star)\leq C\epsilon+\frac{\text{diam}(\mathcal{S})\text{diam}(\mathcal{A})|\mathcal{S}||\mathcal{A}|}{2(1-d)}\sqrt{\dfrac{1}{2N}\log(|\mathcal{S}||\mathcal{A}|K_{\epsilon,\eta}/\delta)}.$$   Here $K_{\epsilon,\eta}:=\left\lceil 2\max\left\{(\eta\epsilon/c)^{-1/\eta},\log_d(\epsilon/\max\{\text{diam}(\mathcal{S})\text{diam}(\mathcal{A}),c\})+1\right\}\right\rceil$ is the number of outer iterations, and the constant $C$ is independent of $\delta$, $\epsilon$ and $\eta$. 
 
    Moreover,  the total number of {\color{black}samples} $T=\sum_{k=0}^{K_{\epsilon,\eta} -1}T_{\mathcal{M}_{\mathcal{L}_k}}(\delta_k,\epsilon_k)$ is bounded by
\begin{equation}\label{Tbound-I-VW}
T\leq \sum_{i=1}^m\dfrac{2^{\alpha_2^{(i)}}}{2\alpha_1^{(i)}+1}C_{\mathcal{M}}^{(i)}K_{\epsilon,\eta}^{2\alpha_1^{(i)}+1}(K_{\epsilon,\eta}/\delta)^{\alpha_3^{(i)}}\left(\log(K_{\epsilon,\eta}/\delta)\right)^{\alpha_2^{(i)}+\alpha_4^{(i)}}.
\end{equation}
\end{theorem}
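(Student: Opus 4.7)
The plan is to adapt the proof of Theorem \ref{conv_AIQL} with two modifications: first, replace the $\epsilon$-net projection error by a concentration error between the empirical joint distribution $\mathcal{L}_{k+1}$ constructed from $N$ i.i.d.\ samples and the ``true'' one-step push-forward $\tilde{\mathcal{L}}_{k+1}=\Gamma_2(\pi_k,\mathcal{L}_k)$; second, use $\phi_N$ instead of $\phi(\epsilon)$ in controlling the softmax--\textbf{argmax-e} gap, since $\mathcal{L}_k$ automatically lies in $\textbf{Emp}_N$ by construction and no external projection is needed.

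First, I would reproduce the one-step recursion
\[
W_1(\tilde{\mathcal{L}}_{k+1},\mathcal{L}^\star)\leq dW_1(\mathcal{L}_k,\mathcal{L}^\star)+\frac{cd_2\mathrm{diam}(\mathcal{A})|\mathcal{A}|}{2}\epsilon_k+d_2\mathrm{diam}(\mathcal{A})|\mathcal{A}|e^{-c'\phi_N},
\]
which follows verbatim from the argument in Section~\ref{thm2-thm3} once we replace $\phi(\epsilon)$ with $\phi_N$ and use the value-based guarantee of \textit{Alg} (valid with probability at least $1-2\delta_k$ per iteration).

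The key new step will be to bound $W_1(\mathcal{L}_{k+1},\tilde{\mathcal{L}}_{k+1})$. Since $\mathcal{L}_{k+1}$ is the empirical measure of $N$ i.i.d.\ samples $(s_i',a_i')$ with $s_i'\sim \mu_k P(\cdot|\cdot,a_0^i,\mathcal{L}_k)\pi_k$ and $a_i'\sim \pi_k(s_i')$ (whose true joint law is exactly $\tilde{\mathcal{L}}_{k+1}$), I would apply a concentration inequality on $\mathcal{P}(\mathcal{S}\times\mathcal{A})$ equipped with the total variation distance. Using the inequality $W_1\le \mathrm{diam}(\mathcal{S}\times\mathcal{A})d_{TV}$ from \eqref{W1_tv} together with Hoeffding's inequality applied coordinatewise over the finite set $\mathcal{S}\times\mathcal{A}$ and a union bound, we obtain, with probability at least $1-\delta_k$,
\[
W_1(\mathcal{L}_{k+1},\tilde{\mathcal{L}}_{k+1})\leq \frac{\mathrm{diam}(\mathcal{S})\mathrm{diam}(\mathcal{A})|\mathcal{S}||\mathcal{A}|}{2}\sqrt{\frac{1}{2N}\log\frac{|\mathcal{S}||\mathcal{A}|}{\delta_k}}.
\]

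Combining these two bounds gives a recursion of the same form as in the proof of Theorem \ref{conv_AIQL} with an additional additive constant (in $k$) of order $\sqrt{\log(|\mathcal{S}||\mathcal{A}|/\delta_k)/N}$. Iterating the contraction with rate $d=d_1d_2+d_3<1$, telescoping as in \eqref{W1bound}, and plugging in $\delta_k=\delta/K_{\epsilon,\eta}$ with a union bound over $k=0,\ldots,K_{\epsilon,\eta}-1$ (now covering $2+1=3$ probabilistic events per iteration, hence total failure probability $\le 4\delta$), yields the stated bound with the extra $\frac{\mathrm{diam}(\mathcal{S})\mathrm{diam}(\mathcal{A})|\mathcal{S}||\mathcal{A}|}{2(1-d)}\sqrt{\tfrac{1}{2N}\log(|\mathcal{S}||\mathcal{A}|K_{\epsilon,\eta}/\delta)}$ term. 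The sample complexity bound \eqref{Tbound-I-VW} is identical to \eqref{Tbound-I} because \textit{Alg} is invoked with the same tolerances $(\epsilon_k,\delta_k)$ as before; the $N$ empirical samples per outer iteration contribute only an additive $NK_{\epsilon,\eta}$ term that is dominated by the \textit{Alg} cost. The main technical subtlety will be choosing the right concentration inequality and justifying that the $\ell_1$-Wasserstein concentration on a finite space can be controlled via $d_{TV}$ without picking up the empirical-Wasserstein curse-of-dimensionality rate; the discrete/finite structure of $\mathcal{S}\times\mathcal{A}$ (so that $\mathrm{diam}(\mathcal{S}\times\mathcal{A})<\infty$ and $|\mathcal{S}||\mathcal{A}|<\infty$) is what makes the parametric $1/\sqrt{N}$ rate achievable here.
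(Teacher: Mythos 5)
Your proposal is correct and follows essentially the same route as the paper: the paper's own argument splits $W_1(\mathcal{L}_{k+1},\mathcal{L}^\star)$ by the triangle inequality through $\Gamma_2(\pi_k,\mathcal{L}_k)=\mathbb{E}[\mathcal{L}_{k+1}]$, bounds the empirical deviation via coordinatewise Hoeffding plus a union bound over $\mathcal{S}\times\mathcal{A}$ and the outer iterations together with $W_1\le \mathrm{diam}\cdot d_{TV}$ (Lemma~\ref{Hoeffding_weak_simul}), replaces $\phi(\epsilon)$ by $\phi_N$ since $\mathbf{Emp}_N$ is the $1/N$-net, and otherwise reuses the contraction recursion of Theorem~\ref{conv_AIQL}, exactly as you describe. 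Your probability accounting and the constant inside the logarithm differ from the paper's only at the level of constants absorbed by the stated $1-4\delta$ guarantee.
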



\begin{theorem}[Convergence and complexity of GMF-PW]\label{conv_AIQL-II-PW}
Assume the same assumptions as in Theorem \ref{thm1_stat}. Suppose that \textit{Alg} has a policy-based guarantee with parameters 
\[
\{C_{\mathcal{M}}^{(i)},\alpha_1^{(i)},\alpha_2^{(i)},\alpha_3^{(i)},\alpha_4^{(i)}\}_{i=1}^m.
\] 

Then for any $\epsilon,~\delta>0$, set $\delta_k=\delta/ K_{\epsilon,\eta}$, $\epsilon_k=(k+1)^{-(1+\eta)}$ for some $\eta\in(0,1]$ $(k=0,\dots,K_{\epsilon,\eta}-1)$, and  $c'=c=\frac{\log(1/\epsilon)}{{\color{black}\phi_N}}$,\footnote{\color{black}Here again we actually only need $c'=\Omega(\frac{\log(1/\epsilon)}{\phi_N})$ and $c=O(\frac{\log(1/\epsilon)}{\phi_N})$, and the corresponding result will differ only in some absolute constants.} with probability at least $1-4\delta$, $$W_1(\mathcal{L}_{K_{\epsilon,\eta}},\mathcal{L}^\star)\leq C\epsilon+\frac{\text{diam}(\mathcal{S})\text{diam}(\mathcal{A})|\mathcal{S}||\mathcal{A}|}{2(1-d)}\sqrt{\dfrac{1}{2N}\log(|\mathcal{S}||\mathcal{A}|K_{\epsilon,\eta}/\delta)}.$$   Here $K_{\epsilon,\eta}:=\left\lceil 2\max\left\{(\eta\epsilon/c)^{-1/\eta},\log_d(\epsilon/\max\{\text{diam}(\mathcal{S})\text{diam}(\mathcal{A}),c\})+1\right\}\right\rceil$ is the number of outer iterations, and the constant $C$ is independent of $\delta$, $\epsilon$ and $\eta$. 

    Moreover,  the total number of {\color{black}samples} $T=\sum_{k=0}^{K_{\epsilon,\eta} -1}T_{\mathcal{M}_{\mathcal{L}_k}}(\delta_k,\epsilon_k)$ is bounded by
\begin{equation}\label{Tbound-II-PW}
T\leq \sum_{i=1}^{m+1}\dfrac{2^{\alpha_2^{(i)}}}{2\alpha_1^{(i)}+1}\tilde{C}_{\mathcal{M}}^{(i)}K_{\epsilon,\eta}^{2\alpha_1^{(i)}+1}(K_{\epsilon,\eta}/\delta)^{\alpha_3^{(i)}}\left(\log(K_{\epsilon,\eta}/\delta)\right)^{\alpha_2^{(i)}+\alpha_4^{(i)}},
\end{equation}
 where the parameters $\{\tilde{C}_{\mathcal{M}}^{(i)},\alpha_1^{(i)},\alpha_2^{(i)},\alpha_3^{(i)},\alpha_4^{(i)}\}_{i=1}^{m+1}$ are defined in Lemma \ref{conv_2to1}.
\end{theorem}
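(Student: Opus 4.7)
The plan is to combine the two technical ingredients already developed in the paper: the policy-to-value conversion of Lemma~\ref{conv_2to1} (which underlies Theorem~\ref{conv_AIQL-II}) and the weak-simulator analysis behind Theorem~\ref{conv_AIQL-VW}. The guiding observation is that, just as in Theorem~\ref{conv_AIQL-II}, applying the TD updates \eqref{TD_update} to the policy $\hat{\pi}_k$ returned by \emph{Alg} yields an approximate Q-function $\tilde{Q}_{\hat{\pi}_k}^{l_k}$ that, with probability at least $1-\delta_k$, satisfies $\|\tilde{Q}_{\hat{\pi}_k}^{l_k}-Q_{\mathcal{L}_k}^\star\|_\infty\leq\epsilon_k$. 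Thus GMF-PW can be analyzed as if it were GMF-VW fed with a value-based subroutine whose parameters are those given by Lemma~\ref{conv_2to1}.

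First I would replay the contraction recursion in the proof of Theorem~\ref{conv_AIQL}, but with $\hat Q_k^\star$ replaced by $\tilde Q_{\hat\pi_k}^{l_k}$ and with $\Gamma_2(\pi_k,\mathcal{L}_k)$ approximated by its empirical counterpart $\mathcal{L}_{k+1}$. Using Assumptions~\ref{policy_assumption_stat}--\ref{population_assumption_stat}, the Lipschitz bound of Lemma~\ref{softmax-lip}, and the softmax/argmax-e gap bound of Lemma~\ref{soft-arg-diff} (applied with the uniform action gap $\phi_N$ available on $\textbf{Emp}_N$), I obtain
\[
W_1(\tilde{\mathcal{L}}_{k+1},\mathcal{L}^\star)\leq d\,W_1(\mathcal{L}_k,\mathcal{L}^\star)+\tfrac{c d_2\,\mathrm{diam}(\mathcal{A})|\mathcal{A}|}{2}\epsilon_k+d_2\,\mathrm{diam}(\mathcal{A})|\mathcal{A}|\,e^{-c'\phi_N},
\]
where $d:=d_1d_2+d_3<1$ and $\tilde{\mathcal{L}}_{k+1}:=\Gamma_2(\pi_k,\mathcal{L}_k)$. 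This is identical in structure to the recursion used to prove Theorems~\ref{conv_AIQL} and~\ref{conv_AIQL-VW}.

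Second, I would handle the weak-simulator error $\mathcal{L}_{k+1}-\tilde{\mathcal{L}}_{k+1}$. Since $N\mathcal{L}_{k+1}(s,a)$ is a sum of $N$ i.i.d.\ Bernoulli indicators with mean $\tilde{\mathcal{L}}_{k+1}(s,a)$, Hoeffding's inequality together with a union bound over the $|\mathcal{S}||\mathcal{A}|$ state-action pairs and the $K_{\epsilon,\eta}$ outer iterations gives, with probability at least $1-2\delta$,
\[
\max_{0\leq k<K_{\epsilon,\eta}}\|\mathcal{L}_{k+1}-\tilde{\mathcal{L}}_{k+1}\|_\infty\leq \sqrt{\tfrac{1}{2N}\log(|\mathcal{S}||\mathcal{A}|K_{\epsilon,\eta}/\delta)}.
\]
Converting to $W_1$ via \eqref{W1_tv} contributes $\tfrac{\mathrm{diam}(\mathcal{S})\mathrm{diam}(\mathcal{A})|\mathcal{S}||\mathcal{A}|}{2}\sqrt{\tfrac{1}{2N}\log(|\mathcal{S}||\mathcal{A}|K_{\epsilon,\eta}/\delta)}$ per iteration, which after geometric summation against $d^{K-k}$ yields the stated $\tfrac{1}{1-d}$-inflated term. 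No additional projection is required since $\mathcal{L}_{k+1}\in\textbf{Emp}_N$ automatically. A union bound over the two ``learning'' failure events per iteration (contributing probability $\sum_k\delta_k=\delta$ from the policy-based step and another $\delta$ from the TD step) and the two Hoeffding events (contributing $2\delta$) yields the overall confidence $1-4\delta$. Choosing $c\geq c'\geq\log(1/\epsilon)/\phi_N$ makes $e^{-c'\phi_N}\leq\epsilon$, and the standard telescoping/choice of $K_{\epsilon,\eta}$ from the proof of Theorem~\ref{conv_AIQL} then gives $W_1(\mathcal{L}_{K_{\epsilon,\eta}},\mathcal{L}^\star)\leq C\epsilon+\tfrac{\mathrm{diam}(\mathcal{S})\mathrm{diam}(\mathcal{A})|\mathcal{S}||\mathcal{A}|}{2(1-d)}\sqrt{\tfrac{1}{2N}\log(|\mathcal{S}||\mathcal{A}|K_{\epsilon,\eta}/\delta)}$.

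Finally, the sample complexity \eqref{Tbound-II-PW} follows by summing $T_{\mathcal{M}_{\mathcal{L}_k}}(\epsilon_k,\delta_k/2)+|\mathcal{S}||\mathcal{A}|l_k$ over $k=0,\dots,K_{\epsilon,\eta}-1$ exactly as in \eqref{Tbound_APP}, using the induced value-based parameters from Lemma~\ref{conv_2to1}. The main obstacle will be the careful bookkeeping across the three independent sources of randomness -- the policy-based subroutine, the TD updates, and the Monte Carlo sampling of $\mathcal{L}_{k+1}$ -- ensuring they remain independent across the $K_{\epsilon,\eta}$ outer iterations so that the final union bound gives the claimed $1-4\delta$ confidence. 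The rest is essentially the same telescoping argument used twice already, with the only substantive new ingredient being the Hoeffding step that introduces the $O(1/\sqrt{N})$ term.
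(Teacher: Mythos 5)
Your proposal is correct and follows essentially the same route as the paper: the paper also splits $W_1(\mathcal{L}_{k+1},\mathcal{L}^\star)$ into the Monte Carlo term $W_1(\mathcal{L}_{k+1},\Gamma_2(\pi_k,\mathcal{L}_k))$, controlled by the Hoeffding bound of Lemma~\ref{Hoeffding_weak_simul} with a union bound over state-action pairs and outer iterations, plus the term $W_1(\Gamma_2(\pi_k,\mathcal{L}_k),\Gamma_2(\Gamma_1(\mathcal{L}^\star),\mathcal{L}^\star))$, which is handled exactly as in Theorems~\ref{conv_AIQL} and~\ref{conv_AIQL-II} via Lemma~\ref{conv_2to1}, with $\textbf{Emp}_N$ playing the role of the $1/N$-net and $\phi_N$ the action gap. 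Your write-up is in fact more explicit than the paper's sketch (which omits these details), and the minor bookkeeping items you flag (probability allocation across the subroutine, TD, and sampling events) are exactly what produces the stated $1-4\delta$ confidence.
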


The key to the proof of Theorems \ref{conv_AIQL-VW} and \ref{conv_AIQL-II-PW} is the following lemma, which follows from the Hoeffding inequality.
\begin{lemma}\label{Hoeffding_weak_simul}
The expectation $\mathbb{E}\left[{\mathcal{L}}_{k+1}(s',a')\right]=\pi_k(s',a')\sum_{s\in\mathcal{S}}\sum_{a\in\mathcal{A}}\mu_k(s)P(s'|s,a,\mathcal{L}_k)\pi_k(s,a)=\Gamma_2(\pi_k,\mathcal{L}_k)$. In addition, we have that for any $t> 0$, $s'\in\mathcal{S}$ and $a'\in\mathcal{A}$,  
\begin{equation}\label{conc_Lk}
\mathbb{P}\left(\left|{\mathcal{L}}_{k+1}(s',a')-\mathbb{E}\left[{\mathcal{L}}_{k+1}(s',a')\right]\right|\geq t\right)\leq 2\exp\left(-2Nt^2\right).
\end{equation}
\end{lemma}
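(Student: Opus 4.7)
The plan is to recognize that, by the construction in Algorithm~\ref{GMF-VW}, the random triples $(s_i,s_i',a_i')$ for $i=1,\dots,N$ are i.i.d., so $\mathcal{L}_{k+1}(s',a')$ is a normalized sum of i.i.d.\ Bernoulli indicators. The two claims then follow from an unconditional-expectation calculation and a direct application of Hoeffding's inequality, respectively. Conditioning on the (sigma-algebra generated by the) current mean-field iterate $\mathcal{L}_k$ and the already-computed policy $\pi_k$ is implicit throughout; all probabilities below should be read as conditional on these.

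First, for the expectation, fix $s'\in\mathcal{S}$ and $a'\in\mathcal{A}$ and consider a single index $i$. By the tower property,
\[
\mathbb{E}\!\left[\mathbf{I}_{s_i'=s',a_i'=a'}\right]
= \mathbb{E}\!\left[\,\mathbb{E}\!\left[\mathbf{I}_{a_i'=a'}\mid s_i'\right]\mathbf{I}_{s_i'=s'}\right]
= \pi_k(a'\mid s')\,\mathbb{P}(s_i'=s'),
\]
using the fact that $a_i'\sim\pi_k(s_i')$ depends only on $s_i'$. Applying the law of total probability over $s_i\sim\mu_k$ and over the (implicit) action drawn by the weak simulator $\mathcal{G}_W(s_i,\pi_k,\mathcal{L}_k)$, which corresponds to sampling from $P(\cdot\mid s_i,\pi_k(s_i),\mathcal{L}_k)=\sum_{a}\pi_k(a\mid s_i)P(\cdot\mid s_i,a,\mathcal{L}_k)$, yields
\[
\mathbb{P}(s_i'=s')=\sum_{s\in\mathcal{S}}\sum_{a\in\mathcal{A}}\mu_k(s)\,\pi_k(a\mid s)\,P(s'\mid s,a,\mathcal{L}_k).
\]
Multiplying by $\pi_k(a'\mid s')$ and averaging $\frac{1}{N}\sum_{i=1}^N$ (each summand being identical in distribution) gives precisely the stated formula, which by the definition of $\Gamma_2$ in Step~B of Section~\ref{MFG_basic} equals $\Gamma_2(\pi_k,\mathcal{L}_k)(s',a')$.

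Second, for the concentration bound, set $X_i:=\mathbf{I}_{s_i'=s',a_i'=a'}\in\{0,1\}$. The $X_i$ are i.i.d.\ because the $s_i$ are drawn i.i.d.\ from $\mu_k$, the transitions $s_i'\mid s_i$ are produced by independent calls to $\mathcal{G}_W$, and the $a_i'\mid s_i'$ are independent draws from $\pi_k(s_i')$. Since each $X_i$ takes values in $[0,1]$ and $\mathcal{L}_{k+1}(s',a')=\frac{1}{N}\sum_{i=1}^{N}X_i$, Hoeffding's inequality immediately yields
\[
\mathbb{P}\!\left(\left|\mathcal{L}_{k+1}(s',a')-\mathbb{E}\!\left[\mathcal{L}_{k+1}(s',a')\right]\right|\geq t\right)
\leq 2\exp(-2Nt^2),
\]
which is \eqref{conc_Lk}. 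No step is really an obstacle; the only care needed is in bookkeeping the two layers of simulator randomness (the internal action sample inside $\mathcal{G}_W$, and the subsequent $a_i'$) when computing the marginal distribution of $(s_i',a_i')$.
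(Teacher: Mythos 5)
Your proposal is correct and takes essentially the same route as the paper: the paper offers no separate proof of Lemma~\ref{Hoeffding_weak_simul}, asserting only that it ``follows from the Hoeffding inequality,'' and your argument fills in exactly that intended computation --- the tower-property/law-of-total-probability calculation identifying $\mathbb{E}[\mathcal{L}_{k+1}(s',a')]$ with $\Gamma_2(\pi_k,\mathcal{L}_k)(s',a')$, followed by Hoeffding's inequality applied to the (conditionally) i.i.d.\ indicators generated in Algorithm~\ref{GMF-VW}.
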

The above lemma essentially states that the iterates $\mathcal{L}_{k+1}$ of Algorithms \ref{GMF-VW} and \ref{GMF-PW}  are very close to the  ``$\tilde{\mathcal{L}}_{k+1}$'' obtained from the (strong) simulator $\mathcal{G}(s,\pi_k,\mathcal{L}_k)$ with $s\sim \mu_k$ following line 5 in Algorithm \ref{GMF-V} and line 6 in Algorithm \ref{GMF-P}. This bridges the gap between the weak and the strong simulators. In particular, by noticing that 
\[
\begin{split}
 W_1({\mathcal{L}}_{k+1},\mathcal{L}^\star)&\leq W_1(\mathcal{L}_{k+1},\Gamma_2(\pi_k,\mathcal{L}_k))+W_1(\Gamma_2(\pi_k,\mathcal{L}_k),\Gamma_2(\Gamma_1(\mathcal{L}^\star),\mathcal{L}^\star))\\
 &=W_1(\mathcal{L}_{k+1},\mathbb{E}\left[{\mathcal{L}}_{k+1}\right])+W_1(\Gamma_2(\pi_k,\mathcal{L}_k),\Gamma_2(\Gamma_1(\mathcal{L}^\star),\mathcal{L}^\star))\\
 &\leq \frac{\text{diam}(\mathcal{S})\text{diam}(\mathcal{A})|\mathcal{S}||\mathcal{A}|}{2}\left\|\mathcal{L}_{k+1}-\mathbb{E}\left[{\mathcal{L}}_{k+1}\right]\right\|_{\infty}+W_1(\Gamma_2(\pi_k,\mathcal{L}_k),\Gamma_2(\Gamma_1(\mathcal{L}^\star),\mathcal{L}^\star)),
 \end{split}
\]
one can bound the first term with high probability via \eqref{conc_Lk}. The second term is then bounded in exactly the same way as the proofs for Theorems \ref{conv_AIQL} and \ref{conv_AIQL-II}, and hence we omit the details. 
 \end{APPENDICES}





\end{document}